\definecolor{linkblue}{rgb}{0.1,0.1,0.8}
\newcommand{\assign}{\leftarrow}
\newtheorem{theorem}{Theorem}
\newtheorem{lemma}[theorem]{Lemma}
\renewcommand{\labelenumi}{(\alph{enumi})}
\renewcommand\theenumi\labelenumi
\newcommand{\N}{\mathbb{N}}
\newcommand{\R}{\mathbb{R}}
\newcommand{\eps}{\varepsilon}
\newcommand{\pmin}{\rho_{\min}}
\newcommand{\pmax}{\rho_{\max}}
\DeclareMathOperator{\mut}{flip}
\DeclareMathOperator{\Geom}{Geom}
\DeclareMathOperator{\rand}{rand}
\DeclareMathOperator{\Bin}{Bin}
\newcommand{\onemax}{\textsc{OneMax}\xspace}
\newcommand{\leadingones}{\textsc{LeadingOnes}\xspace}
\newcommand{\LO}{\textsc{Lo}\xspace}
\newcommand{\oea}{${(1 + 1)}$~EA\xspace}
\newcommand{\ga}{${(1 +(\lambda,\lambda))}$~GA\xspace}
\newcommand{\rls}{\textsc{RLS}\xspace}
\newcommand{\oeares}{${(1 + 1)}$~EA$_{>0}$\xspace}
\newcommand{\oeaopt}{${(1 + 1)}$~EA$_{\text{opt}}$\xspace}
\newcommand{\oearesopt}{${(1 + 1)}$~EA$_{>0,\text{opt}}$\xspace}
\newcommand{\presopt}{p_{>0,\text{opt}}}
\title{Self-Adjusting Mutation Rates\\ with Provably Optimal Success Rules\footnote{An extended abstract announcing the results presented in this work has been communicated at GECCO'19~\cite{DoerrDL19}.}}
\author{Benjamin Doerr$^1$, Carola Doerr$^2$, Johannes Lengler$^3$}
\date{{\footnotesize{
$^1$\'Ecole Polytechnique, CNRS, LIX - UMR 7161, 
91120 Palaiseau, France\\
$^2$Sorbonne Universit\'e, CNRS, LIP6, 75005 Paris, France\\
$^3$Department of Computer Science, ETH Z{\"u}rich, Z{\"u}rich, Switzerland}}\\[1.5ex]
\today
}
\begin{document}
\maketitle 

{\sloppy
\begin{abstract}
The one-fifth success rule is one of the best-known and most widely accepted techniques to control the parameters of evolutionary algorithms. While it is often applied in the literal sense, a common interpretation sees the one-fifth success rule as a family of success-based updated rules that are determined by an update strength $F$ and a success rate. We analyze in this work how the performance of the (1+1) Evolutionary Algorithm on LeadingOnes depends on these two hyper-parameters. Our main result shows that the best performance is obtained for small update strengths $F=1+o(1)$ and success rate $1/e$. We also prove that the running time obtained by this parameter setting is, apart from lower order terms, the same that is achieved with the best fitness-dependent mutation rate. We show similar results for the resampling variant of the (1+1) Evolutionary Algorithm, which enforces to flip at least one bit per iteration. 
\end{abstract}


\section{Introduction}
\label{sec:intro}

One of the key challenges in applying evolutionary algorithms (EAs) in practice is in choosing suitable values for the population sizes, the mutation rates, the crossover probabilities, the selective pressure, and possibly other parameters that determine the exact structure of the heuristic. What complicates the situation is that the optimal values of these parameters may change during the optimization process, so that an ideal parameter setting requires to find not only good initial values, but also suitable update rules that adjust the parameters during the run. \emph{Parameter control}~\cite{EibenHM99} is the umbrella term under which such non-static parameter settings are studied. 

Parameter control is indispensable in continuous optimization, where the step size needs to be adjusted in order to obtain good convergence to solutions of high quality. In this context, non-static parameter choices are therefore standard since the early seventies. In discrete optimization, however, parameter control has received much less attention, as commented in the recent surveys~\cite{KarafotiasHE15,AletiM16}. This situation has changed substantially in the last decade, both thanks to considerable advances in reinforcement learning, which could be successfully leveraged to control algorithmic parameters~\cite{DaCostaGECCO08,FialhoCSS10,KarafotiasEH14}, but also thanks to a number of theoretical results rigorously quantifying the advantages of dynamic parameter settings over static ones, see~\cite{DoerrD20chapter} for a summary of known results.

In continuous optimization, one of the early and still widely used parameter update rule is the \emph{one-fifth success rule}, which was independently designed in~\cite{Rechenberg,SchumerS68,Devroye72}. The one-fifth success rule is derived from the idea that it is desirable to maintain a success rate of around $20\%$, measured by the frequency of offspring having at least the same fitness than the current-best individual. Theoretical justification for this rule was given by Rechenberg, who showed that such a success rate is optimal for controlling the step size of the (1+1) Evolution Strategy (ES) optimizing the sphere function~\cite{Rechenberg}. Based on this finding, several parameter update rules were designed that decrease the step size when the observed success rate is smaller than this target rate, and which increase it for success rates larger than 20\%. 

An interpretation of the one-fifth success rule which is suitable also for parameter control in discrete domains was provided in~\cite{KernMHBOK04}. Kern \emph{et al.} propose to decrease the step size $\sigma$ to $\sigma/F$ after each successful iteration, and to increase it to $\sigma F^{1/4}$ otherwise. 
They propose to consider an iteration successful if the offspring $y$ created in this iteration is at least as good as its parent $x$, i.e., if $f(y) \le f(x)$ in the context of minimizing the function $f$. With this rule, the step size remains constant when one out of five iterations is successful, since in this case after the fifth iteration $\sigma$ has been replaced by $\sigma(F^{1/4})^4/F=\sigma$. This version of the one-fifth success rule, typically using constant update strengths $F>1$, was shown to work efficiently, e.g., in~\cite{Auger09}. In~\cite{DoerrD18ga} it was proven to yield asymptotically optimal linear expected optimization time when applied to the \ga optimizing \onemax. It is also shown in~\cite{DoerrD18ga} that this linear expected running time is optimal up to constant factors, and that no static parameter choice can achieve this efficiency (i.e., all static variants of the $(1+(\lambda,\lambda))$~Genetic Algorithm (GA) require super-linear expected optimization times).

Other success-based multiplicative update rules had previously been studied in the theory of evolutionary algorithms (EAs). For example, L\"assig and Sudholt~\cite{LassigS11} showed that, for the four classic benchmark problems \onemax, \leadingones, \textsc{Jump}, and unimodal functions, the expected number of generations needed to find an optimal solution can be significantly reduced when multiplying the offspring population size $\lambda$ by two after every unsuccessful iteration of the $(1+\lambda)$~EA and when reducing $\lambda$ to $\lambda/2$ otherwise. Similar rules, which also take into account the number of improving offspring, were empirically shown to be efficient in~\cite{JansenJW05} (for $(1+\lambda)$~EA operating on discrete problems) and in~\cite{HansenGO95} for the $(1,\lambda)$~ES optimizing the continuous hyper-plane and hyper-sphere problems. Recently, Doerr and Wagner~\cite{DoerrW18} showed  that success-based multiplicative updates are very efficient for controlling the mutation rate of the \oeares, the resampling \oea variant proposed in~\cite{CarvalhoD17} which enforces to flip at least one bit per each iteration. More precisely, they analyze the average optimization times of the $(1+1)$~EA$_{>0}(A,b)$ algorithm, which increases the mutation rate $p$ by a factor of $A>1$ when the offspring $y$ satisfies $f(y) \ge f(x)$ (i.e., when it replaces its parent $x$) and which decreases $p$ to $bp$, $0<b<1$ otherwise. Their experimental results show that this algorithm has a very good empirical performance on \onemax and \leadingones for broad ranges of update strengths $A$ and $b$. 

\subsection{Our Results}

In this work, we complement the empirical study~\cite{DoerrW18} and rigorously prove that, for suitably chosen hyper-parameters $A$ and $b$, the \oea using this multiplicative update scheme attains the asymptotically optimal expected optimization time across all fitness-dependent mutation rates on the \leadingones function $\LO:\{0,1\}^n \to [0..n]=\{0\}\cup \N_{\le n}, x \mapsto \max\{i \in [0..n] \mid \forall j \le i: x_j=1\}$, where in this work we refer to an optimization time as ``\emph{asymptotically optimal}'' when it is optimal up to lower order terms among all dynamic choices of the mutation rate, and we use optimization time and running time interchangeably.  
For the \oeares we also rigorously prove a bound on the expected optimization time on \leadingones, which we show by numerical evaluations to coincide almost perfectly with the performance achieved by the  \oeares with optimal fitness-dependent mutation rates. 

Following the above-described suggestion made in~\cite{KernMHBOK04}, and adapting to the common notation, we formulate our theoretical results using the parametrization $A=F^s$ and $b=1/F$, where $F>1$ is referred to as the \emph{update strength} and $s$ is referred to as the \emph{success ratio}. As seen above, a success \emph{ratio} of $4$ corresponds to a one-fifth success \emph{rule}. 

We prove that for the \oea the optimal success ratio is $e-1$ (i.e., a $1/e$ success rule). More precisely, we show that the expected running time of the self-adjusting \oea with constant success ratio $s>0$ and small update strength $F=1+o(1)$ on \leadingones is at most $\frac{s+1}{4\ln(s+1)}n^2 + o(n^2)$. 
For $F=1+o(1)$ and $s=e-1$, the expected running time of the self-adjusting \oea is hence $\approx 0.68 n^2 +o(n^2)$, which is asymptotically equivalent to the one of the \oea with optimal fitness-dependent mutation rate~\cite{BottcherDN10}.\footnote{In the preliminary version~\cite{DoerrDL19} of this work, we made the stronger claim that our algorithm is asymptotically optimal among all dynamic choices of the mutation rate of the \oea. While we still believe this to be true, we note that this claim is more substantial than we originally thought. The main difficulty is that it is less obvious than thought whether the result of~\cite{BottcherDN10} extends to all dynamic choices of the mutation rate, in other words, that no asymptotically non-negligible performance gains can be made from letting the mutation rate not only depend on the fitness of the parent, but on the whole history of the process. We strongly believe that such a statement can be shown with the theory of Markov decision processes. Since this would be a deep mathematical analysis not focused on the center of this work (the analysis of multiplicative update rules), we prefer to not conduct this proof and rather reduce our original optimality claim. We are thankful for a comment of an anonymous reviewer that led to the discovery of this gap in our previous optimality statement.}

A key ingredient in our proof is a lemma proving that the mutation rate used by the \oea with self-adjusting mutation rates is, at all times during the optimization process, very close to the \emph{target mutation rate} $\rho^*(\LO(x),s) \approx \ln(s+1)/\LO(x)$, which we define as the unique mutation rate that leads to success probability $1/(s+1)$. 

We also extend our findings to the \oeares considered in~\cite{DoerrW18}. This resampling \oea variant is technically more challenging to analyze, since the probabilities of the conditional standard bit mutation operator (which enforces to flip at least one bit)  are more complex to handle, but also because the concept of target mutation rates ceases to exist for fitness levels $\ell \geq \tfrac{s}{s+1} n$: it is impossible to achieve success rates of $1/(s+1)$ or higher for such values of $\ell$ without accepting duplicates as offspring. In this regime, the mutation rate approaches zero, and the \oeares resembles Randomized Local Search (RLS), which flips in each iteration exactly one randomly chosen bit. This behavior is desirable since the optimal number of bits to flip in solutions $x$ with $\LO(x)\ge n/2$ is indeed equal to one.
 
In contrast to the unconditional \oea, our bound for the expected running time of the self-adjusting \oeares does not seem to have a straightforward closed-form expression. A numerical evaluation for dimensions up to $n=10\,000$ shows that the best running time is achieved for success ratio $s \approx 1.285$. With this choice (and using again $F=1+o(1)$), the performance of the self-adjusting  \oeares is almost indistinguishable from that of the \oearesopt, i.e., the \oeares using the optimal fitness-dependent mutation rate. Both algorithms achieve an expected running time for $n=10\,000$ which is around $0.404n^2$. 

For both algorithms, the self-adjusting \oea and the self-adjusting \oeares, we do not only bound the expected optimization time but we also prove \emph{stochastic domination bounds}, which provide much more information about the running time~\cite{Doerr19tcs}. We only show upper bounds in this work, but we strongly believe that our bounds are tight, since for the \oea we obtain an asymptotically optimal running time, and for the self-adjusting \oeares the numerical bounds are almost indistinguishable from those of the \oearesopt.

Finally, we take into account suggestions made in~\cite{CarvalhoD17,BuzdalovDDV20} and briefly comment on the fixed-target optimization times, i.e., the number of evaluations needed by the different algorithms to sample for the first time a solution that satisfies a minimal quality requirement. Since our proofs are based on fitness-level arguments, such statements can be obtained rather straightforwardly. 
 
\subsection{Related Work}

In~\cite{LissovoiOW17,LissovoiOW19}, variants of RLS flipping a deterministic, but dynamic number of bits were analyzed on \leadingones. These schemes, which take inspiration from the literature on hyper-heuristics, differ from our rather simple multiplicative parameter updates in that they require additional book-keeping and that they consider a small discrete set of possible parameter values only. The approach is taken further in~\cite{DoerrLOW18}, where a hyper-parameter from~\cite{LissovoiOW17} is dynamically adapted by a rule inspired by the one-fifth success rule. This algorithm is analyzed in the case that the algorithm may choose between flipping either one or two bits, and it is proven that it achieves an asymptotically optimal expected running time on \leadingones among all algorithms which are restricted to these two choices. However, the resulting algorithm is considerably more complicated than our simple scheme. Moreover, while the algorithm in~\cite{DoerrLOW18} is asymptotically optimal among all algorithms which---based only on the fitness of the current best solution---use either one-bit flips or two-bit flips and while the ones from~\cite{LissovoiOW19} are asymptotically optimal among all fitness-dependent algorithms choosing between 1 and a constant number $k$ of bits to flip, we prove asymptotic optimality among all algorithms with fitness-dependent standard bit mutation for which the number of flipped bits is random and can attain any value between 0 (1 in the case of the resampling \oea) and~$n$. 

\section{The Self-Adjusting (1+1) EA}
\label{sec:oeadef}

We study the optimization time of the \oea with self-adjusting mutation rates, which is summarized in Algorithm~\ref{alg:adaoea}. The self-adjusting \oea starts the optimization process with an initial mutation rate $\rho=\rho_{0}$ and a random initial solution $x \in \{0,1\}^n$. In every iteration, one new solution candidate $y\in \{0,1\}^n$ is created from the current-best solution $x$ through standard bit mutation with mutation rate $\rho$, i.e., the \emph{offspring} $y$ is created from $x$ by flipping each bit, independently of all other decisions, with probability $\rho$. If $y$ is at least as good as its parent $x$, i.e., if $f(y)\ge f(x)$, $x$ is replaced by $y$ and the mutation rate $\rho$ is increased to $\min\{F^{s}\rho,\rho_{\max}\}$, where $F>1$ and $s>0$ are two constants that remain fixed during the entire execution of the algorithm and $0<\rho_{\max} \le 1$ is an upper bound for the range of admissible mutation rates. If, on the other hand, $y$ is strictly worse than its parent $x$, i.e., if $f(y)<f(x)$, then $y$ is discarded and the mutation rate is decreased to $\max\{ \rho/F,\rho_{\min} \}$, where $0<\rho_{\min}$ is the smallest admissible mutation rate. The algorithm continues until some stopping criterion is met. Since in our theoretical analysis we know the optimal function value $f_{\max}$, we use as stopping criterion that $f(x)=f_{\max}$.

\textbf{Standard Bit Mutation.} Since we will also consider the \oeares, which requires that each offspring $y$ differs from its parent $x$ in at least one bit, we use in lines~\ref{line:k} and~\ref{line:mut} the equivalent description of standard bit mutation, in which we first sample the number $k$ of bits to flip and then apply the mutation operator $\mut_k$, which flips $k$ pairwise different, uniformly chosen bits in $x$.

\textbf{Success Ratio vs. Success Rule.} We recall from the introduction that we call $F$ the \emph{update strength} of the self-adjustment and $s$ the \emph{success ratio}. The success ratio $s=4$ is particularly common in evolutionary computation~\cite{KernMHBOK04,Auger09,DoerrD18ga}. With this choice $s=4$ the parameter update mechanism is well known as the \emph{one-fifth success rule}, in the interpretation suggested in~\cite{KernMHBOK04}: if one out of five iterations is successful, the parameter $\rho$ stays constant (since it will have been updated to $\rho (F^{1/4})^4/F=\rho$). Note that a success \emph{ratio} of $s$ corresponds to a one-$(s+1)$-th success \emph{rule}. We choose to work with success ratios instead of success rates for notational convenience.

\textbf{Hyper-Parameters.} Altogether, the self-adjusting \oea has five \emph{hyper-parameters}: the update strength $F$, the success rate $s$, the initial mutation rate $\rho_{0}$, and the minimal and maximal mutation rates $\rho_{\min}$ and $\rho_{\max}$, respectively. It is not difficult to verify that for update strengths $F=1+\eps$, $\eps = \Omega(1)$, the mutation rate deviates, in at least a constant fraction of all iterations, from the optimal one by at least a constant factor, which results in a constant factor overhead in the running time. We therefore consider $F=1+o(1)$ only. Apart from this, we only require that $\rho_{\min}=o(n^{-1}) \cap \omega(n^{-c})$ for an arbitrary constant $c$ and, for mathematical simplicity, we set $\rho_{\max} = 1$. Note though that for practical applications of the algorithm, we suggest to bound $\rho_{\min} \ge 1/n^2$ and $\rho_{\max}\leq 1/2$. 

With these specifications, we are left with the success ratio $s$. Our main interest is in bounding the running time of the self-adjusting \oea in dependence of this parameter. 
 
Finally, we note that Algorithm~\ref{alg:adaoea} generalized the classic \oea with static mutation rate $\rho$, which we obtain by setting $F=1$ and $\rho_{0}=\rho$. 

 \begin{algorithm2e}[t]%
	Sample $x \in \{0,1\}^{n}$ uniformly at random and compute $f(x)$\;
	Set $\rho=\rho_{0}$\;
	\While{\text{stopping criterion not met}}{
		\label{line:k}Sample $k$ from $\Bin(n,\rho)$\;
		\label{line:mut}$y \assign \mut_{k}(x)$\;
		evaluate $f(y)$\;
		\eIf{$f(y)\geq f(x)$}
			{$x \assign y$ and $\rho \assign \min\{ F^s \rho, \rho_{\max}\}$}
			{$\rho \assign \max\{\rho/F, \rho_{\min}\}$}	
}
\caption{The self-adjusting \oea with update strength $F$, success ratio $s$, initial mutation rate $\rho_{0}$, minimal mutation rate $\rho_{\min}$, and maximal mutation rate $\rho_{\max}$. The formulation assumes maximization of the function $f:\{0,1\}^n \rightarrow \R$ as objective.}
\label{alg:adaoea}
\end{algorithm2e}

\textbf{Improvement vs. Success Probability.} We study in this work the performance of the self-adjusting \oea on the \leadingones function 
$$\LO:\{0,1\}^n \to \R, x \mapsto \max\{ i \in [0..n] \mid \forall j \le i: x_j=1\},$$ 
which counts the number of initial ones in a bit string. By the unbiasedness of the algorithms studied in this work in the sense of~\cite{LehreW12} all our results also hold for perturbed versions of LeadingOnes that are obtained by any transformation of the search space that preserves Hamming distances. 
We build our analysis on results presented in~\cite{BottcherDN10,Doerr19tcs}, which reduce the study of the overall running time to analyzing the time spend on each fitness level. More precisely, for a random solution $x \in \{0,1\}^n$ with $f(x)=:\ell$ we study the time $T_\ell$ that it takes the self-adjusting \oea to reach for the first time a solution $y$ of fitness $f(y)>\ell$. We call the probability to create such a $y$ the \emph{improvement probability} $p_{\text{imp}}(\rho,\ell)$ of mutation rate $\rho$ on level $\ell$. For fixed mutation rate $\rho$, this improvement probability is easily seen to equal $(1-\rho)^{\ell}\rho$, since the first $\ell$ bits should not flip, the $(\ell+1)$-st should, and it does not matter what happens in the \emph{tail} of the string.  

Another important probability is the \emph{success probability} $p_{\text{suc}}(\rho,\ell):=(1-\rho)^{\ell}$ of creating an offspring $y$ that is \emph{at least as good} as $x$, since this is the probability of increasing the mutation rate from $\rho$ to $\min\{F^{s}\rho,\rho_{\max}\}$. 

We note that several other works studying self-adjusting parameter choices assume that the adjustment rule distinguishes whether or not a strict improvement has been found. In the analysis of the self-adjusting \ga in~\cite{DoerrD18ga}, for example, it is assumed that $\lambda \assign \lambda/F$ if and only if $f(y)>f(x)$, while $\lambda$ is updated to $\lambda F^{1/4}$ otherwise. 
While formally analyzing the impact of this choice (which requires a quantification of the expected running time for the self-adjusting \oea using such an alternative update rule) goes beyond the scope of this present work, it is certainly desirable to develop general guidelines which update rule to prefer for which type of problems. The empirical investigations of the two alternatives reported in~\cite{RodionovaABD19} suggests that such guidelines are non-trivial to derive. 

\textbf{Asymptotic analysis:} Our result is an asymptotic running-time analysis, that is, we are interested in the running-time behavior for large problems sizes $n$. More formally, we view the running time $T$ as a function of the problem size $n$ (even though we do not explicitly write $T(n)$) and we aim at statements on its limiting behavior. As usual in the analysis of algorithms, we use the Landau symbols $O(\cdot)$, $\Omega(\cdot)$, $\Theta(\cdot)$, $o(\cdot)$, and $\omega(\cdot)$ to conveniently describe such limits. When using such a notation, we shall always view the expression in the argument as a function of $n$ and we use the notation to describe the behavior for $n$ tending to infinity. We note that already the algorithm parameters $\eps$ and $\pmin$ are functions of $n$ (which is very natural since it just means that we use different parameter values for different problem sizes). Different from $\eps$ and $\pmin$, we take $s$ as a constant (that is, not depending on $n$). Success rates varying with the problem size have been shown useful in~\cite{DoerrLOW18}, but generally it is much more common to have constant success rates and we do not see how non-constant success rates could be advantageous in our setting.

\section{Summary of Useful Tools}
\label{sec:tools}

We shall frequently use the following well-known estimates. Note for \ref{it:estsecondorder} and \ref{it:estthirdorder} that the binomial coefficients $\binom{s}{k} := s\cdot \ldots \cdot (s-k+1)/k!$ are defined for all $s \in \R$ and $k \in \N$.
\begin{lemma}\label{lem:est}
\begin{enumerate}
\item \label{it:est1} For all $x \in \R$, $1+x \le e^x$.
\item \label{it:est2} For all $x < 1$, $e^{x} \le 1 + \frac x{x-1}$.
\item \label{it:est3} For all $x \in [0,1]$, $e^{-x} \le 1 - \frac x2$.
\item \label{it:estbernoulli} For all $x \ge -1$ and $s \ge 1$, $(1+x)^s \ge 1 + sx$.
\item \label{it:estbernoulli2} For all $x \ge -1$ and $0 \le s \le 1$, $(1+x)^s \le 1 + sx$.
\item \label{it:estsecondorder} For all $0 \le x \le 1$ and $s \ge 2$, $(1-x)^s \le 1 - sx +\binom{s}{2}x^2$.
\item \label{it:estthirdorder} For all $0 \le x \le 1$ and $s \ge 3$, $(1-x)^s \ge 1 - sx +\binom{s}{2}x^2- \binom{s}{3}x^3$.
\end{enumerate}
\end{lemma}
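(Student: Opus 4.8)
These are all standard elementary estimates, and the plan is simply to verify each by a short calculus argument, exploiting that \ref{it:est2} and \ref{it:est3} reduce to \ref{it:est1}, and that \ref{it:estthirdorder} reduces to \ref{it:estsecondorder}, so that only a handful of genuinely separate computations remain. For \ref{it:est1} I would set $g(x) = e^x - 1 - x$ and note that $g'(x) = e^x - 1$ is negative for $x < 0$ and positive for $x > 0$, so $g$ attains its global minimum $g(0) = 0$ at the origin. Estimate \ref{it:est2} then follows by applying \ref{it:est1} with $-x$ in place of $x$, which gives $1 - x \le e^{-x}$, and dividing by $1 - x > 0$ (valid since $x < 1$). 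For \ref{it:est3} I would use convexity of $t \mapsto e^{-t}$: on $[0,1]$ its graph lies below the chord through $(0,1)$ and $(1, e^{-1})$, i.e.\ $e^{-x} \le 1 - (1 - e^{-1})x$, and since $e^{-1} \le \tfrac12$ and $x \ge 0$ the right-hand side is at most $1 - x/2$.

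For the two Bernoulli-type bounds \ref{it:estbernoulli} and \ref{it:estbernoulli2}, I would fix the exponent $s$ and put $g(x) = (1+x)^s - 1 - sx$ for $x \ge -1$, so that $g(0) = 0$ and $g'(x) = s\bigl((1+x)^{s-1} - 1\bigr)$. If $s \ge 1$, then $(1+x)^{s-1} - 1$ has the same sign as $x$, so $g$ is minimised at $x = 0$ and hence $g \ge 0$, which is \ref{it:estbernoulli}; if $0 \le s \le 1$ the sign of $g'$ is reversed, so $g$ is maximised at $x = 0$ and $g \le 0$, which is \ref{it:estbernoulli2} (the case $s = 0$ being trivial).

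The last two estimates come in sequence. For \ref{it:estsecondorder} I would set $g(x) = 1 - sx + \binom{s}{2}x^2 - (1-x)^s$ on $[0,1]$; then $g(0) = 0$, a one-line computation gives $g'(0) = 0$, and $g''(x) = s(s-1)\bigl(1 - (1-x)^{s-2}\bigr) \ge 0$ for $s \ge 2$, because $0 \le 1-x \le 1$ and $s - 2 \ge 0$. Hence $g'$ is nondecreasing from $g'(0) = 0$, so $g' \ge 0$, so $g$ is nondecreasing from $g(0) = 0$, so $g \ge 0$. For \ref{it:estthirdorder} I would set $g(x) = (1-x)^s - 1 + sx - \binom{s}{2}x^2 + \binom{s}{3}x^3$ on $[0,1]$; using $2\binom{s}{2} = s(s-1)$ and $3\binom{s}{3} = s\binom{s-1}{2}$ one finds $g'(x) = s\bigl(1 - (s-1)x + \binom{s-1}{2}x^2 - (1-x)^{s-1}\bigr)$, and the bracket is exactly the quantity shown to be nonnegative by \ref{it:estsecondorder} applied with the exponent $s - 1 \ge 2$. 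Thus again $g' \ge 0$, $g$ is nondecreasing, and $g \ge g(0) = 0$.

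I expect no real obstacle here; the only points needing a moment of attention are the two algebraic identifications of $g'$ in \ref{it:estsecondorder} and \ref{it:estthirdorder} with the corresponding ``lower-order'' expression (rewriting $2\binom{s}{2}$ and $3\binom{s}{3}$ and recognising the outcome as the previous bound), and checking that no negative powers of $1-x$ ever arise, which is automatic since the exponents $s-2$ and $s-3$ are nonnegative in the respective ranges $s \ge 2$ and $s \ge 3$.
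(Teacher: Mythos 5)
Your proof is correct and essentially self-contained, whereas the paper cites (a)--(d) from a reference, derives (e) from (d), and handles (f) and (g) via Taylor's theorem with the Lagrange remainder for $f(x)=(1-x)^s$ expanded at $0$, reading off the direction of the inequality from the sign of $f^{(3)}$ resp.\ $f^{(4)}$. Your treatment of (f) and (g) is genuinely different: rather than invoking the remainder term, you show that the difference function has nonnegative second derivative (for (f)), respectively that its first derivative equals $s$ times the expression already controlled by (f) with exponent $s-1$ (for (g)), and integrate up from the value $0$ at $x=0$. This ``differentiate and reduce to the previous order'' scheme is arguably cleaner, and it sidesteps a small delicacy in the Taylor route: for $2\le s<3$ the third derivative $-s(s-1)(s-2)(1-\xi)^{s-3}$ is unbounded as $\xi\to 1$, so the Lagrange form of the remainder needs care at the endpoint, whereas your $g''(x)=s(s-1)\bigl(1-(1-x)^{s-2}\bigr)\ge 0$ involves only nonnegative powers of $1-x$. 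The identities $2\binom{s}{2}=s(s-1)$ and $3\binom{s}{3}=s\binom{s-1}{2}$ that you flag are exactly the only computations needed, and your direct derivative arguments for (a), (d), (e) and the chord argument for (c) are all sound.

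One remark on (b): your argument (apply (a) to $-x$ and take reciprocals, legitimate since both sides are positive for $x<1$) yields $e^x\le \tfrac{1}{1-x}=1+\tfrac{x}{1-x}$. This differs from the literal statement $e^x\le 1+\tfrac{x}{x-1}$, which is in fact false for $0<x<1$ (take $x=\tfrac12$: the right-hand side is $0$). The denominator in the printed lemma is a sign typo; the version you prove, with denominator $1-x$, is the standard estimate and is the one actually used later (e.g.\ for the lower bound on $\rho^*$ in Lemma~\ref{lem:rhostar}, where it is applied with $x=-\ln(s+1)/\ell$). So there is no gap in your reasoning, but note that you have proved the corrected statement rather than the printed one.
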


\begin{proof}
Part~\ref{it:est1} to \ref{it:estbernoulli} can be found, for example, in~\cite[Lemmas 4.1, 4.2 and 4.8]{Doerr20bookchapter}. Part~\ref{it:estbernoulli2} follows easily from~\ref{it:estbernoulli}. For Part~\ref{it:estsecondorder} and~\ref{it:estthirdorder}, by Taylor's theorem, e.g.~\cite[Theorem 2.5.4]{trench2013introduction}, for any function $f:\R \to \R$ that is $k+1$ times continuously differentiable on an open interval $I$,  and for any $x,a \in I$ there exists $\xi$ between $a$ and $x$ such that
\[
f(x) = f(a) + \left(\sum_{i=1}^{k}\frac{f^{(i)}(a)}{i!}(x-a)^i \right)- \frac{f^{(k+1)}(\xi)}{(k+1)!}(x-a)^{k+1}.
\]
We use this theorem for $f(x) = (1-x)^s$ with $a=0$. The main term of the expansion corresponds to the right hand side of~\ref{it:estsecondorder} and~\ref{it:estthirdorder} for $k=2$ and $k=3$ respectively. For~\ref{it:estsecondorder}, since $f^{(3)}(\xi) = -s(s-1)(s-2)(1-\xi)^{s-3}$, the error term $f^{(k+1)}(\xi)/(k+1)! \cdot(x-a)^{k+1}$ is non-positive for all $\xi \in [0,x]$, and the claim follows. Likewise,~\ref{it:estsecondorder} follows since $f^{(4)}(\xi) = s(s-1)(s-2)(s-3)(1-\xi)^{s-4}$ is non-negative for all $\xi \in [0,x]$.
\end{proof}

Sometimes we need more precise error terms. In this case, we resort to the following asymptotic expansions around zero. That is, we will use the following expansions in the case that $x$ and/or $y$ are close to zero.
\begin{lemma}\label{lem:asym}
Let $c_1,c_2 \in \R$, $c_1 < c_2$, and consider the interval $I = [c_1,c_2] \subseteq \R$. 
Then we have the following asymptotic expansions, where the hidden constants only depend on $c_1$ and $c_2$:
\begin{enumerate}
\item \label{it:asym1} For all $x \in I$, $e^{-x} = 1-x+x^2/2 \pm O(|x|^3)$.
\item \label{it:asym2} If $c_1 > -1$ and $c_2 <1$ then for all $x \in I$, $1/(1+x) = 1-x+x^2 \pm O(|x|^3)$.
\item \label{it:asym3} If $c_1 > -1$ and $c_2 <1$ then for all $x \in I$, $\ln(1+x) = x-x^2/2 \pm O(|x|^3)$. Equivalently, we may write $1+x = e^{x-x^2/2 \pm O(|x|^3)}$.
\item \label{it:asym4} For all $0<y \in I$ and (for concreteness) all $x \in [-1/2,1/2]$,
\[
1-e^{-y}(1+x) = (1-e^{-y})\left(1-x/y+x/2\pm O(|y x|)\right)
\]
\end{enumerate}
Note that the lemma explicitly allows negative values of $x$. To emphasize the exact meaning of the $O$-notation, we spell out exemplarily the meaning of the second statement. It says that for all $c_1,c_2 \in \R$ with $-1 <c_1<c_2<1$ there exists $C >0$ such that for all $x \in [c_1,c_2]$,
\[
1-x+x^2 - C \cdot |x|^3 \leq 1/(1+x) \leq 1-x+x^2 + C \cdot |x|^3
\]
\end{lemma}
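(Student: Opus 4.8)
The plan is to read off parts~\ref{it:asym1}--\ref{it:asym3} directly from Taylor's theorem with Lagrange remainder (the tool already used in the proof of Lemma~\ref{lem:est}), and to reduce part~\ref{it:asym4} to a one-variable estimate for $1/(e^{y}-1)$. For the first three parts I would expand around $0$ to second order the functions $x\mapsto e^{-x}$ on $\R$, $x\mapsto 1/(1+x)$ on $(-1,1)$, and $x\mapsto\ln(1+x)$ on $(-1,1)$; the hypothesis $-1<c_{1}<c_{2}<1$ is exactly what keeps $I$ inside the domain of the last two. The Taylor polynomials give the asserted main terms $1-x+x^{2}/2$, $1-x+x^{2}$, and $x-x^{2}/2$, and since in each case the third derivative is continuous, it is bounded on the compact set $I$ by a constant $M=M(c_{1},c_{2})$; the remainder is then at most $(M/6)\,|x|^{3}$, which is the claimed $\pm O(|x|^{3})$ with a constant depending only on $c_{1},c_{2}$. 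The ``equivalently'' clause in~\ref{it:asym3} comes for free by exponentiating: $1+x=\exp(\ln(1+x))=\exp\!\bigl(x-x^{2}/2\pm O(|x|^{3})\bigr)$.

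For part~\ref{it:asym4} the first step is to factor out $1-e^{-y}$ (nonzero since $y>0$):
\[
1-e^{-y}(1+x)=(1-e^{-y})-xe^{-y}=(1-e^{-y})\Bigl(1-\tfrac{xe^{-y}}{1-e^{-y}}\Bigr)=(1-e^{-y})\Bigl(1-\tfrac{x}{e^{y}-1}\Bigr),
\]
multiplying numerator and denominator by $e^{y}$ in the last step. This collapses the whole statement to the uniform estimate $1/(e^{y}-1)=1/y-1/2\pm O(|y|)$ for all $0<y\in I$, with a constant depending only on $c_{1},c_{2}$: granting it, multiplying by $x$ and using that $x$ ranges over the fixed interval $[-1/2,1/2]$ gives $x/(e^{y}-1)=x/y-x/2\pm O(|xy|)$, and substituting back reproduces precisely $(1-e^{-y})\bigl(1-x/y+x/2\pm O(|xy|)\bigr)$.

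To prove the scalar estimate I would put $1/(e^{y}-1)$, $-1/y$, and $1/2$ over the common denominator $2y(e^{y}-1)$, obtaining $P(y)\big/\bigl(2y(e^{y}-1)\bigr)$ with $P(y):=(y-2)e^{y}+y+2$. A short computation shows $P(0)=P'(0)=P''(0)=0$ and $P'''(y)=(1+y)e^{y}$, so Taylor's theorem gives $|P(y)|\le \tfrac16 M|y|^{3}$ with $M:=\max_{0\le\xi\le c_{2}}(1+\xi)e^{\xi}$ (note $c_{2}>0$, else the statement is vacuous); on the other hand $e^{y}\ge 1+y$ (Lemma~\ref{lem:est}\ref{it:est1}) yields $2y(e^{y}-1)\ge 2y^{2}$ for every $y>0$, so the ratio is at most $(M/12)\,|y|$ in absolute value, uniformly over $0<y\in I$. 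The main --- and essentially only --- obstacle is this reduction in part~\ref{it:asym4}: one must notice that pulling out the factor $1-e^{-y}$ turns the two-variable claim into the one-variable function $1/(e^{y}-1)$, whose behaviour near its removable singularity at $0$ is the start of the Bernoulli-number generating series $1/y-1/2+y/12-\cdots$, and one must package the error neatly via the auxiliary function $P$ whose value and first two derivatives vanish at $0$, after which a crude Lagrange bound already suffices. Everything else is routine expansion bookkeeping.
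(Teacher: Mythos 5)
Your proof is correct. For parts (a)--(c) you do essentially what the paper does: it simply cites the standard second-order Taylor expansions of $e^{-x}$, $1/(1+x)$ and $\ln(1+x)$, and your Lagrange-remainder justification (third derivative bounded on the compact interval $I$, which lies in the domain thanks to $-1<c_1<c_2<1$) is exactly the standard argument behind those citations. For part (d) your factorization is in fact identical to the paper's --- note that $x/(1-e^{-y})-x=xe^{-y}/(1-e^{-y})=x/(e^y-1)$, so your bracket $1-x/(e^y-1)$ is literally the paper's $1-x/(1-e^{-y})+x$ --- but you then verify the resulting one-variable estimate by a different route. The paper obtains $x/(1-e^{-y})=(x/y)(1+y/2\pm O(y^2))$ by chaining the already-proved expansions: first $1-e^{-y}=y(1-y/2\pm O(y^2))$ from part (a), then inverting the bracket via part (b). You instead prove the equivalent statement $1/(e^y-1)=1/y-1/2\pm O(|y|)$ from scratch, clearing denominators to the auxiliary function $P(y)=(y-2)e^y+y+2$, checking $P(0)=P'(0)=P''(0)=0$ and $P'''(y)=(1+y)e^y$, and combining the Lagrange bound $|P(y)|\le\tfrac16 M|y|^3$ with $2y(e^y-1)\ge 2y^2$ for $y>0$. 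Your version is slightly longer but fully explicit about the constant (depending only on $c_2$) and avoids composing two $O$-expansions; the paper's version is shorter because it reuses parts (a) and (b). Both arguments are sound.
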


\begin{proof}
Parts~\ref{it:asym1},~\ref{it:asym2}, and~\ref{it:asym3} are standard Taylor expansions, see for example~\cite[Examples 2.5.1-2.5.3]{trench2013introduction}. Part~\ref{it:asym4} follows from the other parts by the following calculation.
\begin{align*}
1-e^{-y}(1+x) & = (1-e^{-y})\Big(1-\frac{x}{1-e^{-y}} + x\Big) \stackrel{\ref{it:asym1}}{=} (1-e^{-y})\Big(1-\frac{x}{y(1-y/2\pm O(y^2))}+x\Big) \nonumber \\
& \stackrel{\ref{it:asym3}}{=} (1-e^{-y})\left(1-\tfrac{x}{y}(1+y/2 \pm O(y^2))+x\right) \nonumber \\
& = (1-e^{-y})\left(1-x/y+x/2\pm O(|y x|)\right).
\end{align*}
\end{proof}

The following lemma will be helpful to estimate the success probabilities for the \oeares. 
\begin{lemma}\label{lem:auxmonotone}
For every $0 < b < c$, the function $f(x) = (1-b^x)/(1-c^x)$ is strictly decreasing in $x$ in the range $x \in \R^+$.
\end{lemma}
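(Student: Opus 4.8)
The claim is a one-variable calculus statement, so the plan is simply to show that $f'(x)<0$ for every $x>0$. To keep the sign bookkeeping transparent I would first substitute $b=e^{-\beta}$ and $c=e^{-\gamma}$; in the regime relevant for the applications to the \oeares we have $0<b<c<1$, hence $\beta>\gamma>0$. Writing $b^x=e^{-\beta x}$ and $c^x=e^{-\gamma x}$ and differentiating the quotient gives
\[
f'(x)=\frac{\beta e^{-\beta x}(1-e^{-\gamma x})-\gamma e^{-\gamma x}(1-e^{-\beta x})}{(1-c^x)^2}
=\frac{\beta e^{-\beta x}-\gamma e^{-\gamma x}-(\beta-\gamma)e^{-(\beta+\gamma)x}}{(1-c^x)^2}.
\]
The denominator is positive for $x>0$, and multiplying the numerator by the positive quantity $e^{(\beta+\gamma)x}$ shows that $f'(x)<0$ is equivalent to $h(x)<0$, where $h(x):=\beta e^{\gamma x}-\gamma e^{\beta x}-(\beta-\gamma)$.

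It then remains to analyse the auxiliary one-dimensional function $h$ on $\R^+$. One checks immediately that $h(0)=0$ and $h'(x)=\beta\gamma\bigl(e^{\gamma x}-e^{\beta x}\bigr)$. Since $\beta,\gamma>0$ the factor $\beta\gamma$ is positive, and since $\beta>\gamma$ we have $e^{\beta x}>e^{\gamma x}$ for every $x>0$; hence $h'(x)<0$ on $\R^+$. Consequently $h(x)<h(0)=0$ for all $x>0$, which yields $f'(x)<0$ there, i.e.\ $f$ is strictly decreasing on $\R^+$.

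An alternative route that avoids differentiating the quotient uses the elementary identity $1-b^x=\ln(1/b)\int_0^x b^t\,dt$ (and likewise for $c$): it rewrites $f(x)=\frac{\ln(1/b)}{\ln(1/c)}\cdot\frac{\int_0^x b^t\,dt}{\int_0^x c^t\,dt}$ with a positive constant in front, and differentiating the remaining ratio of integrals reduces strict monotonicity to the pointwise inequality $b^x c^t-c^x b^t=b^t c^t\,(b^{x-t}-c^{x-t})<0$ for $0<t<x$, which is immediate from $b<c$. I do not expect a genuine obstacle here; the only point requiring care is that the statement really needs $b$ and $c$ to lie on the same side of $1$ (for $b<1<c$ the function is in fact increasing), so I would either restrict attention to $0<b<c<1$, which is all the subsequent proofs use, or add the hypothesis that $\ln b$ and $\ln c$ have the same sign.
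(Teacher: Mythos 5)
Your main argument is correct and lands in essentially the same place as the paper: both proofs differentiate the quotient and reduce the claim to showing that the numerator of $f'$ is negative. The difference is in the auxiliary device. The paper fixes $x$ and exploits monotonicity in $y$ of $g(y)=(e^{-xy}-1)/y$ on $\R^-$ (proved via Bernoulli's inequality), comparing $y=\ln b$ with $y=\ln c$; you instead substitute $b=e^{-\beta}$, $c=e^{-\gamma}$ and study $h(x)=\beta e^{\gamma x}-\gamma e^{\beta x}-(\beta-\gamma)$ as a function of $x$, using $h(0)=0$ and $h'<0$. Your version is arguably slightly cleaner since the sign of $h'$ is immediate, and your integral-representation alternative is a genuinely different and equally valid route. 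Most valuable is your observation about the hypothesis: as literally stated (``for every $0<b<c$'') the lemma is false when $b<1<c$ (e.g.\ $b=\tfrac12$, $c=2$ gives an increasing function), and the paper's own proof silently uses that $\ln b$ and $\ln c$ are both negative. Since every application in the paper has $0<b<c<1$, your restriction to that case is exactly the right fix; the statement should carry the hypothesis that $b$ and $c$ lie on the same side of $1$ (your $h$-argument in fact also covers $1<b<c$, where both $\beta$ and $\gamma$ are negative but $\beta>\gamma$ still holds and $h'<0$ persists).
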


\begin{proof}
We first argue that for all $a\in \R$, the auxiliary function $g(y) := (e^{-ay}-1)/y$
is strictly increasing in $y \in \R^-$. To check this, we compute the derivative $g'(y) = \frac{e^{-ay}}{y^2}(e^{ay}-ay-1)$, which is strictly positive by Bernoulli's inequality, Lemma~\ref{lem:est}\ref{it:estbernoulli}. Thus $g$ is strictly increasing in $\R^- := \{x\in \R\mid x<0\}$. (It is also strictly increasing in $\R^+$, but since it has a pole in $0$, it is not increasing in all of $\R\setminus\{0\}$.) In particular, setting $a:= x$ and comparing $g(y)$ for $y=\ln b$ and $y=\ln c$ yields $(b^{-x}-1)/\ln b < (c^{-x}-1)/\ln c$, or equivalently (mind that $\ln b$ and $\ln c$ are both negative)
\begin{equation}\label{eq:rhostarhat1}
(1-b^x)c^x\ln c < (1-c^x)b^x\ln b.
\end{equation}
With this preparation, we observe for the derivative of $f$,
\[
f'(x) = \frac{(1-b^x)c^x\ln c - (1-c^x)b^x\ln b}{(1-c^x)^2} \stackrel{\eqref{eq:rhostarhat1}}{<} 0.
\]
This proves the lemma.
 \end{proof}

We also need the following result showing that a random process with negative additive drift in the non-negative numbers cannot often reach states that are mildly far in the positive numbers. In other words, the \emph{occupation probability} of such states is small. Results of a similar flavor have previously been obtained in~\cite{DoerrWY21}, but we do not see how to derive our result easily from that work. 

\begin{lemma}\label{lem:occprob}
  Let $D$ be a discrete random variable 
	satisfying $|D| \le s$ and $E[D] = -\phi$ for some $\phi \le \sqrt 2 \, s$. Let $X_t$ be a random process on $\R$ such that 
  \begin{itemize}
  \item $(X_t)$ starts on a fixed value $x_0 \le s$, that is, we have $\Pr[X_0 = x_0] = 1$,
  \item for all $t$ and for all $r_1, \dots, r_t \in \R$ with $\Pr[\forall i \in [1..t] : X_i = r_i] > 0$ we have
  \begin{itemize}
  \item if $r_t \ge 0$, then conditioned on $X_1=r_1, \ldots, X_t = r_t$, the conditional distribution of $X_{t+1}$ is dominated by $r_t + D$,
  \item if $r_t < 0$, then $X_{t+1} - X_t$ has a discrete distribution with absolute value at most~$s$.
  \end{itemize}
  \end{itemize}
  Then for all $t$ and $U \ge s$, we have \[\Pr[X_t \ge U] \le \exp\left(-\frac{\phi (U-s)}{2s^2}\right) \left(\frac{U-s}{\phi} + 1 + \frac {4s^2}{\phi^2} \right).\] In particular, for $U = 6 \frac {s^2}{\phi} \ln(\frac 1 \phi) + s$, we have $\Pr[X_t \ge U] \le 6\phi s^2 \ln(\frac 1 \phi) + \phi^3 + 4\phi s^2$, an expression tending to zero for $\phi \to 0$.
\end{lemma}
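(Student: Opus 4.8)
The plan is to control $\Pr[X_t\ge U]$ by a Markov‑inequality argument applied to the exponential potential $Y_t:=e^{\eta X_t}$ with the tuned value $\eta:=\delta/(2s^2)$, after a decomposition according to the last time the process was below~$s$. Write $\mathcal F_t:=\sigma(X_0,\dots,X_t)$. The first ingredient is a one‑step drift estimate for $Y$: on $\{X_t\ge 0\}$ the domination hypothesis and the monotonicity of $\exp$ give $E[Y_{t+1}\mid\mathcal F_t]\le Y_t\,E[e^{\eta D}]$, so it suffices to show $E[e^{\eta D}]\le 1-\tfrac{\eta\delta}{2}$. Since $|\eta D|\le\eta s=\delta/(2s)\le 1/\sqrt2<1$ --- this is where the assumption $\delta\le\sqrt2\,s$ enters --- an elementary estimate $e^z\le 1+z+z^2$ valid for $|z|\le 1$ yields $E[e^{\eta D}]\le 1+\eta E[D]+\eta^2s^2=1-\eta\delta+\eta^2s^2=1-\tfrac{\eta\delta}{2}$ by the choice of $\eta$. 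Hence $Y$ contracts by the factor $1-\tfrac{\eta\delta}{2}$ per step while $X\ge 0$. On $\{X_t<0\}$ there is no drift control, but the step is bounded by~$s$, so $Y_{t+1}\le e^{\eta s}Y_t$, and since $Y_t<1$ there, the potential re‑enters values above~$1$ only up to~$e^{\eta s}$.

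Next I would decompose $\{X_t\ge U\}$ by the last time $X$ was below~$s$. Let $\tau:=\max\{i\in[0..t]:X_i<s\}$, with the convention $\tau:=\bot$ when $X_i\ge s$ for all~$i$ (which forces $x_0=s$, given $x_0\le s$). As $U\ge s$, the value $\tau=t$ is incompatible with $X_t\ge U$, so $\{X_t\ge U\}$ is the disjoint union of the events $\{X_t\ge U,\tau=j\}$ for $j\in[0..t-1]$ and $\{X_t\ge U,\tau=\bot\}$. I will prove $\Pr[X_t\ge U,\tau=j]\le e^{-\eta(U-s)}(1-\tfrac{\eta\delta}{2})^{t-1-j}$ and, analogously, $\Pr[X_t\ge U,\tau=\bot]\le e^{-\eta(U-s)}(1-\tfrac{\eta\delta}{2})^{t}$; summing the resulting geometric series gives $\Pr[X_t\ge U]\le e^{-\eta(U-s)}\sum_{m\ge 0}(1-\tfrac{\eta\delta}{2})^m=e^{-\eta(U-s)}\cdot\tfrac{2}{\eta\delta}=\exp\!\big(-\tfrac{\delta(U-s)}{2s^2}\big)\cdot\tfrac{4s^2}{\delta^2}$, which is at most the asserted bound (and in fact slightly sharper). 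The ``in particular'' statement then follows by plugging in $U=6\tfrac{s^2}{\delta}\ln\tfrac1\delta+s$: this makes $\eta(U-s)=3\ln\tfrac1\delta$, hence $e^{-\eta(U-s)}=\delta^3$, and substituting into the asserted bound produces exactly $6\delta s^2\ln\tfrac1\delta+\delta^3+4\delta s^2\to 0$.

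For the per‑$j$ bound, on $\{\tau=j\}$ we have $X_j<s$ and $X_i\ge s$ for all $i\in[j+1..t]$. Put $\rho:=\min\{i\ge j+1:X_i<s\}$, so $\rho>t$ on $\{\tau=j\}$. Using the geometric contraction of $Y$ on $\{X\ge s\}\subseteq\{X\ge 0\}$, the stopped and rescaled process $N_i:=Y_{i\wedge\rho}(1-\tfrac{\eta\delta}{2})^{-(i\wedge\rho)}$ (for $i\ge j+1$) is a supermartingale, so $E[N_t\mid\mathcal F_{j+1}]\le N_{j+1}=Y_{j+1}(1-\tfrac{\eta\delta}{2})^{-(j+1)}$; since $N_t\ge Y_t(1-\tfrac{\eta\delta}{2})^{-t}\mathbf 1\{\rho>t\}$, this rearranges to $E[Y_t\,\mathbf 1\{\rho>t\}\mid\mathcal F_{j+1}]\le Y_{j+1}(1-\tfrac{\eta\delta}{2})^{t-1-j}$. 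Since $\{\tau=j\}\subseteq\{X_j<s,\ \rho>t\}$, Markov's inequality gives $\Pr[X_t\ge U,\tau=j]\le e^{-\eta U}E[Y_t\,\mathbf 1\{X_j<s,\rho>t\}]$; pulling out the inner $\mathcal F_{j+1}$‑conditional expectation and then conditioning on $\mathcal F_j$ with the one‑step bound $E[Y_{j+1}\mathbf 1\{X_j<s\}\mid\mathcal F_j]\le e^{\eta s}$ (if $0\le X_j<s$ then $E[Y_{j+1}\mid\mathcal F_j]\le e^{\eta X_j}E[e^{\eta D}]\le e^{\eta s}$; if $X_j<0$ then $Y_{j+1}\le e^{\eta s}Y_j<e^{\eta s}$) yields $\Pr[X_t\ge U,\tau=j]\le e^{\eta s}e^{-\eta U}(1-\tfrac{\eta\delta}{2})^{t-1-j}$, as claimed. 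The $\tau=\bot$ case is the same argument started at time~$0$, using $Y_0=e^{\eta x_0}\le e^{\eta s}$.

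The step I expect to be the main obstacle is precisely the coexistence of the two regimes. A naive argument that treats $Y$ only as a supermartingale on $\{X\ge 0\}$ and handles each excursion below~$s$ in isolation accumulates a factor linear in~$t$, because there are up to~$t$ candidate ``relaunch'' times~$j$. The crucial point that removes this is that the drift is \emph{strictly} negative, so $Y$ does not merely fail to grow but contracts by a fixed factor on $\{X\ge 0\}$; this converts the sum over relaunch times into a convergent geometric series and kills the $t$‑dependence. Further care is needed for the driftless region $X<0$ (bounding the re‑entry of $Y$ through the bounded step size), for tuning $\eta$ so that $E[e^{\eta D}]\le 1-\eta\delta/2$ while $\eta s$ stays below the range of validity of the elementary exponential estimate, and for the boundary bookkeeping ($x_0\le s$, impossibility of $\tau=t$, and measurability of the conditioning events).
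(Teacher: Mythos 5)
Your proof is correct, and it takes a genuinely different route from the paper's. The paper argues by a chain of domination reductions: it replaces the single walker by a system of particles, one spawned adversarially in $[0,s)$ at every time step and each performing an independent random walk with increments distributed as $D$; it then bounds the expected number of particles at height $\ge U$ by applying the additive Chernoff (Hoeffding) bound to each sum $D^{t-i}$ separately and summing over spawn times $i$, which is where the three terms $\frac{U-s}{\delta}+1+\frac{4s^2}{\delta^2}$ in the stated bound come from. You instead run an exponential-moment (Hajek-type) supermartingale argument: with $\eta=\delta/(2s^2)$ you get the uniform contraction $E[e^{\eta D}]\le 1-\eta\delta/2$ (your use of $e^z\le 1+z+z^2$ is valid on the needed range since $\eta s=\delta/(2s)\le 1/\sqrt2$, and the stochastic domination transfers to $E[e^{\eta X_{t+1}}\mid\mathcal F_t]$ because $e^{\eta x}$ is increasing), decompose over the last visit to $(-\infty,s)$, and pay only a single factor $e^{\eta s}$ for re-entry from the uncontrolled region. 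The two proofs are structurally parallel --- both decompose by the last return to $[0,s)$ and both exploit that the strictly negative drift turns the sum over restart times into a convergent geometric series --- but your single tuned $\eta$ replaces the per-term Chernoff optimization and actually yields the slightly sharper bound $\exp\bigl(-\tfrac{\delta(U-s)}{2s^2}\bigr)\cdot\tfrac{4s^2}{\delta^2}$, which implies the stated one since $\tfrac{U-s}{\delta}+1\ge 0$; your evaluation of the ``in particular'' case is also correct. The only points worth tightening in a write-up are routine: integrability of $Y_t$ (which follows since domination by $r_t+D$ with $|D|\le s$ forces $X_{t+1}\le X_t+s$ almost surely, so $Y_t$ is bounded) and the observation $1-\eta\delta/2\ge 1/2>0$ so that the rescaled supermartingale and the geometric series are well defined.
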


\begin{proof}
  One potential problem in analyzing the process $(X_t)$ is that we have not much information about its behavior when it is below zero. We solve this problem via a sequence of domination arguments.
  
  We think of the process $(X_t)$ as a particle moving on the real line. When the particle leaves the non-negative part, we ignore what it is doing until it reappears at some later time at some position in $[0,s)$, which is determined also by the part of the process which we do not understand. 
  
  To gain an upper bound for $\Pr[X_t \ge U]$, as a first pessimistic simulation of this true process, we may pessimistically assume that an adversary puts the particle on an arbitrary position in $[0,s)$ when it reappears. The adversary could make this position depend on the previous walk of the particle, but clearly there is no gain from this (the adversary should just choose a position which maximizes $\Pr[X_t \ge U]$). 
  
  To overcome the difficulty that we do not know when the particle reappears, we regard the following pessimistic version of the previous process with adversarial reappearances. At each time $t' \in [0..t]$, the adversary adds a particle to the process (in the interval $[0,s)$). At each time step, each previously present particle that is in $[0,\infty)$ performs a move distributed as $D$ (independent for all particles and all times). When a particle enters the negative numbers, it disappears. It is clear that the probability that there is some particle at $U$ or higher at time $t$ is at least $\Pr[X_t \ge U]$.
  
  To avoid having to deal with the disappearance of particles in the previous process, we do not let them disappear, but we let them perform a modified walk when in the negative numbers. Clearly this can only increase the number of particles on each position at each time. In the modified walk also in the negative numbers the particles perform steps distributed according to~$D$. 
  
  In summary, we regard the process in which at each time step $t' \in [0..t]$ a particle $t'$ appears at some prespecified position $x_{t',t'}$. Each previously existing particle $i < t'$ moves in this time step $t'$ to the new position $x_{t',i} = x_{t'-1,i} + D$, independent from the past and independently for all particles. We are interested in the number $Y_t := |\{i \in [0..t] \mid x_{t,i} \ge U\}|$ of particles that are in $[U,\infty)$ at time $t$, since we have $\mathbf{1}_{X_t \ge U} \preceq Y_t$ and hence $\Pr[X_t \ge U] = E[\mathbf{1}_{X_t \ge U}] \le E[Y_t]$.
  
  After this slightly lengthy preparation, we now quickly estimate $E[Y_t]$. By linearity of expectation, $E[Y_t] = \sum_{i = 0}^t \Pr[x_{t,i} \ge U]$. By construction, $x_{t,i} \le s + \sum_{j = 1}^{t-i} D_j$, where the $D_j$ are independent copies of $D$. Writing $D^{t-i} = \sum_{j = 1}^{t-i} D_j$, we compute 
  \begin{align*}
  \Pr[x_{t,i} \ge U] & \le \Pr[D^{t-i} \ge U-s] \\
  & = \Pr[D^{t-i} \ge E[D^{t-i}] + (t-i)\phi + U-s]\\
  & \le \exp\left(-\frac{((t-i)\phi + U-s)^2}{2 (t-i) s^2}\right),
  \end{align*}
  where the last estimate stems from the additive Chernoff bound (e.g., Theorem~10.9 in~\cite{Doerr20bookchapter}). Using the estimates 
  \begin{itemize}
  \item $\Pr[x_{t,i} \ge U] \le \exp(- \phi^2 (t-i) / 2s^2)$ for $t-i \ge \frac{U-s}{\phi}$, 
  \item $\Pr[x_{t,i} \ge U] \le \exp(- \phi (U-s) / 2s^2)$ for $t-i < \frac{U-s}{\phi}$, and
  \item $\sum_{k=0}^\infty \exp(-\phi^2/2s^2)^k = 1 / (1-\exp(-\phi^2/2s^2)) \le 1 / (1 - (1 - \phi^2/4s^2)) = 4s^2/\phi^2$ (using $\phi \le \sqrt 2 \, s$ and Lemma~\ref{lem:est}~\ref{it:est3}), 
  \end{itemize}
  we obtain 
  \begin{align*}
  E[Y_t] &\le \left(\frac{U-s}{\phi}+1\right) \exp\left(-\frac{\phi (U-s)}{2s^2}\right) + \sum_{k = \lceil \frac{U-s}{\phi}\rceil}^{\infty} \exp\left(-\frac{\phi^2 k}{2s^2}\right)\\
  & \le \exp\left(-\frac{\phi (U-s)}{2s^2}\right) \left(\frac{U-s}{\phi} + 1 + \frac {4s^2}{\phi^2} \right).
  \end{align*}
\end{proof}

\section{Analysis of the Self-Adjusting (1+1) EA}
\label{sec:oea}

Theorem~\ref{thm:oea} summarizes the main result of this section. Before providing the formal statement, we introduce a quantity that will play an important role in all our computations, the \emph{target mutation rate} $\rho^*(\ell,s)$. We consider as target mutation rate the value of $p$ which leads to the success probability that is given by the success rule. That is, for each fitness level $1 \le \ell \le n-1$ and each success ratio $s>0$ the target mutation rate  
$\rho^*(\ell,s)$ is the unique value $p \in (0,1)$ that satisfies $p_{\text{suc}}(p)= (1-p)^{\ell}=1/(s+1)$. For $\ell=0$ we set $\rho^*(\ell,s) := 1$. A key argument in the following proofs will be that the mutation rate drifts towards this target rate.

Following the discussion in~\cite{Doerr19tcs} we do not only analyze in Theorem~\ref{thm:oea} the expected running time, but rather show a \emph{stochastic domination} result. To formulate our results, we introduce the shorthand $X \preceq Y$ to express that the random variable $X$ is stochastically dominated by the random variable $Y$, that is, that $\Pr[X \ge r] \le \Pr[Y \ge r]$ for all $r \in \R$. We also recall that a random variable $X$ has a geometric distribution with success rate~$p$, written as $X \sim \Geom(p)$, when $\Pr[X = k] = (1-p)^{k-1} p$ for all $k = 1, 2, \dots$.

\begin{theorem}\label{thm:oea}
  Let $c > 1$ be a constant. Consider a run of the self-adjusting \oea with $F=1+\eps$, $\eps \in \omega(\frac{\log n}{n}) \cap o(1)$, $s > 0$, $\pmin \in o(n^{-1}) \cap \Omega(n^{-c})$, $\pmax = 1$, and arbitrary initial rate $\rho_{0} \in [\pmin,\pmax]$ on the $n$-dimensional \leadingones function. Then the number $T$ of iterations until the optimum is found satisfies
  \begin{align*}
  T & \preceq o(n^2) + \sum_{\ell = 0}^{n-1} X_\ell \Geom\left(\min\{\omega(\tfrac 1n), (1-o(1))(1-\rho^*(\ell,s))^\ell \rho^*(\ell,s)\}\right),
  \end{align*}
  where the $X_\ell$ are uniformly distributed binary random variables and all $X_\ell$ and all geometric random variables are mutually independent. Further, all asymptotic notation solely is with respect to $n$ and can be chosen uniformly for all $\ell$. In particular, 
  \begin{align}
	\label{eq:Toeagen}
  E[T] & \le (1+o(1)) \frac 12 \sum_{\ell=0}^{n-1} \left((1-\rho^*(\ell,s))^\ell \rho^*(\ell,s)\right)^{-1} = (1+o(1))\frac{s+1}{4\ln(s+1)} n^2.
  \end{align}
\end{theorem}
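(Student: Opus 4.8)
The plan is to combine the fitness-level method in its stochastic-domination form (\cite{BottcherDN10,Doerr19tcs}) with a drift analysis of the self-adjusting mutation rate. Write $T=\sum_{\ell=0}^{n-1}T_\ell$, where $T_\ell$ counts the iterations in which the current individual has fitness exactly $\ell$ (so $T_\ell=0$ if level $\ell$ is never the current fitness). Two structural facts about \LO reduce the theorem to a single-level statement. First, throughout the run the bits strictly after the leading-ones block stay independently uniformly distributed --- a standard invariant, preserved by each iteration because standard bit mutation maps the uniform distribution to itself and the acceptance condition only inspects the first $\ell+1$ positions. Hence, whenever the algorithm improves from level $\ell$, the new fitness is $\ell+1$ plus an overshoot distributed as $\Geom(1/2)-1$, the attained fitness values form a renewal process with i.i.d.\ $\Geom(1/2)$ jumps, and the (random) initial fitness has exactly the stationary delay distribution of this process; consequently each level $\ell\in[0..n-1]$ is the current fitness at some iteration with probability exactly $1/2$, and overshoots exceed $O(\log n)$ only with probability $n^{-\omega(1)}$. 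This yields the factor $1/2$ and the uniform binary $X_\ell$ (independence across $\ell$ being available inside the domination framework). Second, on a fixed level $\ell$ with mutation rate $\rho$ the one-step probability of leaving the level is the improvement probability $p_{\text{imp}}(\rho,\ell)=(1-\rho)^\ell\rho$; so once we know that on level $\ell$ the rate is, apart from a negligible fraction of the time, within a factor $1\pm o(1)$ of the target rate $\rho^*(\ell,s)$, the time on a visited level $\ell$ is stochastically dominated by $\Geom\bigl((1-o(1))p_{\text{imp}}(\rho^*(\ell,s),\ell)\bigr)$ up to an additive error that sums to $o(n^2)$, and capping the success parameter at $\omega(1/n)$ makes this domination uniformly valid at negligible total cost.

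The core of the argument is the claim that the mutation rate tracks $\rho^*(\ell,s)$. On a fixed level $\ell$, view $\ln\rho$ as a random walk: in a successful step (probability $p_{\text{suc}}(\rho,\ell)=(1-\rho)^\ell$) it increases by $s\ln F$, otherwise it decreases by $\ln F$, so its one-step drift is $\ln F\cdot\bigl((s+1)(1-\rho)^\ell-1\bigr)$. This drift vanishes exactly at $\rho=\rho^*(\ell,s)$, is positive for $\rho<\rho^*(\ell,s)$ and negative for $\rho>\rho^*(\ell,s)$: a restoring drift toward the target. Writing $\rho=\rho^*(\ell,s)(1+\theta)$ and using $\ell\rho^*(\ell,s)=(1+o(1))\ln(s+1)$ for $\ell=\omega(1)$ (expand $\rho^*(\ell,s)=1-(s+1)^{-1/\ell}$), one gets $(1-\rho)^\ell=(1+o(1))(s+1)^{-(1+\theta)}$, so the drift of $\theta$ is $\approx\eps\bigl((s+1)^{-\theta}-1\bigr)$, which is $-\Theta(\eps|\theta|)$ for small $|\theta|$ and of magnitude $\Theta(\eps)$ once $|\theta|=\Omega(1)$, while a single step changes $\theta$ by only $O(\eps)$. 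Applying Lemma~\ref{lem:occprob} on each side of the target (after rescaling the walk by $1/\eps$ so that its steps are $O(1)$ and its drift is the required negative constant) shows that at any fixed iteration the rate deviates from the current target by more than a factor $1\pm g(n)$ with probability $o(1)$, for a suitable $g(n)\in\omega(\eps)\cap o(1)$, and that large deviations (relative size $\Omega(1)$) are rare enough that the iterations they cost total $o(n^2)$. Summing these occupation probabilities over all iterations, together with the crude a priori bound $E[T]=O(n^2\log^{O(1)}n)$ and the fact that the floors $\rho\in[\pmin,1]$ are not binding during good phases (since $\rho^*(\ell,s)=\Omega(1/n)\gg\pmin$ and $\rho^*(\ell,s)<1$), gives that the total number of iterations on which the rate is not within a factor $1\pm g(n)$ of the current target is $o(n^2)$.

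It remains to glue the levels. When level $\ell$ is entered, the rate is (whp) within a factor $1\pm o(1)$ of $\rho^*(\ell',s)$ for the previously visited level $\ell'$; as $\ell-\ell'=O(\log n)$ whp we have $\rho^*(\ell',s)/\rho^*(\ell,s)=1+O((\log n)/\ell)$, so after $O((\log n)/(\ell\eps))$ burn-in iterations the rate re-enters the $1\pm g(n)$ window, and $\eps=\omega((\log n)/n)$ makes the sum of all burn-ins $o(n^2)$. The $O(n^{0.1})$ smallest levels, on which the linearization around the (then constant) target is unavailable, are treated directly: there the rate is pulled to a constant within $O((\log n)/\eps)$ iterations and the level is left within $O(1)$ further iterations, for a total of $O(n^{0.1}(\log n)/\eps)=o(n^2)$. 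Assembling these pieces yields the claimed domination $T\preceq o(n^2)+\sum_{\ell=0}^{n-1}X_\ell\,\Geom\bigl(\min\{\omega(1/n),(1-o(1))(1-\rho^*(\ell,s))^\ell\rho^*(\ell,s)\}\bigr)$. For the expectation, note that by definition of the target rate $(1-\rho^*(\ell,s))^\ell=1/(s+1)$, whence $p_{\text{imp}}(\rho^*(\ell,s),\ell)=\rho^*(\ell,s)/(s+1)$, and $\rho^*(\ell,s)=1-(s+1)^{-1/\ell}=(1\pm O(1/\ell))\ln(s+1)/\ell$ for $\ell=\omega(1)$; therefore $E[T]\le o(n^2)+\tfrac12\sum_{\ell=1}^{n-1}(s+1)/\rho^*(\ell,s)=(1+o(1))\tfrac{s+1}{2\ln(s+1)}\sum_{\ell=1}^{n-1}\ell=(1+o(1))\tfrac{s+1}{4\ln(s+1)}n^2$.

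The main obstacle is the middle step: showing that the rate tracks the target \emph{quantitatively tightly enough} --- within a factor $1\pm o(1)$, uniformly over all $\ell$, with every error term summing to $o(n^2)$ --- under the sole assumptions $\eps\in\omega((\log n)/n)\cap o(1)$ and polynomially small $\pmin$. This is delicate because the restoring drift weakens as $\ell\to n$ (where $p_{\text{suc}}(\rho^*(\ell,s),\ell)\ge 1/(s+1)$ is only barely attainable), because the convenient linearization of the drift holds only for small relative deviations of the rate, and because one must simultaneously control the rare --- but not negligibly rare --- large excursions of the rate. The last point is exactly what the negative-drift occupation-probability estimate of Lemma~\ref{lem:occprob} is for, and it is also the reason the statement has to settle for the somewhat lossy $\min\{\omega(1/n),\dots\}$.
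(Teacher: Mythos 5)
Your proposal is correct and follows essentially the same route as the paper: the domination form of the fitness-level theorem for the factor $1/2$ and the $X_\ell$, the target rate $\rho^*(\ell,s)$ with a restoring drift of $\log\rho$ established via Lemma~\ref{lem:occprob}, the improvement-probability comparison near $\rho^*$, and the same closing computation for $E[T]$. The only substantive detail you leave implicit is how ``the rate is good in all but a negligible fraction of iterations'' is converted into a per-level geometric domination despite the dependence of the rate across iterations; the paper does this by splitting the time on a level into independent phases of length $o(n)$, each succeeding with probability $1-\bar P$, which is exactly the mechanism your plan needs.
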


\begin{figure}[t]
\centering
\includegraphics[width=0.5\linewidth]{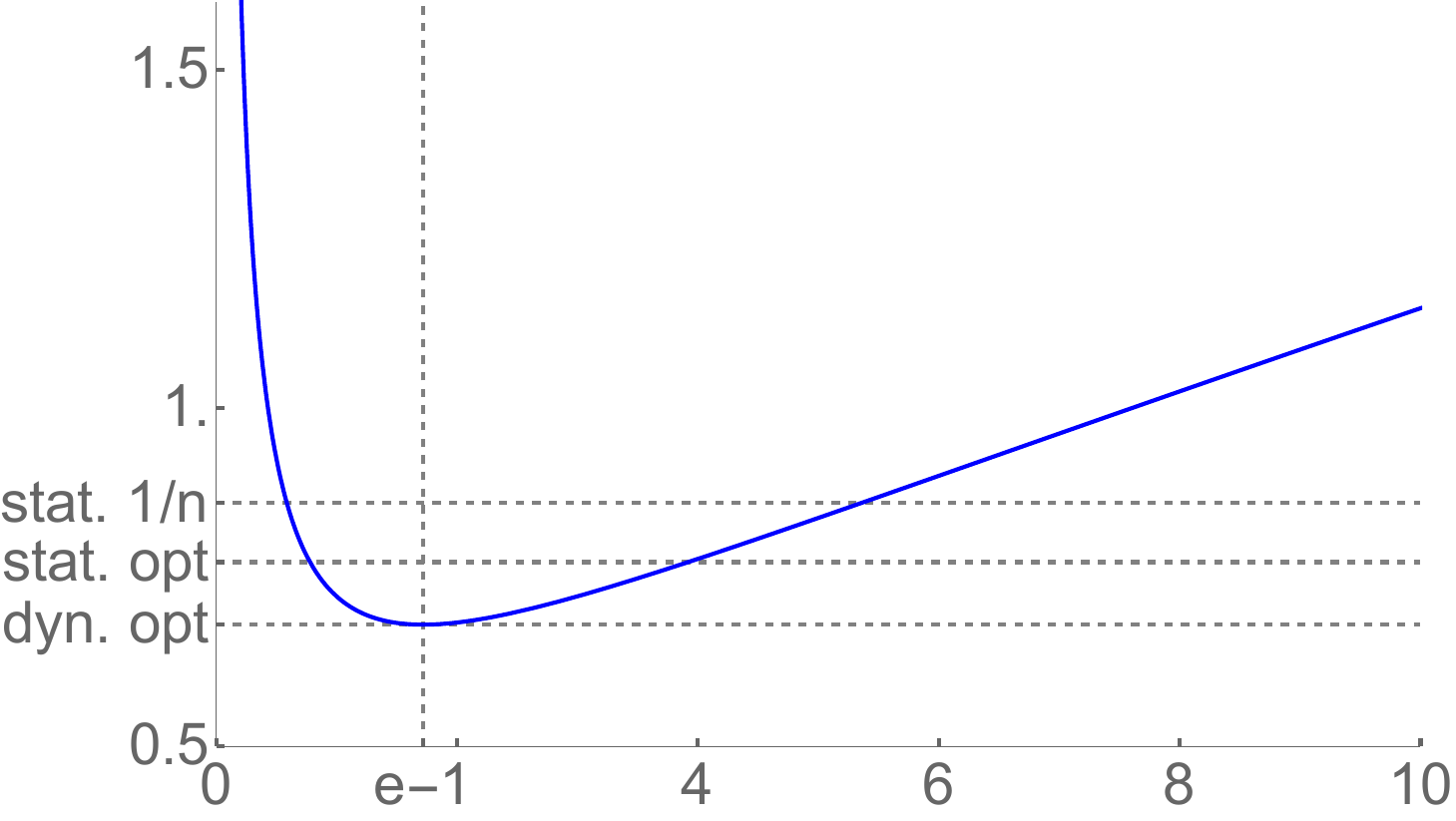}
\caption{Normalized (by the factor $1/n^2$) expected optimization times of the self-adjusting \oea for different success ratios $s$, and assuming $F=1+o(1)$.}
\label{fig:runtime-s}
\end{figure}

\subsection{Numerical Evaluation of the Running-Time Bound in Theorem~\ref{thm:oea}}
\label{sec:numerics-oea}
Figure~\ref{fig:runtime-s} displays the normalized (by the factor $1/n^2$) expected optimization time $\frac{s+1}{4\ln(s+1)}$ for success ratios $0 < s \le 10$. Minimizing this expression for $s$ shows that a success ratio of $s=e-1$ is optimal. With this setting, the self-adjusting \oea yields an expected optimization time of $(1\pm o(1))e n^2/4$, which was shown in~\cite{BottcherDN10} to be optimal for the \oea across all possible fitness-dependent mutation rates. In fact, with this success ratio, it holds that $\rho^*(\ell,s) \approx 1/(\ell+1)$, which is the mutation rate that was shown in~\cite{BottcherDN10} to be optimal when applied to mutating a random solution $x$ satisfying $\LO(x)=\ell$. 

Using Equation~\eqref{eq:Toeagen} we can also compute that for all success ratios $s \in [0.78,3.92]$ 
the expected optimization time of the self-adjusting \oea is better than the $0.77201 n^2+o(n^2)$ one of the best static \oea computed in~\cite{BottcherDN10}, which uses mutation rate $p^*=1.5936.../n$. We furthermore see that the one-fifth success rule (i.e., using $s=4$) performs slightly worse; its expected optimization time is around $0.7767 n^2 +o(n^2)$. Note, however, that we will see in Section~\ref{sec:fixedTarget} (see also Figure~\ref{fig:fixedTarget}), that its fixed-target performance is nevertheless better than the static one with $p^*$ for a large range of sub-optimal target values.

Finally, we note that for success ratios $s \in [0.59,5.35]$ 
the expected optimization time of the self-adjusting \oea is better than the $0.85914...n^2+o(n^2)$ expected running time of the static \oea with default mutation rate $p=1/n$. 

\subsection{Proof Overview}
 
The main proof idea consists in showing that in a run of the self-adjusting \oea we sufficiently often have a mutation rate that is close to the target mutation rate (the unique rate which gives a success probability of $1/(s+1)$). We show this by proving that the self-adjustment leads to a drift of the mutation rate towards the target rate $\rho^*(\ell,s)$. This drift is strong when the rate is far from the target, and we can use a multiplicative drift argument~\cite{DoerrJW12} to show that the rate quickly becomes close to the target rate (Lemma~\ref{lem:approachrho}). Once close, we use our occupation probability lemma (Lemma~\ref{lem:occprob}) based on additive drift to argue that the rate is often at least mildly close to the target rate (Lemma~\ref{lem:rhoisgood}). We need a careful definition of the lower order expressions ``often'', ``close'', and ``mildly close'' to make this work.

From the knowledge that the mutation rate is often at least mildly close to the target rate, we would like to derive that the optimization process is similar to using the target rate in each iteration. This is again not trivial; a main obstacle is that the rate is not chosen independently in each iteration. Consequently, we cannot argue that each iteration on one fitness level has the same, independent probability for finding an improvement (which would give that the waiting time on the level follows a geometric distribution). We overcome this difficulty by splitting the time spent on one fitness level in short independent phases, each consisting of bringing the rate into the desired region and then exploiting that the rate will stay there most of the time (Lemma~\ref{lem:gaintime}). This approach is feasible because of our relatively good bounds for the time needed to reach the desired rate range. The final argument is that the expected running time of the self-adjusting \oea on \leadingones is half the sum of the expected times needed to leave each fitness level. Such a statement has been previously observed for static and fitness-dependent mutation rates~\cite{BottcherDN10,Doerr19tcs}.

Since we are interested in asymptotic results only, we can and shall assume in the remainder that $n$ is sufficiently large.

\subsection{Proof of Theorem~\ref{thm:oea}}
\label{sec:proofoea}

As a first step towards understanding how our EA adjusts the mutation rate, we first determine and estimate the target mutation rate $\rho^*(\ell,s)$ introduced in the beginning of Section~\ref{sec:oea}. We shall use the following result frequently and often without explicit notice.

\begin{lemma}[estimates for the target mutation rate $\rho^*(\ell,s)$]
\label{lem:rhostar}
  Let $\ell \ge 1$ and $\rho^* = \rho^*(\ell,s)$. Then $\rho^* = 1 - (s+1)^{-1/\ell}$ and 
  \[\frac{\ln(s+1)}{\ell} \left(1+\frac{\ln(s+1)}{\ell}\right)^{-1} \le \rho^* \le \frac{\ln(s+1)}{\ell}.\] Consequently, $\rho^* = \Theta(\frac{1}{\ell})$ and $\rho^* \le \rho^*(1,s) < 1$ is bounded away from $1$ by at least a constant. If $\ell = \omega(1)$, then $\rho^* = (1-o(1)) \frac{\ln(s+1)}{\ell}$. 
\end{lemma}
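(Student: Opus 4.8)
The statement: for $\ell \ge 1$ and $\rho^* = \rho^*(\ell,s)$, we have $\rho^* = 1-(s+1)^{-1/\ell}$, the two-sided bound $\frac{\ln(s+1)}{\ell}(1+\frac{\ln(s+1)}{\ell})^{-1} \le \rho^* \le \frac{\ln(s+1)}{\ell}$, hence $\rho^* = \Theta(1/\ell)$, $\rho^* \le \rho^*(1,s) < 1$, and if $\ell = \omega(1)$ then $\rho^* = (1-o(1))\ln(s+1)/\ell$.

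First I would establish the closed form. By definition $\rho^*$ is the unique $p\in(0,1)$ with $(1-p)^\ell = 1/(s+1)$; solving gives $1-p = (s+1)^{-1/\ell}$, i.e. $\rho^* = 1-(s+1)^{-1/\ell}$. Uniqueness and existence are immediate since $p\mapsto(1-p)^\ell$ is continuous and strictly decreasing from $1$ to $0$ on $[0,1]$, and $1/(s+1)\in(0,1)$.

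Next, the two-sided bound. Write $x := \ln(s+1)/\ell > 0$, so that $\rho^* = 1 - e^{-x}$. The upper bound $\rho^* \le x$ is exactly Lemma~\ref{lem:est}\ref{it:est1} applied to $-x$: $e^{-x} \ge 1-x$, hence $1-e^{-x} \le x$. For the lower bound, I want $1-e^{-x} \ge \frac{x}{1+x}$, equivalently $e^{-x} \le 1 - \frac{x}{1+x} = \frac{1}{1+x}$, i.e. $e^{x} \ge 1+x$, which is again Lemma~\ref{lem:est}\ref{it:est1}. So both bounds are one-line consequences of $1+t\le e^t$.

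Finally, the consequences. From $\frac{x}{1+x} \le \rho^* \le x$ with $x = \Theta(1/\ell)$ (as $s$ is a constant, $\ln(s+1)$ is a positive constant) we get $\rho^* = \Theta(1/\ell)$. Monotonicity in $\ell$: since $\rho^* = 1-e^{-\ln(s+1)/\ell}$ and $\ln(s+1)/\ell$ is decreasing in $\ell$ while $1-e^{-t}$ is increasing in $t$, $\rho^*(\ell,s)$ is decreasing in $\ell$, so $\rho^* \le \rho^*(1,s) = 1-1/(s+1) = s/(s+1) < 1$, bounded away from $1$ by the constant $1/(s+1)$. For the asymptotics when $\ell = \omega(1)$: then $x = o(1)$, so $\frac{x}{1+x} = x(1-o(1))$, and the sandwich gives $\rho^* = (1-o(1))x = (1-o(1))\ln(s+1)/\ell$. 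There is no real obstacle here — the whole lemma is elementary calculus plus the standard inequality $1+t \le e^t$; the only thing to be a little careful about is tracking that $s$ is treated as a fixed constant so that all $\Theta$ and $o(\cdot)$ statements are with respect to $n$ (equivalently $\ell$) alone, and that the bounds hold uniformly in $\ell$.
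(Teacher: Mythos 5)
Your proposal is correct and follows essentially the same route as the paper: solve $(1-p)^\ell=1/(s+1)$ for the closed form, rewrite $\rho^*=1-e^{-\ln(s+1)/\ell}$, and obtain both bounds from the elementary exponential inequalities of Lemma~\ref{lem:est} (your two applications of $1+t\le e^t$ are equivalent to the paper's use of parts~\ref{it:est1} and~\ref{it:est2}). The additional details you supply (monotonicity in $\ell$ and the explicit value $\rho^*(1,s)=s/(s+1)$) are correct and only make explicit what the paper leaves implicit.
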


\begin{proof}
  The precise value for $\rho^*$ follows right from the definition of $\rho^*$. Rewriting \[\rho^* = 1 - (s+1)^{-1/\ell} = 1 - \exp(-\tfrac 1 \ell \ln(s+1))\] and using the estimates from Lemma~\ref{lem:est}~\ref{it:est1} and~\ref{it:est2}, we obtain the claimed bounds for $\rho^*$. If $\ell = \omega(1)$, then $\frac{\ln(s+1)}{\ell} = o(1)$ and both bounds coincide apart from lower order terms, that is, we have $\rho^* = (1-o(1)) \frac{\ln(s+1)}{\ell}$. 
\end{proof}

We next show that, for $\ell \ge 1$, the success probability $p_{\text{suc}}(\rho,\ell)=(1-\rho)^\ell$ changes by a factor of $(1 \pm \Omega(\delta))$ when we replace the target mutation rate $\rho^*$ by $\rho^* \pm \delta$. Note that for $\ell = 0$, we have $p_{\text{suc}}(\rho,\ell) = 1$ for all $\rho$.

\begin{lemma}[success probabilities around $\rho^*$]\label{lem:successprob}
  Let $\ell \ge 1$. Let $p_{\text{suc}}(\rho) := p_{\text{suc}}(\rho,\ell) = (1 - \rho)^\ell$ for all $\rho \in [0,1]$. 
  \begin{itemize}
  \item For all $\delta > 0$ such that $(1+\delta)\rho^* \le 1$, we have \[p_{\text{suc}}((1+\delta) \rho^*) \le p_{\text{suc}}(\rho^*) (1 - \tfrac 12 \min\{\delta,\tfrac 1{\ln(s+1)}\} \rho^* \ell).\]
  \item For all $0 < \delta \le 1$, we have 
	\[p_{\text{suc}}((1-\delta) \rho^*) \ge p_{\text{suc}}(\rho^*) (1 + \delta \rho^* \ell).\]  
  \end{itemize}
\end{lemma}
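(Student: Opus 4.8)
The plan is to work with $p_{\text{suc}}(\rho) = (1-\rho)^\ell$ and to exploit the defining property $p_{\text{suc}}(\rho^*) = (1-\rho^*)^\ell = 1/(s+1)$, equivalently $-\ell \ln(1-\rho^*) = \ln(s+1)$. The natural move is to pass to logarithms: writing $p_{\text{suc}}((1\pm\delta)\rho^*) / p_{\text{suc}}(\rho^*) = \left(\frac{1-(1\pm\delta)\rho^*}{1-\rho^*}\right)^\ell = \left(1 \mp \frac{\delta\rho^*}{1-\rho^*}\right)^\ell$, so the whole question reduces to estimating $(1+u)^\ell$ for $u = \mp \delta\rho^*/(1-\rho^*)$. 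For the lower bound (the $(1-\delta)\rho^*$ case) I would use Bernoulli's inequality, Lemma~\ref{lem:est}\ref{it:estbernoulli}: $(1+u)^\ell \ge 1 + \ell u$ with $u = \delta\rho^*/(1-\rho^*) \ge \delta\rho^*$, giving $p_{\text{suc}}((1-\delta)\rho^*) \ge p_{\text{suc}}(\rho^*)(1 + \delta\rho^*\ell)$ directly; this is the easy half.

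For the upper bound (the $(1+\delta)\rho^*$ case) I need an inequality in the other direction, so Bernoulli is not enough. Here $u = -\frac{\delta\rho^*}{1-\rho^*} < 0$, and I want $(1+u)^\ell \le 1 - \frac12\min\{\delta, 1/\ln(s+1)\}\rho^*\ell$. The route is: $(1+u)^\ell \le e^{\ell u} = \exp\!\left(-\frac{\delta\rho^*\ell}{1-\rho^*}\right) \le \exp(-\delta\rho^*\ell)$ by Lemma~\ref{lem:est}\ref{it:est1}. Now I want to turn this $\exp(-x)$ with $x = \delta\rho^*\ell$ into $1 - \frac{x}{2}$ — which is exactly Lemma~\ref{lem:est}\ref{it:est3}, valid for $x \in [0,1]$. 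The subtlety, and the reason the $\min$ with $1/\ln(s+1)$ appears, is that $x = \delta\rho^*\ell$ need not be $\le 1$: since $\rho^*\ell \le \ln(s+1)$ (from $\rho^*\le \ln(s+1)/\ell$, Lemma~\ref{lem:rhostar}) but not necessarily much smaller, $x$ can be as large as $\delta\ln(s+1)$, which exceeds $1$ when $\delta$ is large. So I would split into two cases: if $\delta\rho^*\ell \le 1$, apply Lemma~\ref{lem:est}\ref{it:est3} with $x = \delta\rho^*\ell$ to get $e^{-x} \le 1 - \frac12\delta\rho^*\ell$, and since $\delta \le 1/\ln(s+1)$ forces the min to be $\delta$ this matches the claimed bound (and when the min is $1/\ln(s+1) \le \delta$ we still have $\delta\rho^*\ell \ge \rho^*\ell/\ln(s+1)\cdot$... so need care); if $\delta\rho^*\ell > 1$, replace $\delta$ in the exponent by the threshold value $1/(\rho^*\ell) \le 1/\ln(s+1)\cdot$... — more precisely, monotonicity of $\exp(-\delta\rho^*\ell)$ in $\delta$ lets me lower $\delta$ down to $1/(\rho^*\ell)$, at which point $x=1$, Lemma~\ref{lem:est}\ref{it:est3} gives $e^{-1}\le 1-\frac12$, and since $1/(\rho^*\ell)\ge 1/\ln(s+1)$ the factor $1-\frac12 = 1-\frac12\cdot 1 \le 1 - \frac12 \cdot \frac{1}{\ln(s+1)}\rho^*\ell$ — matching the min being $1/\ln(s+1)$. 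Reconciling these cases with the exact form $\min\{\delta, 1/\ln(s+1)\}\rho^*\ell$ is the fiddly part.

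The main obstacle is thus purely bookkeeping in the upper bound: choosing the case split so that in every regime the effective exponent $\min\{\delta\rho^*\ell, 1\}$ (after clamping $\delta$) is correctly bounded below by $\frac12\min\{\delta, 1/\ln(s+1)\}\rho^*\ell$, using $\rho^*\ell \le \ln(s+1)$. I expect no genuinely hard analytic step — everything reduces to the three elementary estimates $1+x\le e^x$, $e^{-x}\le 1-x/2$ on $[0,1]$, and Bernoulli — but the inequality is stated with a precise constant and a $\min$, so the write-up must track which term is active. I would present the lower bound first (one line), then the upper bound with the two-case analysis, keeping $\rho^* = 1-(s+1)^{-1/\ell}$ and $\rho^*\ell \le \ln(s+1)$ as the only facts imported from Lemma~\ref{lem:rhostar}.
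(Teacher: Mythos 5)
Your plan is correct and essentially identical to the paper's proof: Bernoulli for the lower bound, and the chain $\bigl(1-\tfrac{\delta\rho^*}{1-\rho^*}\bigr)^\ell \le (1-\delta\rho^*)^\ell \le e^{-\delta\rho^*\ell} \le 1-\tfrac12\delta\rho^*\ell$ for the upper bound, with the $\min$ handled by a monotonicity/clamping argument. The paper organizes the fiddly part slightly more cleanly by using monotonicity of $p_{\text{suc}}$ at the outset to assume $\delta \le 1/\ln(s+1)$ (so that $\delta\rho^*\ell \le 1$ always holds and Lemma~\ref{lem:est}\ref{it:est3} applies directly), rather than splitting on $\delta\rho^*\ell \lessgtr 1$, but the two bookkeeping schemes are equivalent.
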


\begin{proof}
   Since $\rho \mapsto p_{\text{suc}}(\rho)$ is non-increasing, we may assume $\delta \le \tfrac{1}{\ln(s+1)}$ to prove the first claim. We then have $\ell\delta\rho^* \le 1$, see~Lemma~\ref{lem:rhostar}, and compute
  \begin{align*}
  \frac{p_{\text{suc}}((1+\delta) \rho^*)}{p_{\text{suc}}(\rho^*)} &= \left(\frac{1-(1+\delta) \rho^*}{1-\rho^*}\right)^\ell = \left(1 - \frac{\delta \rho^*}{1-\rho^*}\right)^\ell \le (1 - \delta \rho^*)^\ell \\
  &\le \exp(-\ell\delta\rho^*) \le 1 - \tfrac 12 \ell \delta \rho^*
  \end{align*}
  using Lemma~\ref{lem:est}~\ref{it:est1} and~\ref{it:est2}.
  
  For the second claim, we simply compute
  \begin{align*}
  \frac{p_{\text{suc}}((1-\delta) \rho^*)}{p_{\text{suc}}(\rho^*)} &= \left(\frac{1-(1-\delta) \rho^*}{1-\rho^*}\right)^\ell = \left(1 + \frac{\delta \rho^*}{1-\rho^*}\right)^\ell \ge (1 + \delta \rho^*)^\ell \ge 1 + \ell \delta \rho^*,
  \end{align*}
  where the last estimate stems from Bernoulli's inequality (Lemma~\ref{lem:est}~\ref{it:estbernoulli}).
\end{proof}

From the previous lemma we will next derive that we have an at least multiplicative drift~\cite{DoerrJW12} towards a small interval around the target rate $\rho^*(\ell,s)$, which allows to prove upper bounds for the time to enter such an interval. For convenience, we show a bound that holds with probability $1 - \frac 1n$ even though we shall later only need a failure probability of $o(1)$.

To ease the analysis of the mutation rate adjustment, we shall here and in a few further lemmas regard the variant of the self-adjusting \oea which, in case it generates an improving solution, does not accept this solution, but instead continues with the parent. It is clear that the mutation rate behaves identical in this variant and in the original self-adjusting \oea until the point when an improving solution is generated. We call this EA the \emph{self-adjusting \oea ignoring improvements}.

\begin{lemma}\label{lem:approachrho}
  Assume that the self-adjusting \oea is started with a search point of fitness $\ell \ge 1$ and with the initial mutation rate $\rho_0 \in [\pmin,\pmax]$ with $\pmin \in o(n^{-1}) \cap \Omega(n^{-c})$ and $\pmax = 1$. Let $\rho^* = \rho^*(\ell,s)$. Let $\delta = \omega(\eps) \cap o(1)$. 
  For 
  \[t := (1+o(1)) \frac{2(\max\{\rho_0 / \rho^*, \rho^*/\rho_0\} + \ln(n))}{\delta \rho^* \ell \eps} = \Theta\left(\frac{\log n}{\delta\eps}\right),\] 
  the time $T^*$ until a search point with higher fitness is generated or the mutation rate $\rho_{T^*}$ is in $[(1-\delta) \rho^*, (1+\delta) \rho^*]$ satisfies 
  \[\Pr\left[T^* \ge t\right] \le \tfrac 1 {n}.\]
  
  For $\ell = 0$, we have that within $\lceil \frac 1s \log_{1+\eps} \frac 1 {\rho_0} \rceil+1 = O(\frac{\log n}{\eps})$ iterations with probability one an improvement is found.
\end{lemma}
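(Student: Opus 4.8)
The plan is to establish a multiplicative drift of the mutation rate towards the target interval $[(1-\delta)\rho^*, (1+\delta)\rho^*]$, and then invoke the multiplicative drift theorem to get the tail bound on the hitting time $T^*$. First I would fix a potential function that linearizes the multiplicative structure: since the rate is updated by $\rho \mapsto F^s\rho$ on success and $\rho\mapsto\rho/F$ on failure, the natural quantity is $\phi_t := \log_{F}(\rho_t/\rho^*)$, or equivalently $|\ln(\rho_t/\rho^*)| / \ln F$. The increments of $\phi_t$ are then $+s$ (success) or $-1$ (failure), truncated at the boundaries $\pmin,\pmax$. The key is that the conditional expected increment of $\phi_t$, given $\rho_t$, is negative and bounded away from zero in a way proportional to the current distance from the target — this is exactly where Lemma~\ref{lem:successprob} enters: when $\rho_t = (1+\delta')\rho^*$ with $\delta'>0$, the success probability is at most $p_{\text{suc}}(\rho^*)(1-\tfrac12\min\{\delta',1/\ln(s+1)\}\rho^*\ell) = \tfrac{1}{s+1}(1-\Omega(\delta'\rho^*\ell))$, so the expected increment $s\cdot p_{\text{suc}} - 1\cdot(1-p_{\text{suc}})$ is $-\Omega(\delta'\rho^*\ell)$ below zero (relative to the neutral value $0$ achieved exactly at $\rho^*$); symmetrically, when $\rho_t$ is below $\rho^*$ by a factor $1-\delta'$, Lemma~\ref{lem:successprob} gives $p_{\text{suc}} \ge \tfrac{1}{s+1}(1+\delta'\rho^*\ell)$, pushing $\rho$ back up. Translating this drift of $\phi_t$ back: the distance $\log_F$-distance to the target shrinks in expectation by a factor $1-\Omega(\delta\rho^*\ell\ln F) = 1-\Omega(\delta\rho^*\ell\eps)$ per step once we are outside the target interval. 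Since the maximal initial $\log_F$-distance is $\log_F\max\{\rho_0/\rho^*,\rho^*/\rho_0\} = \Theta(\log n / \eps)$ (using $\pmin \in \Omega(n^{-c})$ and $\rho^* = \Theta(1/\ell)$ from Lemma~\ref{lem:rhostar}), the multiplicative drift theorem yields an expected hitting time of $O\bigl(\tfrac{\log(\text{initial distance})}{\delta\rho^*\ell\eps}\bigr)$ and, with the standard tail bound for multiplicative drift, $\Pr[T^* \ge t] \le 1/n$ for the stated $t$.

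There are a few technical points to handle carefully. One is the boundary behaviour at $\pmin$ and $\pmax=1$: the truncation $\rho\mapsto\max\{\rho/F,\pmin\}$ only helps (it cannot increase the distance to $\rho^*$ once $\rho^* > \pmin$, which holds since $\rho^* = \Theta(1/\ell)$ and $\pmin = o(1/n)$), and at the upper end $\rho^* < 1$ is bounded away from $1$ by a constant (Lemma~\ref{lem:rhostar}), so for $n$ large the walk stays in a region where the drift estimates apply; the potential should be defined so that capping merely decreases it. A second point is that the multiplicative drift theorem requires the state space to be discrete or the progress to be lower-bounded; here $\phi_t$ takes values in a lattice generated by steps $\{+s,-1\}$ shifted by $\log_F(\rho_0/\rho^*)$, so $\phi_t$ is supported on a discrete set with a positive minimum nonzero value $\min\{1,s\}$ (in $\phi$-units, i.e.\ $\min\{1,s\}\ln F$ in log-units), which is exactly what the drift theorem needs. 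A third point is that we analyze the \emph{self-adjusting \oea ignoring improvements}, as introduced just before the lemma, so the rate process is autonomous (depends only on its own past through $\rho_t$ and fresh randomness) up to the first improvement; stopping $T^*$ also at the first improvement makes this rigorous.

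The main obstacle I expect is getting the drift coefficient in the right form — specifically, verifying that the expected decrease of the distance-potential is genuinely \emph{multiplicative} (proportional to the current distance) across the whole range, not just near the target. Far from the target, $\delta'$ is large, but then Lemma~\ref{lem:successprob} caps the relative success-probability change at $\tfrac12/\ln(s+1)\cdot\rho^*\ell = \Theta(1)$, so the per-step additive drift of $\phi_t$ becomes $-\Omega(1)$ rather than $-\Omega(\delta'\rho^*\ell)$ — i.e.\ even better than multiplicative. Splitting the analysis into the ``far'' regime (constant additive drift on $\phi$, so $O(\text{initial distance})$ steps... wait, that would be too slow) versus the ``near'' regime requires care: the $\log n/\eps$ initial distance must be traversed by the \emph{fast} $\Omega(1)$-drift-per-step regime in $O(\log n/\eps)$ steps, and only the final approach within a constant factor of $\rho^*$ down to $(1\pm\delta)\rho^*$ uses the multiplicative $1-\Omega(\delta\eps)$ contraction, contributing the $O(\log(1/\delta)/(\delta\eps))$-type term; combining and matching these against the claimed $t = \Theta(\log n/(\delta\eps))$ is the delicate bookkeeping. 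Once the potential and the two-regime drift bounds are set up cleanly, the rest is a routine application of the multiplicative (and additive) drift theorems with tail bounds. The $\ell=0$ case is separate and trivial: $p_{\text{suc}}(\rho,0)=1$ always, so every iteration is a success, $\rho$ is multiplied by $F^s$ each step, and after $\lceil\tfrac1s\log_{1+\eps}\tfrac1{\rho_0}\rceil$ steps it reaches $\pmax=1$; since mutation rate $1$ flips all bits, the all-ones suffix on level $0$ is produced with probability $1$ (actually any fitness increase suffices), hence an improvement within one more iteration, giving the claimed $O(\log n/\eps)$ deterministic bound.
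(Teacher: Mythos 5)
Your overall architecture (the ``ignoring improvements'' device, Lemma~\ref{lem:successprob} as the source of the drift, the separate trivial treatment of $\ell=0$, the observation that the caps at $\pmin$ and $\pmax$ can only help) matches the paper, but your core drift argument takes a different and, as written, incomplete route. The paper does not work with the logarithmic potential $\phi_t=\log_F(\rho_t/\rho^*)$ at all: it applies multiplicative drift directly to the \emph{value} $\rho_t$ (and to $1/\rho_t$ in the symmetric case). Since $p_{\text{suc}}$ is monotone decreasing, the single estimate $p_{\text{suc}}(\rho)\le p_{\text{suc}}(\rho^*)(1-\tfrac12\delta\rho^*\ell)$ holds uniformly for \emph{all} $\rho\ge(1+\delta)\rho^*$, which yields $E[\rho_{t+1}]\le\rho_t\bigl(1-\tfrac12\delta\rho^*\ell\eps+O(\eps^2)\bigr)$ with one uniform contraction factor; a truncated process plus Markov's inequality then gives the $1/n$ tail with $\ln(\rho_0/\rho^+)+\ln n$ in the numerator. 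No regime splitting is needed, and the $1/\delta$ in $t$ is exactly the price of using the weakest drift everywhere.

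The concrete gap in your plan is the assertion that the $\log_F$-distance ``shrinks in expectation by a factor $1-\Omega(\delta\rho^*\ell\eps)$ per step once we are outside the target interval.'' This is false in the far regime: at $\rho_t=(1+\delta')\rho^*$ the additive drift of $\phi_t$ is only $-\Omega(\min\{\delta',1/\ln(s+1)\}\rho^*\ell)=-\Omega(1)$ once $\delta'$ exceeds a constant, while $\phi_t$ can be as large as $\Theta(\log n/\eps)$; a genuine $\bigl(1-\Omega(\delta\eps)\bigr)$-contraction would require a per-step decrease of $\Omega(\delta\log n)$, which exceeds $\Omega(1)$ whenever $\delta=\omega(1/\log n)$ --- a choice the hypotheses permit. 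Your parenthetical ``even better than multiplicative'' is therefore backwards in that regime, and your subsequent ``that would be too slow'' aside is also mistaken (constant additive drift over distance $O(\log n/\eps)$ costs $O(\log n/\eps)=o(t)$ steps, which is affordable). You correctly sense that a two-regime analysis is needed, but you defer precisely that analysis --- the entire content of the proof --- as ``delicate bookkeeping'': one must still fix the thresholds, handle re-entries between regimes, note that the near-regime contraction of $\phi_t$ is actually $1-\Omega(\eps)$ rather than $1-\Omega(\delta\eps)$, and assemble a single $1/n$ tail bound for the combined process. Either carry that out, or adopt the paper's one-line shortcut of running the multiplicative drift on $\rho_t$ itself.
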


\begin{proof}
  For $\ell = 0$, each iteration of the self-adjusting \oea is a success. Consequently, the mutation rate is multiplied by $(1+\eps)^s$ in each iteration until the maximum possible value of $1$ is reached or an improvement is found. Once the mutation rate is $1$, surely the next iteration gives an improvement. This easily gives the claimed result. 
  
  Hence let us concentrate on the more interesting case that $\ell \ge 1$. We shall again use the shorthand $p_{\text{suc}}(\rho) := p_{\text{suc}}(\rho,\ell)$ for all $\rho$. We regard the \oea ignoring improvements instead of the original EA. This does not change the time $T^*$, the first time an improving solution is generated or the rate enters the interval $[(1-\delta) \rho^*, (1+\delta) \rho^*]$. However, it eases the definition of $T := \min\{t \in \N \mid \rho_t \in [(1-\delta) \rho^*, (1+\delta) \rho^*]\}$, which obviously stochastically dominates~$T^*$. Hence we proceed by proving upper bounds for $T$.

  Assume first  
  that $\rho_0 > \rho^+ := (1+\delta) \rho^*$. In this case, since $\delta = \omega(\eps)$ and $n$ is sufficiently large, we have $T = \min\{t \in \N \mid \rho_t \le (1+\delta) \rho^*\}$. By Lemma~\ref{lem:successprob}, we have $p_{\text{suc}}(\rho) \le p_{\text{suc}}(\rho^*) (1 - \tfrac 12 \delta \rho^* \ell)$ for all $\rho \ge \rho^+$. We use this to compute the expected change of the mutation rate. When $\rho_t \ge \rho^+$, then by the statement just derived from Lemma~\ref{lem:successprob} we have 
  \begin{align*}
  E[\rho_{t+1}] &= p_{\text{suc}}(\rho_t) \rho_t (1+\eps)^s + (1-p_{\text{suc}}(\rho_t)) \rho_t (1+\eps)^{-1} \\
  &\le p_{\text{suc}}(\rho^*) (1 - \tfrac 12 \delta \rho^* \ell) \rho_t (1+\eps)^s + (1-p_{\text{suc}}(\rho^*) (1 - \tfrac 12 \delta \rho^* \ell)) \rho_t (1+\eps)^{-1} \\
  &\le \rho_t \left(p_{\text{suc}}(\rho^*) (1 - \tfrac 12 \delta \rho^* \ell) (1 + s\eps + O(\eps^2)) + (1-p_{\text{suc}}(\rho^*) (1 - \tfrac 12 \delta \rho^* \ell)) (1 - \eps + O(\eps^2))\right)\\
  & = \rho_t \left(1 + p_{\text{suc}}(\rho^*)s\eps - (1-p_{\text{suc}}(\rho^*))\eps - p_{\text{suc}}(\rho^*) \cdot \tfrac 12 \delta \rho^* \ell (s+1) \eps + O(\eps^2) \right)\\
  & = \rho_t \left(1 - \tfrac 12 \delta \rho^* \ell \eps + O(\eps^2) \right).
  \end{align*}
Consider now the process $(\tilde\rho_t)$ defined by $\tilde\rho_t = \rho_t$ for $t < T$ and $\tilde\rho_t = 0$ otherwise. We have again $E[\tilde\rho_{t+1} \mid \tilde\rho_t] \le \tilde\rho_t \left(1 - \tfrac 12 \delta \rho^* \ell \eps + O(\eps^2) \right)$ for all $t$. 
With a simple induction, we see that $E[\tilde\rho_t] \le \rho_0 \left(1 - \tfrac 12 \delta \rho^* \ell \eps + O(\eps^2) \right)^t$. From Lemma~\ref{lem:est}~\ref{it:est1} as well as $\delta = \omega(\eps)$ and $\rho^* \ell = \Theta(1)$, we conclude $E[\tilde\rho_t] \le \rho_0 \exp(- (1-o(1)) \tfrac 12 t \delta \rho^* \ell \eps)$. For $t = (1+o(1)) 2 \frac{\ln(\rho_0/\rho^+) + \ln(n)}{\delta \rho^* \ell \eps}$, we have $E[\tilde\rho_t] \le \frac{\rho^+}{n}$ and thus $\Pr[T \ge t] = \Pr[\tilde\rho_t \ge \rho^+] \le \frac 1 {n}$ by Markov's inequality.

The case that $\rho_0 < \rho^- := (1-\delta)\rho^*$ is mostly similar except that we now regard the reciprocal of the rate. Using Lemma~\ref{lem:successprob} we compute, conditional on $\rho_t \le \rho^+$,
\begin{align*}
E[\tfrac{1}{\rho_{t+1}}]
& =  p_{\text{suc}}(\rho_t) \tfrac{1}{\rho_t} (1+\eps)^{-s} + (1-p_{\text{suc}}(\rho_t)) \tfrac{1}{\rho_t} (1+\eps)\\
&\le \tfrac{1}{\rho_t} \left(p_{\text{suc}}(\rho^*)(1+\delta\rho^*\ell) (1+\eps)^{-s} + (1-p_{\text{suc}}(\rho^*)(1+\delta\rho^*\ell))  (1+\eps)\right)\\
&\le \tfrac{1}{\rho_t} \left(p_{\text{suc}}(\rho^*)(1+\delta\rho^*\ell) (1 - s\eps + O(\eps^2)) + (1-p_{\text{suc}}(\rho^*)(1+\delta\rho^*\ell)) (1+\eps) \right)\\
&\le \tfrac{1}{\rho_t} \left(1 - \delta\rho^*\ell\eps + O(\eps^2)\right).
\end{align*} 
Now a drift argument analogous to above shows that for $t = (1+o(1))\frac{\ln(\rho^-/\rho_0)+\ln(n)}{\delta\rho^*\ell\eps}$, we have $\Pr[T \ge t] \le \frac 1 {n}$.
\end{proof}

Next we will show that the mutation rate of the self-adjusting \oea ignoring improvements (as defined before Lemma~\ref{lem:approachrho}) is likely to stay within a small interval around~$\rho^*(\ell,s)$.

\begin{lemma}\label{lem:rhoisgood}
  Let $\delta = o(1)$ be such that $\delta / \ln(1/\delta) = \omega(\eps)$. There is a $\gamma = o(1)$ such that the following is true. Let $\ell \in [1..n]$, $\rho^* := \rho^*(\ell,s)$, and $\rho_0 \in [(1-\delta) \rho^*, (1+\delta) \rho^*]$. Consider a run of the self-adjusting \oea ignoring improvements, started with a search point of fitness $\ell$ and with the initial mutation rate $\rho_0$. Denote the mutation rate after the adjustment made in iteration $t$ by $\rho_t$. Then for any $T = \omega(1)$, with probability $1-o(1)$ we have 
  \[|\{t \in [1..T] \mid  \rho_t \notin [(1-\gamma) \rho^*, (1+\gamma) \rho^*]\}| = o(T).\]
\end{lemma}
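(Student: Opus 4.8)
The plan is to follow the self-adjusting rate in multiplicative (logarithmic) coordinates, where the update becomes an additive random walk with a drift toward the target rate $\rho^* = \rho^*(\ell,s)$, and then to feed this into the occupation-probability lemma (Lemma~\ref{lem:occprob}) to bound how often the rate strays far from $\rho^*$. Throughout, note that in the \oea ignoring improvements the search point never changes, so $p_{\text{suc}}(\rho_t) = (1-\rho_t)^\ell$ in every iteration, and recall from Lemma~\ref{lem:rhostar} that $\rho^*\ell = \Theta(1)$ uniformly over all $\ell \ge 1$. Write $L_t := \log_{1+\eps}(\rho_t/\rho^*)$; ignoring the truncations at $\pmin$ and $\pmax = 1$, one iteration changes $L_t$ by $+s$ with probability $p_{\text{suc}}(\rho_t)$ and by $-1$ otherwise, and at $\rho_t = \rho^*$ the expected change is $(s+1) p_{\text{suc}}(\rho^*) - 1 = 0$, while it is negative above $\rho^*$ and positive below.

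First I would set up two one-sided drift processes with cut-off $\delta$ itself: the upper excursion $X_t := \log_{1+\eps}(\rho_t/((1+\delta)\rho^*))$ and the lower excursion $X'_t := \log_{1+\eps}((1-\delta)\rho^*/\rho_t)$, both of which start at a non-positive value by the hypothesis $\rho_0 \in [(1-\delta)\rho^*,(1+\delta)\rho^*]$. When $X_t \ge 0$, i.e. $\rho_t \ge (1+\delta)\rho^*$, Lemma~\ref{lem:successprob} (using $\delta \le 1/\ln(s+1)$ for large $n$ and monotonicity of $p_{\text{suc}}$) gives $p_{\text{suc}}(\rho_t) \le \tfrac{1}{s+1}(1 - \tfrac12\delta\rho^*\ell)$, so the step of $X_t$ is dominated by the two-point variable that is $+s$ with probability $\tfrac{1}{s+1}(1-\tfrac12\delta\rho^*\ell)$ and $-1$ otherwise, whose expectation is $-\tfrac12\delta\rho^*\ell \le -\Delta$ for $\Delta := \tfrac12\tfrac{\ln(s+1)}{1+\ln(s+1)}\,\delta$; truncation at $\pmax$ only shrinks the up-step, and truncation at $\pmin$ cannot occur here since $\rho_t \ge (1+\delta)\rho^* = \omega(\pmin)$, so the domination is unaffected. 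The mirror computation via the second bullet of Lemma~\ref{lem:successprob} gives the same drift bound $\le -\Delta$ for $X'_t$ whenever $X'_t \ge 0$; now the only relevant truncation is at $\pmin$, which can only raise $\rho_t$ relative to the untruncated walk and hence only decreases $X'_t$, again not hurting the domination. After adding an independent bounded perturbation that raises the expected step to exactly $-\Delta$ while keeping the step size at most $S := s+1$, both $(X_t)$ and $(X'_t)$ satisfy the hypotheses of Lemma~\ref{lem:occprob} applied with its parameters $s := S$ and $\delta := \Delta$ (and $\Delta = o(1) \le \sqrt2\,S$).

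Next I would convert the occupation bound into the claim. Put $U^* := 6\tfrac{S^2}{\Delta}\ln\tfrac1\Delta + S$. Lemma~\ref{lem:occprob} then gives, for all $t$, $\Pr[X_t \ge U^*] \le \psi(n)$ and $\Pr[X'_t \ge U^*] \le \psi(n)$ with $\psi(n) := 6\Delta S^2\ln\tfrac1\Delta + \Delta^3 + 4\Delta S^2 = O(\delta\ln\tfrac1\delta) = o(1)$, uniformly in $\ell$ since $\Delta,S$ depend only on $s,\delta$. Translating back, $X_t \ge U^* \iff \rho_t \ge (1+\delta)(1+\eps)^{U^*}\rho^*$ and $X'_t \ge U^* \iff \rho_t \le (1-\delta)(1+\eps)^{-U^*}\rho^*$. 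From $\delta/\ln(1/\delta) = \omega(\eps)$ we get $\eps\,U^* = O(\tfrac{\eps}{\delta}\ln\tfrac1\delta) = o(1)$, hence $(1+\eps)^{\pm U^*} = 1 \pm o(1)$, so $\gamma := \max\{(1+\delta)(1+\eps)^{U^*}-1,\; 1-(1-\delta)(1+\eps)^{-U^*}\}$ is $o(1)$ and uniform in $\ell$, and with this $\gamma$ one has $\rho_t \notin [(1-\gamma)\rho^*,(1+\gamma)\rho^*] \Rightarrow (X_t \ge U^* \text{ or } X'_t \ge U^*)$. Writing $N_T$ for the cardinality in the statement, linearity of expectation over $t \in [1..T]$ gives $E[N_T] \le \sum_{t=1}^T (\Pr[X_t \ge U^*] + \Pr[X'_t \ge U^*]) \le 2T\psi(n)$, and Markov's inequality yields $\Pr[N_T \ge 2\sqrt{\psi(n)}\,T] \le \sqrt{\psi(n)} = o(1)$; since $2\sqrt{\psi(n)}\,T = o(T)$, this gives $N_T = o(T)$ with probability $1-o(1)$.

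I expect the main obstacle to be the clean set-up for Lemma~\ref{lem:occprob}. The per-iteration step distribution depends on the actual value of $\rho_t$, not just on the sign of $X_t$, so one must dominate by the least favourable rate in the region $\{X_t \ge 0\}$ and then, crucially, massage the resulting drift into a single value $-\Delta$ that is uniform in $\ell$ — this is exactly where $\rho^*\ell = \Theta(1)$ (Lemma~\ref{lem:rhostar}) enters. Second, one must check, for both processes, that the truncations at $\pmin$ and $\pmax$ never weaken any domination or drift estimate (intuitive but needs verifying case by case). The remaining ingredients — the drift estimates from Lemma~\ref{lem:successprob}, the translation between multiplicative windows around $\rho^*$ and the threshold $U^*$, and the final averaging — are routine.
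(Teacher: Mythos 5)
Your proposal is correct and follows essentially the same route as the paper's proof: pass to logarithmic coordinates, use Lemma~\ref{lem:successprob} to get a negative drift for the upper and lower excursion processes, apply the occupation-probability Lemma~\ref{lem:occprob}, translate the threshold $U$ back into a multiplicative window $1\pm\gamma$ (with $\gamma=o(1)$ thanks to $\delta/\ln(1/\delta)=\omega(\eps)$), and finish with linearity of expectation and Markov. The only cosmetic difference is that you normalize the drift to a uniform $-\Delta$ up front via $\rho^*\ell\ge\ln(s+1)/(1+\ln(s+1))$, whereas the paper applies the lemma with the $\ell$-dependent drift $\tfrac12\delta\rho^*\ell$ and argues the uniformity of $\gamma$ over $\ell$ at the end using $\rho^*\ell=\Theta(1)$.
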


\begin{proof}
  Let $\ell \ge 1$ in the remainder. Let $T = \omega(1)$. We first argue that we have $\rho_t \ge (1+\gamma)\rho^*$, for a $\gamma$ made precise below, only for a sub-linear number of the $t \in [1..T]$. 
 
  We consider the random process $X_t := \log_{1+\eps} (\rho_t / \rho^+)$, where $\rho^+ := (1+\delta) \rho^*$. By assumption, $X_0 \le 0$. If $X_t \ge 0$ for some $t$, then $\rho_t \ge (1+\delta) \rho^*$. By Lemma~\ref{lem:successprob}, we have $\rho_{t+1} = \rho_t (1+\eps)^s$ with probability at most $\frac{1}{s+1}(1 - \frac 12 \delta \rho^* \ell) =: p$, and we have $\rho_{t+1} = \rho_t (1+\eps)^{-1}$ otherwise. Consequently, we have $\Pr[X_{t+1} = X_t + s] \le  p$ and $\Pr[X_{t+1} = X_t -1] = 1 - \Pr[X_{t+1} = X_t + s]$. Let $D$ be a random variable taking the value $+s$ with probability $p$ and the value $-1$ with probability $1-p$. Then, regardless of the outcomes of $X_1, \dots, X_t$ (but still assuming $X_t \ge 0$), we have $X_{t+1} \preceq X_t + D$. We compute $E[D] = ps - (1-p) = -\frac 12 \delta \rho^* \ell$ and observe $|D| \le s$. If $X_t < 0$, we still know that $X_{t+1} - X_t$ takes only the values $-1$ and $s$. 
  
  Consequently, the process $(X_t)$ satisfies the assumptions of Lemma~\ref{lem:occprob} (with $\phi = \frac 12 \delta \rho^* \ell$). Taking  $U = 6 \frac{s^2}{\phi} \ln(\frac 1{\phi}) + s$, we have $\Pr[X_{t} \ge U] = o(1)$ for all $t \in [1..T]$. Let $\gamma^+$ be such that $1+\gamma^+ = (1+\delta)(1+\eps)^U$. Note that by our assumptions on $\delta$ and $\eps$, we have $(1+\gamma^+) \le \exp(\delta + U\eps) = \exp(o(1)) = 1 + o(1)$, that is, $\gamma^+ = o(1)$. By construction, $\Pr[\rho_{t} \ge (1+\gamma^+)\rho^*] = \Pr[X_t \ge U] = o(1)$ for all $t \in [1..T]$. By linearity of expectation, $E_0 := E[|\{t \in [1..T] \mid  \rho_t \ge (1+\gamma^+) \rho^*\}|] = o(T)$. Setting somewhat arbitrarily $\nu := \sqrt{E_0/T} = o(1)$, by Markov's inequality we have $|\{t \in [1..T] \mid  \rho_t \ge (1+\gamma^+) \rho^*\}| < \nu T =o(T)$ with probability at least $1-E_0/(\nu T) = 1-\nu = 1-o(1)$.

  We now argue that $\rho_t$ can not be too small too often either. Let $\rho^- = (1-\delta)\rho^*$ and consider the random process $(X_t)$ defined by $X_t = \log_{1+\eps}(\rho^- / \rho_t)$. Since $\rho_0 \ge (1-\delta)\rho^*$, we have $X_0 \le 0$. If $X_t \ge 0$ for some $t$, then $\rho_t \le \rho^-$ and Lemma~\ref{lem:successprob} shows that we have $\rho_{t+1} = \rho_t (1+\eps)^s$ and hence $X_{t+1} = X_t - s$ with probability at least $\frac 1 {s+1} (1 + \delta \rho^* \ell) =: p$. Otherwise, we have $\rho_{t+1} = \rho_t (1+\eps)^{-1}$ and $X_{t+1} = X_t + 1$. Consequently, $X_{t+1}$ is stochastically dominated by $X_t + D$, where $D$ is such that $\Pr[D = -s] = p$ and $\Pr[D = 1] = 1-p$. We have $E[D] = -sp + (1-p) = -\delta\rho^*\ell$ and $|D| \le s$. If $X_t < 0$, we still have that $X_{t+1} - X_t$ is a discrete random variable with values in $[-s,s]$. 

  Consequently, the process $(X_t)$ again satisfies the assumptions of Lemma~\ref{lem:occprob}, now with $\phi = \delta \rho^* \ell$. With  $U = 6 \frac{s^2}{\phi} \ln(\frac 1{\phi}) + s$ and $\gamma^-$ such that $1-\gamma^- = (1-\delta)(1+\eps)^{-U}$, we have $\gamma^- = o(1)$ and $\Pr[\rho_{t} \le (1-\gamma^-)\rho^*] = \Pr[X_{t} \ge U] = o(1)$ for all $t \in [1..T]$. Again by linearity of expectation and Markov's inequality, $|\{t \in [1..T] \mid  \rho_t  \le (1-\gamma^-) \rho^*\}| = o(T)$ with probability $1-o(1)$.

  Taking $\gamma = \max\{\gamma^-,\gamma^+\}$, we have $\gamma = o(1)$ and $|\{t \in [1..T] \mid  \rho_t \notin [(1-\gamma) \rho^*, (1+\gamma) \rho^*]\}| = o(T)$. We note that this definition of $\gamma$ depends on $\ell$. However, the dependence can be expressed as a dependence on $\rho^*\ell$ only. Since $\rho^*\ell = \Theta(1)$ by Lemma~\ref{lem:rhostar}, all the values of $\gamma$ appearing in the above proof for different values of $\ell$ are of the same asymptotic order of magnitude. Hence we can choose $\gamma$ independent of $\ell$. This complete the proof of this lemma.
\end{proof}

We use the following estimate for the improvement probability $p_{\text{imp}}(\rho,\ell) := (1-\rho)^\ell \rho$ of generating a better individual from an individual of fitness $\ell$ via standard-bit mutation with mutation rate $\rho$. 

\begin{lemma}\label{lem:strictimp}
  Let $\ell \in [1..n-1]$ and $\rho^* := \rho^*(\ell,s)$. Let $\gamma = o(1)$ and $\rho \in [(1-\gamma) \rho^*,(1+\gamma) \rho^*]$. Then $p_{\text{imp}}(\rho,\ell) \ge p_{\text{imp}}(\rho^*,\ell) (1 - O(\gamma))$.
\end{lemma}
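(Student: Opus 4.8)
The plan is to factor the improvement probability as $p_{\text{imp}}(\rho,\ell) = \rho \cdot p_{\text{suc}}(\rho,\ell)$ with $p_{\text{suc}}(\rho,\ell)=(1-\rho)^\ell$, and to control the two factors separately relative to their values at $\rho^*$. Writing $\rho = (1+\eta)\rho^*$ with $|\eta|\le\gamma$, I would first observe
\[
\frac{p_{\text{imp}}(\rho,\ell)}{p_{\text{imp}}(\rho^*,\ell)} = (1+\eta)\cdot\frac{p_{\text{suc}}(\rho,\ell)}{p_{\text{suc}}(\rho^*,\ell)} \ge (1-\gamma)\cdot\frac{p_{\text{suc}}(\rho,\ell)}{p_{\text{suc}}(\rho^*,\ell)},
\]
so that it suffices to show $p_{\text{suc}}(\rho,\ell) \ge (1-O(\gamma))\,p_{\text{suc}}(\rho^*,\ell)$.

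For the sub-case $\rho \le \rho^*$ (i.e.\ $\eta \le 0$) this is immediate, since $\rho \mapsto p_{\text{suc}}(\rho,\ell)$ is non-increasing and hence $p_{\text{suc}}(\rho,\ell)/p_{\text{suc}}(\rho^*,\ell) \ge 1$; alternatively one can invoke the second bullet of Lemma~\ref{lem:successprob}. The interesting sub-case is $\rho \ge \rho^*$. Here I would write
\[
\frac{p_{\text{suc}}(\rho,\ell)}{p_{\text{suc}}(\rho^*,\ell)} = \left(\frac{1-\rho}{1-\rho^*}\right)^\ell = \left(1 - \frac{\eta\rho^*}{1-\rho^*}\right)^\ell
\]
and estimate the exponent: since $\rho^* \le \rho^*(1,s)$ is bounded away from $1$ by Lemma~\ref{lem:rhostar}, we have $\frac{\eta\rho^*}{1-\rho^*} = O(\gamma\rho^*)$, and since $\rho^*\ell = \Theta(1)$ (again Lemma~\ref{lem:rhostar}) this gives $\ell\cdot\frac{\eta\rho^*}{1-\rho^*} = O(\gamma\rho^*\ell) = O(\gamma) = o(1)$. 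In particular $\frac{\eta\rho^*}{1-\rho^*} \le 1$ for $n$ large, so Bernoulli's inequality (Lemma~\ref{lem:est}\ref{it:estbernoulli}, applicable because $\ell \ge 1$) yields
\[
\left(1 - \frac{\eta\rho^*}{1-\rho^*}\right)^\ell \ge 1 - \ell\,\frac{\eta\rho^*}{1-\rho^*} = 1 - O(\gamma).
\]
Combining the two sub-cases gives $p_{\text{imp}}(\rho,\ell)/p_{\text{imp}}(\rho^*,\ell) \ge (1-\gamma)(1-O(\gamma)) = 1 - O(\gamma)$.

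I do not expect a genuine obstacle here: the calculation is short once $p_{\text{imp}}$ is factored into $\rho \cdot p_{\text{suc}}$. The only point that needs a little care is ensuring that the error term in the exponent is really $o(1)$ \emph{uniformly in $\ell$} before Bernoulli's inequality is applied, and that the hidden constant in the final $O(\gamma)$ does not depend on $\ell$; this is precisely what the two-sided estimates $\rho^*\ell = \Theta(1)$ and $1-\rho^* = \Theta(1)$ from Lemma~\ref{lem:rhostar} provide.
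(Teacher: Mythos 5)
Your proof is correct and follows essentially the same route as the paper's: factor $p_{\text{imp}}$ as $\rho\,(1-\rho)^\ell$, handle $\rho\le\rho^*$ by monotonicity of $(1-\rho)^\ell$, and for $\rho\ge\rho^*$ bound $\bigl(1-\tfrac{\eta\rho^*}{1-\rho^*}\bigr)^\ell$ using $\rho^*\ell=\Theta(1)$ and $1-\rho^*=\Theta(1)$ from Lemma~\ref{lem:rhostar}. The only cosmetic difference is that you apply Bernoulli's inequality directly where the paper passes through $e^{-x}$-estimates; both give the uniform $1-O(\gamma)$ factor.
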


\begin{proof}
  If $\rho < \rho^*$, then $p_{\text{imp}}(\rho,\ell) = (1-\rho)^\ell \rho > (1-\rho^*)^\ell (1-\gamma) \rho^* = (1-\gamma) p_{\text{imp}}(\rho^*,\ell)$. If $\rho > \rho^*$, then in a similar fashion as in the proof of Lemma~\ref{lem:successprob}, we compute
  \begin{align*}
  \left(\frac{1-(1+\gamma) \rho^*}{1-\rho^*}\right)^\ell & = \left(1 - \frac{\gamma \rho^*}{1-\rho^*}\right)^\ell \ge \exp\left( - \frac{2\gamma \rho^*}{1-\rho^*}\right)^\ell\\
& = \exp\left( - \frac{2\gamma \rho^* \ell}{1-\rho^*}\right)
  \stackrel{\text{Lem.}~\ref{lem:rhostar}}{\ge} \exp\left( - \frac{2\gamma \ln(s+1)}{1-\rho^*}\right)
  \ge 1 -  \frac{2\gamma \ln(s+1)}{1-\rho^*}.
  \end{align*}
  Consequently, 
  \begin{align*}
  p_{\text{imp}}(\rho,\ell) &= (1-\rho)^\ell \rho \ge (1-(1+\gamma) \rho^*)^\ell \rho^* \ge (1 - \rho^*)^\ell \left(1 -  \frac{2\gamma \ln(s+1)}{1-\rho^*}\right) \rho^* \\
  & = \left(1 -  \frac{2\gamma \ln(s+1)}{1-\rho^*}\right) p_{\text{imp}}(\rho^*,\ell).
  \end{align*}
\end{proof}

We now have the necessary prerequisites to show the main ingredient of our running-time analysis: the time needed to leave fitness level $\ell$ is (essentially) at least as good as if the EA would always use the target mutation rate $\rho^*(\ell,s)$. This holds not only with respect to the expectation, but also when regarding distributions.

\begin{lemma}\label{lem:gaintime}
  Let $c$ be a constant and $\pmin \in o(n^{-1}) \cap \Omega(n^{-c})$. Let $\eps = \omega(\frac{\log n}{n}) \cap o(1)$. Let $\delta = o(1)$ be such that $\delta / \ln(\frac{1}{\delta}) = \omega(\eps)$ and $\delta = \omega(\frac{\log n}{n\eps})$. Assume that the self-adjusting \oea is started with a search point of fitness $\ell \in [0..n-1]$ and an arbitrary mutation rate $\rho \ge \pmin$. Let $\rho^* = \rho^*(\ell,s)$. Then the number $T_\ell$ of iterations until a search point with fitness better than $\ell$ is found is stochastically dominated by
  \[T_\ell \preceq o(n) + \Geom(\min\{\omega(\tfrac 1n), (1-o(1))(1-\rho^*)^\ell \rho^*\}).\]
  In particular, $E[T_\ell] \le o(n) + \frac{1}{(1-\rho^*)^\ell \rho^*}$.
\end{lemma}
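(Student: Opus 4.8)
The case $\ell = 0$ follows at once from the second part of Lemma~\ref{lem:approachrho}: within $O(\tfrac{\log n}{\eps}) = o(n)$ iterations (using $\eps = \omega(\tfrac{\log n}{n})$) an improvement is found with certainty, and $p_{\text{imp}}(\rho^*(0,s),0)=1$, so $T_0 \preceq o(n)$ is of the asserted form. Assume $\ell \ge 1$ henceforth and abbreviate $q_0 := (1-o(1))(1-\rho^*)^\ell\rho^*$ for the target geometric rate, noting $q_0 = \Theta(\tfrac 1\ell)$ by Lemma~\ref{lem:rhostar}. Since up to the first strict improvement the mutation rate of the \oea evolves exactly as in the self-adjusting \oea ignoring improvements, Lemmas~\ref{lem:approachrho} and~\ref{lem:rhoisgood} apply throughout the analysis of $T_\ell$.

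I would decompose the run into consecutive \emph{phases}. Fix $\delta = o(1)$ satisfying the hypotheses of Lemma~\ref{lem:rhoisgood} and also $\delta = \omega(\tfrac{\log n}{n\eps})$, so that the bound $t = \Theta(\tfrac{\log n}{\delta\eps})$ of Lemma~\ref{lem:approachrho} is $o(n)$ and, moreover, may be taken uniformly over all rates in $[\pmin,1]$ at the start of a phase (there $\ln\max\{\rho/\rho^*,\rho^*/\rho\} = O(\log n)$). Fix also $T = \omega(t)\cap o(n)$ (feasible since $t = \omega(1)\cap o(n)$). A \emph{phase} consists of a \emph{convergence part} --- running until an improvement occurs or the rate first lies in $I_\delta := [(1-\delta)\rho^*,(1+\delta)\rho^*]$, which by Lemma~\ref{lem:approachrho} takes at most $t$ iterations with probability $\ge 1-\tfrac 1n$, regardless of the rate at the start of the phase --- followed by an \emph{exploitation part} of $T$ further iterations, during which, by Lemma~\ref{lem:rhoisgood}, with probability $1-o(1)$ all but $o(T)$ iterations have rate in $I_\gamma := [(1-\gamma)\rho^*,(1+\gamma)\rho^*]$ for a suitable $\gamma = o(1)$. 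By Lemma~\ref{lem:strictimp}, whenever the rate lies in $I_\gamma$ the improvement probability is at least $(1-O(\gamma))(1-\rho^*)^\ell\rho^* \ge q_0$, and this holds \emph{conditionally on the entire past} since it depends only on the current rate. Hence, on the ``good event'' of a phase (probability $\ge 1-\tfrac 1n - o(1) = 1-o(1)$ uniformly over the past), the exploitation part supplies $\ge (1-o(1))T$ iterations on each of which, conditionally on everything so far, an improvement appears with probability $\ge q_0$.

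To assemble: each phase lasts at most $L := t+T = o(n)$ iterations and produces an improvement with probability at least $p_{\mathrm{ph}} := (1-o(1))\bigl(1-(1-q_0)^{(1-o(1))T}\bigr)$ (the good event, times the chance that one of its $\ge(1-o(1))T$ Bernoulli-$q_0$ trials succeeds), uniformly over the past; so the number $M$ of phases until an improvement is found satisfies $M \preceq \Geom(p_{\mathrm{ph}})$, whence $T_\ell \preceq L\cdot\Geom(p_{\mathrm{ph}})$. Using the elementary domination $L\cdot\Geom(p)\preceq L+\Geom\bigl(1-(1-p)^{1/L}\bigr)$ together with $L=(1+o(1))T$ (as $T=\omega(t)$), one checks that $1-(1-p_{\mathrm{ph}})^{1/L} \ge (1-o(1))\min\{c/T,\,q_0\}$ for an absolute constant $c>0$: when $q_0 T=\Omega(1)$ (small $\ell$), $p_{\mathrm{ph}}=\Omega(1)$ and the rate is $\Omega(1/L)=\Omega(1/T)$; when $q_0 T=o(1)$ (large $\ell$), $p_{\mathrm{ph}}=(1-o(1))Tq_0$ and the rate is $(1-o(1))Tq_0/L=(1-o(1))q_0$. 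Since $c/T=\omega(1/n)$, this yields $T_\ell \preceq o(n)+\Geom\bigl(\min\{\omega(1/n),\,(1-o(1))(1-\rho^*)^\ell\rho^*\}\bigr)$; taking expectations and absorbing the $o(1)$ factors and the $\max$ with $o(n)$ into the error term gives $E[T_\ell]\le o(n)+1/((1-\rho^*)^\ell\rho^*)$.

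\textbf{Main obstacle.} The crux is obtaining a genuine domination statement with the \emph{correct leading rate} $q_0=(1-o(1))(1-\rho^*)^\ell\rho^*$, rather than merely an expectation bound or a domination that loses a constant factor in the geometric rate. This forces the exploitation length $T$ to be a growing multiple of the convergence time $t$ --- so that the per-phase convergence overhead is lower-order relative to the $\approx T$ good iterations it buys --- while still being $o(n)$, so that the total overhead fits inside the additive $o(n)$ term; and it requires converting ``a geometric number of phases, each of length $O(T)$'' into an honest geometric random variable via $L\cdot\Geom(p)\preceq L+\Geom(1-(1-p)^{1/L})$, checking that all the lower-order errors ($\eps$, $\delta$, $\gamma$, the $o(1)$ phase-failure probabilities, the $o(T)$ bad iterations per phase) stay lower-order after being summed over a random number of phases. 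The phase/restart structure itself is there to sidestep the fact that the mutation rate is not resampled independently each iteration, which is what blocks a direct geometric-waiting-time argument on each fitness level.
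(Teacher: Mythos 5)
Your proposal is correct and follows essentially the same route as the paper's proof: the same phase decomposition into a convergence part (Lemma~\ref{lem:approachrho}) of length $t=o(n)$ followed by an exploitation part of length $\omega(t)\cap o(n)$ controlled by Lemmas~\ref{lem:rhoisgood} and~\ref{lem:strictimp}, the same case split depending on whether $Q$ times the phase length is bounded or small, and the same conversion of a geometric number of phases into an additive $o(n)$ term plus a single geometric random variable. The only differences are presentational (a generic exploitation length $T=\omega(t)\cap o(n)$ instead of the paper's concrete choices $t_0^{2/3}n^{1/3}$ and $t_0^{1/3}n^{2/3}$, and an explicitly stated domination $L\cdot\Geom(p)\preceq L+\Geom(1-(1-p)^{1/L})$ instead of the paper's inline computation).
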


\begin{proof}
  For $\ell = 0$, Lemma~\ref{lem:approachrho} contains the claimed result. Hence let $\ell \ge 1$. Let $Q := p_{\text{imp}}(\rho^*,\ell) = (1-\rho^*)^\ell \rho^* = \frac{\rho^*}{s+1} = \Theta(\frac{1}{\ell})$ be the probability of finding an improving solution when using the mutation rate $\rho^*$. 
  
  We first show that there is a $t = o(n)$ such that the probability that the EA in the first $t$ iterations does not find an improving solution, is at most $o(1) + (1 - (1-o(1))Q)^t$.
  
  By Lemma~\ref{lem:approachrho}, there is a $t_0 = O(\frac{\log n}{\delta \eps}) \subseteq o(n)$ such that with probability at least $1 - \frac 1n$ within the first $t_0$ iterations a $\rho$-value in $[(1-\delta)\rho^*, (1+\delta) \rho^*]$ is reached or an improvement is found.
  
  Assume that after $t_0$ iterations we have not found an improvement (otherwise we are done) and that the first time $T_0$ such that the mutation rate is in $[(1-\delta)\rho^*, (1+\delta) \rho^*]$ is at most $t_0$. Let $\gamma = o(1)$ as in Lemma~\ref{lem:rhoisgood}. Let $t_1 \in \omega(t_0) \cap o(n)$, and to be more concrete, let $t_1 = t_0^{2/3} n^{1/3}$. By Lemma~\ref{lem:rhoisgood}, with probability $1-o(1)$, in all but a lower-order fraction of the iterations $[T_0+1..T_0+t_1]$ the algorithm ignoring improvements uses a mutation rate in $[(1-\gamma)\rho^*, (1+\gamma) \rho^*]$. By Lemma~\ref{lem:strictimp}, for any such rate $\rho$ the probability $p_{\text{imp}}(\rho,\ell) = (1-\rho)^\ell \rho$ of finding an improvement is at least $(1 - O(\gamma)) Q$. Let us assume for a moment that indeed the mutation rate is in this range for a $1-o(1)$ fraction of the iterations $T_0+1, \ldots, T_0+t_1$. Then the probability of not finding an improvement in any of these iterations is at most 
  \[(1 - (1 - O(\gamma)) Q)^{t_1 (1 - o(1))} \le  (1 - (1 - o(1)) (1 - O(\gamma)) Q)^{t_1} = (1 - (1 - o(1)) Q)^{t_1} =: P,\] 
  where the inequality stems from Lemma~\ref{lem:est}~\ref{it:estbernoulli2}.
  
  Let $t = t_0 + t_1$, which is still $o(n)$. Since $t = t_1 (1+o(1))$, again by Lemma~\ref{lem:est}~\ref{it:estbernoulli2}, we have $P = (1 - (1 - o(1))Q)^t$. Taking now also into account the two failure probabilities of order $o(1)$, we have shown that the probability of not finding an improvement in the first $t$ iterations is at most $o(1) + (1 - (1 - o(1))Q)^t =: \bar P$.
  
  If $Q \le \frac 1t$, then $\bar P = (1+o(1))(1 - (1 - o(1))Q)^t = (1 - (1 - o(1))Q)^t$. If $Q > \frac 1t$, then we redo the above construction with $t_1 = t_0^{1/3} n^{2/3}$. We can do so since we have never exploited the particular size of $t$. Now $Q = \omega(1/t)$ and consequently, $\bar P = o(1)$.

  We now repeat such phases of $t$ iterations. Note that if such a phase of $t$ iterations does not lead to an improvement, then we are in the same situation as initially. Hence the probability that $k$ such phases do not lead to an improvement is at most $\bar P^k$. 
  
  If $Q \le \frac 1t$ and hence $\bar P = (1 - (1 - o(1))Q)^t$, the probability that $k \ge t$ iterations do not lead to an improvement is at most $((1 - (1 - o(1))Q)^t)^{\lfloor k/t \rfloor} \le ((1 - (1 - o(1))Q)^t)^{(k/t)-1} = (1 - (1 - o(1))Q)^{k-t}$. Consequently, the time $T$ to find an improvement is stochastically dominated by $t+\Geom((1 - o(1))Q)$. 
  
  In the other case that $\bar P = o(1)$, the probability that $k \ge t$ iterations do not lead to an improvement is at most $\bar P^{\lfloor k/t \rfloor} \le \bar P^{(k/t)-1} = (\bar P^{1/t})^{k-t}$. Since $\bar P = o(1)$, we have $\bar P^{1/t} \le 1 - \frac 1t = 1 - \omega(\frac 1n)$. Consequently, the time $T$ to find an improvement is stochastically dominated by $t+\Geom(\omega(\frac 1n))$.  
  
  In particular, we obtain $E[T_\ell] \le o(n) + \frac{1+o(1)}{(1-\rho^*)^\ell \rho^*}$, and by Lemma~\ref{lem:rhostar} the $o(1)$-term gets swallowed by the $o(n)$ term.
\end{proof}

Having shown this bound for the time needed to leave each fitness level, we can now derive from it a bound for the whole running time. In principle, Wegener's fitness level method~\cite{Wegener01} would be an appropriate tool here, since it -- essentially -- states that the expected running time is the sum of the expected times needed to leave each fitness level. For the \leadingones function, however, it has been observed that many algorithms visit each fitness level only with probability $1/2$, so by simply using the fitness level method we would lose a factor of two in the running-time guarantee. Since we believe that our running-time results are tight apart from constant factors, we have to care about this factor of two.

The first result in this direction is the precise running-time analysis of the \oea with static and fitness-dependent mutation rates on \leadingones in~\cite{BottcherDN10}. The statement that the running time is half of the sum of the exit times of the fitness levels was stated (for expected times) before Theorem~3 in~\cite{BottcherDN10}, but a formal proof (which could easily be obtained from Theorem~2 there) was not given. In~\cite[Theorem~4]{Sudholt13}, a refinement of the fitness-level method was developed. It yields upper bounds for expected running times which can be below the sum of the improvement times. However, it requires a careful choice of the method parameters $\chi, (s_i)_{i \in [0..m-1]}$, and $(\gamma_{i,j})_{i < j}$. With such a choice, the running time of a broad class of evolutionary algorithms on \leadingones was determined to be the sum of the improvement times divided by two~\cite[Section~V]{Sudholt13}. A significantly simpler version of the fitness level method was recently presented in~\cite{DoerrK21gecco}. It was used to give an elementary proof of the fact that the expected running time of the \oea with general mutation rate on \leadingones is half the sum of the improvement times.

Since we also aim at a stochastic domination result, these results only working with expected running times are not applicable. A very general result based on stochastic domination was presented and formally proven in~\cite{Doerr19tcs}. Unfortunately, this result was formulated only for algorithms using the same mutation operator in all iterations spent on one fitness level since this implies that the time to leave a fitness level follows a geometric distribution. This result is thus not applicable to our self-adjusting \oea. By a closer inspection of the proof, we observe that the restriction to using the same mutation operator in all iterations on one fitness level is not necessary when the result is formulated via geometric distributions. We phrase the resulting theorem in full generality, i.e., for all unbiased mutations operators (in the sense introduced by Lehre and Witt~\cite{LehreW12} -- we do not present details about this concept, but for the purpose of this paper it suffices to know that the mutation operators of the \oea and the \oeares satisfy this condition). 

\begin{theorem}
\label{thm:level}
  Consider a \oea which may use in each iteration a different unbiased mutation operator. This choice may depend on the whole history. Consider that we use this algorithm to optimize the \leadingones function. For each $\ell \in [0..n-1]$ let $T_\ell$ be a random variable that, regardless of how the algorithm reached this fitness level, stochastically dominates the time the algorithm takes to go from a random solution with fitness exactly $\ell$ to a better solution. Then the running time $T$ of this \oea on the \leadingones function is stochastically dominated by \[T \preceq \sum_{\ell = 0}^{n-1} X_\ell T_\ell,\] where the $X_\ell$ are uniformly distributed binary random variables and all $X_\ell$ and $T_\ell$ are independent. In particular, the expected running time satisfies \[E[T] \leq \frac 12 \sum_{\ell = 0}^{n-1} E[T_\ell].\]
\end{theorem}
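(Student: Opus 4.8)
The plan is to revisit the argument underlying the stochastic-domination fitness-level result in~\cite{Doerr19tcs} and observe that the only place where the structure of the mutation operator entered was to conclude that the per-level waiting time is geometrically distributed; once we instead hypothesize an arbitrary dominating random variable $T_\ell$ for the exit time of level $\ell$, the same proof goes through verbatim. Concretely, I would set up the standard description of a run of the \oea on \leadingones in terms of the \emph{first-hitting sequence} of fitness levels: since \leadingones is non-decreasing along the accepted path and any accepted strictly-improving offspring jumps to some level $> \ell$, the algorithm visits an increasing subsequence $0 \le \ell_1 < \ell_2 < \dots < \ell_m = n$ of levels, and $T = \sum_{j} T_{\ell_j}'$ where $T_{\ell_j}'$ is the actual number of iterations spent on level $\ell_j$. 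Each $T_{\ell_j}'$ is dominated by $T_{\ell_j}$ by hypothesis, regardless of history, so $T \preceq \sum_{\ell \text{ visited}} T_\ell$.

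The key remaining point is the factor $\tfrac12$, i.e.\ the claim that each level $\ell \in [1..n-1]$ is "visited" only with probability $\tfrac12$, independently, captured by the $X_\ell$. The standard observation for \leadingones (going back to the analyses in~\cite{BottcherDN10,Sudholt13}) is this: when the algorithm makes a strict improvement from a search point $x$ with $\LO(x) = \ell$, it flips bit $\ell+1$ (from $0$ to $1$) and leaves bits $\ell+2, \ell+3, \dots$ untouched; but these "tail" bits were never under selection pressure, so conditioned on reaching level $\ell$ via such a step, bit $\ell'$ for each $\ell' > \ell+1$ is uniformly random and independent of everything the algorithm has done so far and of the tail bits beyond it. Hence the probability that bit $\ell+2$ is already a $1$ — in which case the new fitness is at least $\ell+2$ and level $\ell+1$ is skipped — is exactly $\tfrac12$, and by induction on the initial solution and on subsequent improvements, each level in $[1..n-1]$ is visited with probability exactly $\tfrac12$, with these events mutually independent (they concern disjoint tail bits). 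For $\ell = 0$ the level is always the starting level, so $X_0 = 1$ deterministically; one can absorb this by simply defining $X_0$ to have the stated uniform distribution and noting the resulting bound is only weaker, or by treating $\ell=0$ separately — I would follow whichever convention~\cite{Doerr19tcs} uses. Combining: $T \preceq \sum_{\ell=0}^{n-1} X_\ell T_\ell$ with independent $X_\ell$ and $T_\ell$, and taking expectations with $\Exp[X_\ell] = \tfrac12$ (treating $\ell=0$ generously) gives $\Exp[T] \le \tfrac12 \sum_{\ell=0}^{n-1} \Exp[T_\ell]$.

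The main obstacle is making the independence claims fully rigorous rather than intuitive: one must argue carefully that the "tail randomness" argument composes correctly across multiple improvement steps — when the algorithm later improves from level $\ell' > \ell$, the bits it then exposes are a subset of the tail bits that were fresh at the earlier step, and one needs a clean coupling / filtration argument to see that the indicator "level $\ell$ was visited" depends only on bit $\ell+1$ of the search point at the moment level $\ell$ (or a higher one) was first reached, and that across $\ell$ these indicators factorize. This is exactly the content that is "implicit" in~\cite{BottcherDN10,Sudholt13} and made explicit (for geometric $T_\ell$) in~\cite{Doerr19tcs}; I would lift that argument, replacing every invocation of "the waiting time on a level is $\Geom(p_\ell)$" by "the waiting time on a level is $\preceq T_\ell$", and check that no other property of the per-level dynamics was used. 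A secondary, minor point is to confirm that the hypothesis — $T_\ell$ dominates the exit time \emph{regardless of how the level was reached} — is strong enough to handle the fact that in our self-adjusting \oea the mutation rate carried into level $\ell$ depends on the history; but this is precisely why Lemma~\ref{lem:gaintime} was stated with "an arbitrary mutation rate $\rho \ge \pmin$", so the composition is immediate.
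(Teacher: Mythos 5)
Your proposal is correct and takes essentially the same route as the paper: the paper's own proof consists precisely of the observation that in the proof of Theorem~3 in the full version of~\cite{Doerr19tcs}, the identity $T_i^0 = \Geom(q_i) + T_{i+1}^{\rand}$ remains valid as $T_i^0 = T_i + T_{i+1}^{\rand}$ and every subsequent occurrence of $\Geom(q_i)$ can be replaced by $T_i$, with the tail-bit independence argument for the factor $\tfrac12$ already contained in that cited proof. (One small slip in your write-up: level $0$ is not visited deterministically but exactly when the first bit of the uniformly random initial solution is $0$, i.e., with probability $\tfrac12$, so the uniform $X_0$ is in fact exact; this does not affect the validity of your argument.)
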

  
\begin{proof}
  In the proof of Theorem~3 in the full version of~\cite{Doerr19tcs}, equation~(1), which is $T_i^0 = \Geom(q_i) + T_{i+1}^{\rand}$, is also valid in the form $T_i^0 = T_i + T_{i+1}^{\rand}$ in our setting, and any following occurrence of $\Geom(q_i)$ can be replaced by $T_i$. This proves our result.
\end{proof}
  
Now Lemma~\ref{lem:gaintime} and Theorem~\ref{thm:level} easily yield our running-time bound for the self-adjusting \oea stated in Theorem~\ref{thm:oea}. Both the domination and the expectation version of this running-time bound are, apart from lower order terms, identical to the bounds which could easily be shown for the \oea which uses a fitness-dependent mutation rate of $\rho(f(x)) := \rho^*(f(x),s)$ when the parent has fitness $f(x)$. This indicates that our self-adjustment tracks the target mutation rate $\rho^*(f(x),s)$ very well. 

\begin{proof}[Proof of Theorem~\ref{thm:oea}]
  Choose $\delta \in o(1)$ such that $\delta / \ln(\frac 1\delta) = \omega(\eps)$ and $\delta = \omega(\frac{\log n}{n \eps})$. Note that such a $\delta$ exists, e.g., $\delta = \max\{\sqrt{\eps}, \sqrt{\frac{\log n}{n\eps}}\}$. Now Lemma~\ref{lem:gaintime} gives upper bounds for the times $T_\ell$ to leave the $\ell$-th fitness level, which are independent of the mutation rate present when entering the fitness level. Hence by Theorem~\ref{thm:level}, the required stochastic dominance follows.
	
	For the last claim~\eqref{eq:Toeagen} in Theorem~\ref{thm:oea}, the first inequality is an immediate consequence of the domination statement. For the second one, we use the bound $\rho^* = (1 - o(1)) \frac{\ln(s+1)}{\ell}$ from Lemma~\ref{lem:rhostar}. This implies in particular that for $\ell=\omega(1)$ we have $(1-\rho^*)^\ell = (1-o(1))e^{-\rho^*\ell} = (1-o(1))\cdot 1/(s+1)$. The second step in~\eqref{eq:Toeagen} then follows by plugging in.
\end{proof}

\section{The Self-Adjusting \texorpdfstring{\oeares}{(1+1) EA}}
\label{sec:oeares}

We now extend our findings for the \oea to the \oeares, the resampling variant of the \oea which enforces that offspring are different from their parents by ensuring that at least one bit is flipped by the standard bit mutation operator. That is, the \oeares differs from the \oea only in the choice of the mutation strength $k$, which in the \oea follows the binomial distribution $\Bin(n,p)$, and in the \oeares follows the conditional binomial distribution $\Bin_{>0}(n,p)$ which assigns every positive integer $1 \le m \le n$ a probability of $\binom{n}{m}p^m(1-p)^{n-m}/(1-(1-p)^n)$. The self-adjusting version of the \oeares implements the same change, and can thus be obtained from Algorithm~\ref{alg:adaoea} by exchanging line~\ref{line:k} by ``Sample $k$ from $\Bin_{>0}(n,p)$''. Note that this is also the algorithm empirically studied in~\cite{DoerrW18}.

It is clear that for static mutation rates the \oeares is strictly better than the plain \oea, since it simply avoids the useless iterations in which duplicates of the parent are evaluated. For example, it reduces the running time of the \oea with static mutation rate $1/n$ by a multiplicative factor of $(e-1)/e \approx 0.632$~\cite{CarvalhoD18}. For the self-adjusting \oeares, however, it is \emph{a priori} not evident how the conditional sampling of the mutation strengths influences the running time. Note that after each iteration in which no bit is flipped by the \oea (e.g., a $(1-1/n)^n \approx 1/e$ fraction of iterations for mutation rate $1/n$), the mutation rate is increased by the factor $F^s$. Since these steps are avoided by the self-adjusting \oeares, it could, in principle, happen that the actual mutation rates are smaller than what they should be. We show in this section that this is not the case. Put differently, we show that the self-adjusting \oeares also achieves very efficient optimization times. In contrast to the results proven in Section~\ref{sec:oea}, however, we derive a bound that is difficult to evaluate in closed form. For interpreting this bound, we therefore have to resort to a numerical evaluation of the otherwise formally proven bound. A comparison with the best possible \oeares variant using optimal fitness-dependent mutation rates will show that the obtained running times are very similar, see Section~\ref{sec:numerics-oeares}. 

Before we can state the main theorem of this section, Theorem~\ref{thm:oeares}, we first need to discuss how the conditional sampling of the mutation strengths influences the improvement and the success probabilities. 
It is not difficult to see that the \emph{improvement probability} $\hat p_{\text{imp}}(\rho,\ell)$ of the \oeares, started in an arbitrary search point $x$ with $\LO(x)=\ell$ and using mutation rate~$\rho$, equals 
\begin{align}\label{eq:phat}
\hat p_{\text{imp}}(\rho,\ell) 
= \frac{(1-\rho)^{\ell}\rho}{1-(1-\rho)^n},
\end{align}
which is the improvement probability of the \oea divided by the probability that the unconditional standard bit mutation does not create a copy of its input. 

Likewise, the \emph{success probability} $\hat p_{\text{suc}}(\rho,\ell)$ 
of the \oeares in the same situation 
can be computed as 
\begin{equation}
\label{eq:qhat}
\hat p_{\text{suc}}(\rho,\ell) 
= \frac{(1-\rho)^{\ell}(1-(1-\rho)^{n-\ell})}{1-(1-\rho)^n} 
= 1-\frac{1-(1-\rho)^\ell}{1-(1-\rho)^n},
\end{equation}
where the probability in the numerator is given by the probability of not flipping one of the first $\ell$ bits times the probability to flip at least one bit in the last $n-\ell$ positions (recall here that flipping 0 bits is not possible with the \oeares).

As we did for the \oea, we would like to define for the \oeares a target mutation rate $\hat \rho^*(\ell,s)$ to be the one that guarantees that the success probability equals $1/(s+1)$. That is, we would like to set the target rate as the value of $\hat \rho^*$ that solves the equation 
\begin{equation}\label{eq:rhostarhat}
\hat p_{\text{suc}}(\hat \rho^*,\ell) = 1/(s+1).
\end{equation}
However, while the corresponding equation for the \oea has always a (unique) solution, we will show in Lemma~\ref{lem:rhostarhat} that Equation~\eqref{eq:rhostarhat} has a solution only if $\ell < sn/(s+1)$. The reason is that the mutation operator of the \oeares always flips at least one bit, and for the offspring to be accepted, this flip needs to be in the non-optimized tail of the string. As the algorithm progresses, the size of the tail decreases and thus the chances of making a successful mutation decreases as well. In the extreme case of $11\ldots 110$, where only the very last bit is incorrect, the probability of finding this bit is $1/n$, even if we condition on flipping exactly one bit. In contrast, the \oea always has the ``option'' not to flip any bits at all, which is a success by definition. Thus the \oea can achieve success probabilities arbitrarily close to one by making the mutation rate small enough, while the \oeares cannot exceed a mutation rate of $\Theta(1/n)$ for the last steps of optimization. 

We therefore have two phases in which the \oeares behaves differently. For $\ell < sn/(s+1)$, our analysis follows closely the one for \oea, except that the algebra gets considerably more involved. That is, for the regime $\ell < sn/(s+1)$, we define the \emph{target mutation rate} $\hat \rho^*(\ell,s)$ via Equation~\eqref{eq:rhostarhat} and we show that the mutation rate $\rho$ used by the \oeares approaches $\hat \rho^*(\ell,s)$ quickly, and stays close to $\hat \rho^*(\ell,s)$ until a new fitness level is reached. For $\ell \ge sn/(s+1)$, the mutation rate $\rho$ has negative drift, and quickly reaches values $o(1/n)$. In this regime, the \oeares mimics Randomized Local Search (RLS), which flips exactly one bit in each round. Indeed, we will show that the time to leave a fitness level is essentially $\Geom(1/n)$-distributed, as we would expect for the \rls. For technical reasons, we will set the threshold between the two regimes not at $\ell = sn/(s+1)$, but at a slightly smaller value $\ell_0$. This trick helps to avoid some border cases. More precisely, throughout this section we fix a positive function $\eta_0 = \eta_0(n) = o(1)$ and $\ell_0 \in [0,n]$ such that
\begin{equation}\label{eq:ell0}
\ell_0 := (1-\eta_0)\frac{sn}{s+1}, \qquad \text{ where }\quad \eta_0 = o(1).
\end{equation}

With these preparations, the main result can be stated as follows.

\begin{theorem}
\label{thm:oeares}
  Let $c > 1$ be a constant. We consider the self-adjusting \oeares with update strength $F=1+\eps$ for $\eps \in \omega(\frac{\log n}{n}) \cap o(1)$, with minimal mutation rate $\pmin \in o(n^{-1}) \cap \Omega(n^{-c})$, maximal mutation rate $\pmax =1$, and with arbitrary initial mutation rate $\rho_{0} \in [\pmin,\pmax]$. Let $\eta_0 :=\max\{\eps^{1/6}, (\eps n/\log n)^{-1/2}\}$, and let $\ell_0 := \lfloor (1-\eta_0)sn/(s+1) \rfloor$. Then the number $T$ of iterations until the self-adjusting \oeares finds the optimum of the $n$-dimensional \leadingones function satisfies
  \begin{align*}
  T  \preceq o(n^2) & + \sum_{\ell = 0}^{\ell_0} X_\ell \Geom\left(\min\left\{\omega(\tfrac 1n), (1-o(1))\frac{1-(1-\hat\rho^*(\ell,s))^n}{(1-\hat\rho^*(\ell,s))^{\ell}\hat\rho^*(\ell,s)}\right\} \right) \\
&  + \sum_{\ell = \ell_0+1}^{n} X_\ell \Geom((1-o(1))/n),
  \end{align*}
  where the $X_\ell$ are uniformly distributed binary random variables and all $X_\ell$ and all geometric random variables are mutually independent. Furthermore, all asymptotic notation is solely with respect to $n$ and can be chosen uniformly for all $\ell$. In particular, it holds that 
  \begin{align}
	\label{eq:Toearesgen}
  E[T] & \le (1+o(1))\frac 12\left(\frac{n^2}{s+1}+\sum_{\ell=0}^{\ell_0} \frac{1-(1-\hat\rho^*(\ell,s))^n}{(1-\hat\rho^*(\ell,s))^{\ell}\hat\rho^*(\ell,s)}\right).
  \end{align}
  \end{theorem}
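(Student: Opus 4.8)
I would follow the proof of Theorem~\ref{thm:oea} step by step, now carrying the resampling quantities $\hat p_{\text{imp}}$ and $\hat p_{\text{suc}}$ of~\eqref{eq:phat} and~\eqref{eq:qhat} through the analysis and treating the fitness ranges $\ell \le \ell_0$ and $\ell > \ell_0$ separately. The first step is the analogue of Lemma~\ref{lem:rhostar}, namely Lemma~\ref{lem:rhostarhat}: from the form $\hat p_{\text{suc}}(\rho,\ell) = 1 - \tfrac{1-(1-\rho)^\ell}{1-(1-\rho)^n}$ and Lemma~\ref{lem:auxmonotone} (applied with $b = e^{-n}$, $c = e^{-\ell}$, which satisfies $0<b<c$ since $\ell<n$, and variable $x = -\ln(1-\rho)$) one obtains that $\rho \mapsto \hat p_{\text{suc}}(\rho,\ell)$ is strictly decreasing on $(0,1)$ with limits $1-\ell/n$ as $\rho\to 0^+$ and $0$ as $\rho\to 1^-$; hence~\eqref{eq:rhostarhat} has a unique solution $\hat\rho^*(\ell,s)$ exactly when $\ell < sn/(s+1)$ and none otherwise. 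Expanding the defining equation with Lemma~\ref{lem:asym} --- part~\ref{it:asym4} is tailored to the expressions of the form $1-e^{-y}(1+x)$ that arise in the ratio $\tfrac{1-(1-\rho)^\ell}{1-(1-\rho)^n}$ --- gives $\hat\rho^*(\ell,s) = \Theta(1/\ell)$ with $\hat\rho^*(\ell,s)\,\ell = \Theta(1)$ as long as $\ell$ stays bounded below $sn/(s+1)$, while near the threshold $\hat\rho^*\!\big((1-\eta)\tfrac{sn}{s+1},s\big) = (1+o(1))\tfrac{2\eta(s+1)}{n}$; in particular $\hat\rho^*(\ell,s) = o(1/n)$ already for $\ell$ slightly above $\ell_0$, and $\hat\rho^*(\ell,s)\,\ell$ degrades smoothly down to $\Theta(\eta_0)$ as $\ell \uparrow \ell_0$.

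\textbf{The range $\ell \le \ell_0$.} Here I would replay Lemmas~\ref{lem:successprob}--\ref{lem:gaintime} with $\hat p_{\text{suc}}$, $\hat p_{\text{imp}}$, $\hat\rho^*$ in place of $p_{\text{suc}}$, $p_{\text{imp}}$, $\rho^*$. The only substantial new point is the analogue of Lemma~\ref{lem:successprob}: I need that moving $\rho$ to $(1\pm\delta)\hat\rho^*$ multiplies $\hat p_{\text{suc}}$ by $1 \mp \Omega(\delta)$. Relative to the plain \oea, $\hat p_{\text{suc}}$ carries the extra denominator $1-(1-\rho)^n$ and the extra factor $1-(1-\rho)^{n-\ell}$; both are expanded to first order around $\hat\rho^*$ via Lemma~\ref{lem:asym}, and because $\ell_0$ keeps $\ell$ a $(1-\eta_0)$-fraction below $sn/(s+1)$ these correction factors stay bounded away from $0$ and $\infty$, so the sign and the order of the drift are unchanged. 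The multiplicative-drift estimate (analogue of Lemma~\ref{lem:approachrho}) and the occupation-probability estimate (analogue of Lemma~\ref{lem:rhoisgood}, invoking Lemma~\ref{lem:occprob} with drift parameter of order $\delta\,\hat\rho^*\ell$) then go through as before; the role of the particular choice $\eta_0 = \max\{\eps^{1/6}, (\eps n/\log n)^{-1/2}\}$ is precisely to keep all the ``$o(1)$'' and ``$\omega(\eps)$'' side conditions --- including in the worst case $\hat\rho^*\ell = \Theta(\eta_0)$ --- simultaneously satisfiable by a suitable $\delta = o(1)$. The outcome, exactly as in Lemma~\ref{lem:gaintime}, is $T_\ell \preceq o(n) + \Geom\!\big(\min\{\omega(1/n),\, (1-o(1))\,\hat p_{\text{imp}}(\hat\rho^*(\ell,s),\ell)\}\big)$, which yields the first sum in the statement (the case $\ell = 0$, where every step succeeds and the rate runs up to $\pmax = 1$, is handled as in Lemma~\ref{lem:approachrho}).

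\textbf{The range $\ell > \ell_0$ --- the main obstacle.} This is where the resampling variant genuinely departs from the \oea and where I expect the real work. Here a target rate may or may not exist, but in both cases the mutation rate is driven down to $o(1/n)$. Using that $\hat p_{\text{suc}}(\rho,\ell)$ is decreasing in both $\rho$ and $\ell$, together with the first-order expansion of $\hat p_{\text{suc}}$ at rates of order $\eta_0/n$ and the definition of $\ell_0$, one shows that there is a constant $C = C(s)$ with $\hat p_{\text{suc}}(\rho,\ell) \le \tfrac{1}{s+1}(1 - \Omega(\eta_0))$ for all $\ell > \ell_0$ and all $\rho \ge C\eta_0/n$. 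Hence $\log_{1+\eps}\rho$ has additive drift at most $-\Omega(\eta_0)$ above the level $C\eta_0/n$, with steps bounded by $\max\{s,1\}$; a one-sided multiplicative-drift bound then shows the rate drops below $C\eta_0/n$ within $o(n)$ iterations (this is exactly where one uses that $\eta_0$ was picked so that $\eps\eta_0 = \omega(\tfrac{\log n}{n})$ and $\eps = o(\eta_0/\ln(1/\eta_0))$), and Lemma~\ref{lem:occprob} with $\delta = \Theta(\eta_0)$ gives $\rho_t < C'\eta_0/n = o(1/n)$ for all but an $o(T)$-fraction of the first $T$ steps. Finally, for every $\rho = o(1/n)$ one has $\hat p_{\text{imp}}(\rho,\ell) = \tfrac{(1-\rho)^\ell\rho}{1-(1-\rho)^n} = (1\pm o(1))/n$ uniformly in $\ell \le n$, because $(1-\rho)^\ell = 1-o(1)$ and $1-(1-\rho)^n = (1\pm o(1))\,n\rho$. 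Feeding $Q = (1-o(1))/n$ into the phase argument of Lemma~\ref{lem:gaintime} (the ``bad'' iterations can only help, so no upper bound on $\hat p_{\text{imp}}$ there is required) yields $T_\ell \preceq o(n) + \Geom((1-o(1))/n)$ for every $\ell > \ell_0$, giving the second sum.

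\textbf{Assembly.} Since the \oeares uses in each iteration an unbiased operator ($\mut_k$ with $k \sim \Bin_{>0}(n,\rho)$) and the per-level bounds above dominate the exit time of each level regardless of the history, Theorem~\ref{thm:level} gives $T \preceq \sum_\ell X_\ell T_\ell$. The $o(n)$ per-level overheads sum to $o(n^2)$ and are absorbed into the leading term; the geometric parts reproduce the two displayed sums. For~\eqref{eq:Toearesgen} one uses $E[\Geom(p)] = 1/p$ together with $E[\Geom((1-o(1))/n)] = (1+o(1))\,n$ and $n - \ell_0 = (1+o(1))\tfrac{n}{s+1}$: the levels above $\ell_0$ contribute $\tfrac12(1+o(1))\tfrac{n^2}{s+1}$ and the levels up to $\ell_0$ contribute $\tfrac12(1+o(1))\sum_{\ell=0}^{\ell_0}\tfrac{1-(1-\hat\rho^*(\ell,s))^n}{(1-\hat\rho^*(\ell,s))^\ell\hat\rho^*(\ell,s)}$, which is exactly~\eqref{eq:Toearesgen} (the lower-order $o(n^2)$ being absorbed since each of the two sums is of order $n^2$).
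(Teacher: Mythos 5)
Your proposal is correct and follows essentially the same route as the paper: monotonicity and (non-)existence of $\hat\rho^*$ via Lemma~\ref{lem:auxmonotone} (the paper's Lemma~\ref{lem:rhostarhat}), order-of-magnitude estimates for $\hat\rho^*$ including the $\Theta(\eta/n)$ behaviour near the threshold (Lemma~\ref{lem:estimateshatrhostar}), perturbation bounds for $\hat p_{\text{suc}}$ and $\hat p_{\text{imp}}$ (Lemmas~\ref{lem:successprob2} and~\ref{lem:strictimp2}), the two per-level domination lemmas for $\ell\le\ell_0$ and $\ell>\ell_0$ (Lemmas~\ref{lem:gaintime2} and~\ref{lem:gaintime3}, including the negative-drift/RLS-like argument above $\ell_0$), and assembly via Theorem~\ref{thm:level}. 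The one sketch-level point you pass over is the regime of small but positive $\ell$, where the error term $\hat\rho^{*2}=\Omega(1/\ell^2)$ in the first-order expansion is no longer of lower order and the paper switches to cruder constant-factor bounds (Lemma~\ref{lem:successprob2}~(b),(c) and the case split at $\zeta n$ in Lemma~\ref{lem:gaintime2}); this does not change the approach or the conclusion.
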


\subsection{Numerical Evaluation of the Running-Time Bound in Theorem~\ref{thm:oeares}}
\label{sec:numerics-oeares}

As mentioned above, the interpretation of the running-time bound~\eqref{eq:Toearesgen} is not as straightforward as the corresponding one of the unconditional \oea. For a proper evaluation, one would have to compute bounds on $\hat\rho^*(\ell,s)$, and then plug these into the running-time bound. Since these computations are quite tedious, we will content ourselves with a numerical approximation of $\hat\rho^*(\ell,s)$ and its corresponding running time. The code and selected results of our numerical computations are available online from our GitHub repository at \url{https://github.com/CarolaDoerr/2019-LO-SelfAdjusting}.

Before estimating $E[T]$, we briefly discuss the \oearesopt, the \oeares variant that uses in each round the mutation rate $\presopt(\LO(x))$ which maximizes the improvement probability~\eqref{eq:phat}. The performance of this algorithm is a lower bound for the performance of any \oeares variant with fitness-dependent mutation rates, and thus for our self-adjusting \oeares. We again do not compute $\presopt(\ell)$ exactly, but only numerically. For $n \in \{100, 1\,000, 10\,000\}$ and for all $0 \le \ell<n/2$, the numerically computed values are quite close, but not identical to $1/(\ell+1)$; for $n=10\,000$ the largest difference between $\presopt(\ell)$ and $1/(\ell+1)$ is $0.0001741$ and the smallest is $-0.0000382$. In Lemma~\ref{lem:estimateshatrhostar} below, we will formally prove that $\presopt(\ell) = \Theta(1/\ell)$ in this range. For $\ell \ge n/2$, it is not difficult to see that $\presopt(\ell)$ is obtained by the limit at $0$, in which case the \oeares reduces to RLS. The expected running time of \oearesopt is 
\[
1+\frac{1}{2}\sum_{\ell=0}^{n-1}{\min\left\{n,\frac{1-(1-\presopt(\ell))^n}{\presopt(\ell) (1-\presopt(\ell))^{\ell}}\right\}}
\]
For $n=100$ ($n=1\,000$, $n=10\,000$) this expression evaluates to approximately $0.4077 n^2$ ($0.4026n^2$, $0.4027n^2$). As a side remark, we note that the expected running time of the best possible unary unbiased algorithm with fitness-dependent mutation strength distributions for these problem dimensions has an expected running time of around $0.3884n^2$~\cite[Section~3.2]{DoerrW18}. The \oearesopt is thus only around $3.7\%$ worse than this RLS$_{\text{opt}}$ heuristic. Put differently, the cost of choosing the mutation rates from $\Bin_{>0}(n,p)$ instead of deterministically using the optimal fitness-dependent mutation strength is only $3.7\%$. For comparison, we recall that the (unconditional) \oea variant using optimal mutation rates has an expected normalized running time of $e/4 \approx 0.6796$, which is about $75\%$ worse than that of RLS$_{\text{opt}}$.

We now estimate how close the performance of the self-adjusting \oeares gets to this \oearesopt. To this end, we fix $n=10\,000$ and compute $\hat\rho^*(\ell,s)$ for different success ratios $s$. The expected running times, normalized by the factor $1/n^2$, are plotted in Figure~\ref{fig:oearesbyrate}. The interesting region of success ratios between $1.2$ and $1.4$ is plotted in the zoom on the right. For this $n$, the best success ratio is around $1.285$, which gives a normalized expected running time of around $0.403792$. This value is only $0.26\%$ larger than the expected running time of the \oearesopt for $n=10\,000$. A numerical evaluation for $n=50\,000$ shows that the optimal success rate is again around $1.285$, giving a normalized expected running time slightly less than $0.40375375$.

\begin{figure}[t]
\centering
\includegraphics[width=\linewidth]{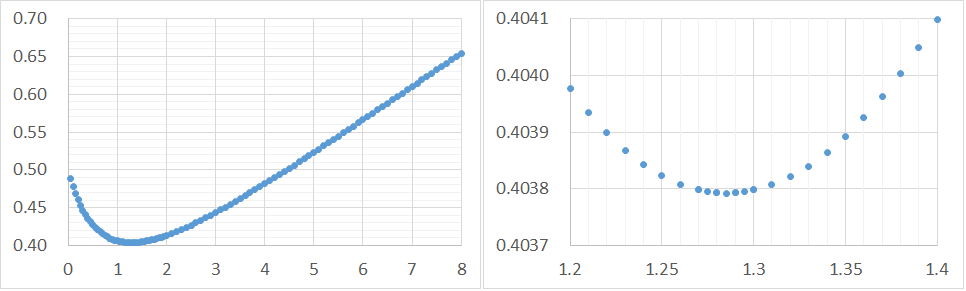}
\caption{Optimization times, normalized by the factor $1/n^2$, of the self-adjusting \oeares on the $10\,000$-dimensional \leadingones function for different success ratios $s$ and assuming an update strength $F=1+o(1)$. The chart on the right zooms into the interesting region around the optimal success ratio.}
\label{fig:oearesbyrate}
\end{figure}

\subsection{Proof of Theorem~\ref{thm:oeares}}
\label{sec:proofoeares} 

We start our proof of Theorem~\ref{thm:oeares} with an elementary lemma, which discusses when Equation~\eqref{eq:rhostarhat} has a solution. 
\begin{lemma}\label{lem:rhostarhat}
For every $0 < \ell < n$, the function $\hat p_{\text{suc}}(\rho,\ell)$ given by~\eqref{eq:qhat} is strictly decreasing in $\rho$ in the range $\rho \in (0,1]$, and its extremal values are given by $\lim_{\rho \to 0} \hat p_{\text{suc}}(\rho,\ell) = 1-\ell/n$ and $\hat p_{\text{suc}}(1,\ell) = 0$. 

In particular, if $\ell < sn/(s+1)$ then \eqref{eq:rhostarhat} has a unique solution $\hat\rho^*(\ell,s)$. If $\ell \geq sn/(s+1)$ then \eqref{eq:rhostarhat} does not have a solution.
\end{lemma}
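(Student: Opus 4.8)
The plan is to reduce the monotonicity claim to Lemma~\ref{lem:auxmonotone} via a logarithmic change of variables, and then to read off the existence/uniqueness dichotomy from the intermediate value theorem. First I would record the two boundary values. At $\rho=1$ both $(1-\rho)^\ell$ and $(1-\rho)^n$ vanish, so the second form in~\eqref{eq:qhat} gives $\hat p_{\text{suc}}(1,\ell)=1-\tfrac11=0$ directly. For the limit at $0$ I would use the first (factored) form $\hat p_{\text{suc}}(\rho,\ell)=(1-\rho)^\ell\cdot\tfrac{1-(1-\rho)^{n-\ell}}{1-(1-\rho)^n}$: as $\rho\to0$ the prefactor tends to $1$ and, using $1-(1-\rho)^m=m\rho\pm O(\rho^2)$ (or de l'H\^opital), the fraction tends to $\tfrac{n-\ell}{n}$, so $\lim_{\rho\to0^+}\hat p_{\text{suc}}(\rho,\ell)=1-\ell/n$.

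For strict monotonicity I would substitute $t:=-\ln(1-\rho)$, which is a continuous strictly increasing bijection from $\rho\in(0,1)$ onto $t\in(0,\infty)$, and write $1-\rho=e^{-t}$, so that by~\eqref{eq:qhat}
\[
\hat p_{\text{suc}}(\rho,\ell)=1-\frac{1-e^{-\ell t}}{1-e^{-nt}}.
\]
Since $\ell<n$ we have $e^{-n}<e^{-\ell}$, so Lemma~\ref{lem:auxmonotone}, applied with $b=e^{-n}$, $c=e^{-\ell}$ and variable $t$, shows that $\tfrac{1-e^{-nt}}{1-e^{-\ell t}}$ is strictly decreasing in $t\in\R^+$; it is positive there, hence its reciprocal $\tfrac{1-e^{-\ell t}}{1-e^{-nt}}$ is strictly increasing in $t$. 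Therefore $\hat p_{\text{suc}}(\rho,\ell)$ is strictly decreasing in $t$, hence strictly decreasing in $\rho$ on $(0,1)$, and by continuity at $\rho=1$ (where the value is $0$) it is strictly decreasing on all of $(0,1]$.

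Finally I would combine these observations. Being continuous and strictly decreasing on $(0,1]$, the map $\rho\mapsto\hat p_{\text{suc}}(\rho,\ell)$ has image exactly the half-open interval $[\,0,\,1-\ell/n\,)$, where the upper endpoint is a supremum that is not attained. Consequently~\eqref{eq:rhostarhat} has a solution if and only if $\tfrac1{s+1}\in[\,0,\,1-\ell/n\,)$, i.e.\ iff $1-\tfrac1{s+1}=\tfrac{s}{s+1}>\tfrac{\ell}{n}$, which is precisely $\ell<\tfrac{sn}{s+1}$; in that case strict monotonicity forces the solution $\hat\rho^*(\ell,s)$ to be unique. If $\ell\ge\tfrac{sn}{s+1}$, then $\tfrac1{s+1}\ge1-\ell/n$ lies at or above the non-attained supremum of the image, so no solution exists.

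I do not expect a genuine obstacle here, in line with the lemma being labelled elementary; the only point requiring care is the direction of monotonicity when invoking Lemma~\ref{lem:auxmonotone}: one must pass to the reciprocal, because that lemma is phrased for $b<c$ whereas in our fraction the base attached to the numerator exceeds the one attached to the denominator. A direct derivative computation for $\tfrac{1-e^{-\ell t}}{1-e^{-nt}}$ is an equally acceptable fallback, but reusing Lemma~\ref{lem:auxmonotone} keeps the argument short.
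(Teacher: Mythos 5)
Your proof is correct and follows essentially the same route as the paper: both reduce the strict monotonicity of $\hat p_{\text{suc}}$ to Lemma~\ref{lem:auxmonotone} (the paper compares $f(\ell)>f(n)$ for all pairs of bases to get monotonicity of $p\mapsto(1-p^\ell)/(1-p^n)$, while you achieve the same via the substitution $t=-\ln(1-\rho)$ and a reciprocal), and both then obtain the boundary values and conclude existence/uniqueness from continuity and strict monotonicity. The details all check out, so no further comment is needed.
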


\begin{proof}
By Lemma~\ref{lem:auxmonotone}, for all $0<b<c$ the function $f(x) = (1-b^x)/(1-c^x)$ is strictly decreasing in $x \in \R^+$. Hence, if we compare $f(x)$ for $x = \ell$ and $x=n$ then we obtain $(1-b^\ell)/(1-c^\ell) > (1-b^n)/(1-c^n)$, or equivalently $(1-b^\ell)/(1-b^n) > (1-c^\ell)/(1-c^n)$ for all $0<b<c<1$. Thus we have shown that the function $g(p) := (1-p^{\ell})/(1-p^{n})$ is strictly decreasing for $0<p<1$, and monotonicity of $\hat p_{\text{suc}}$ follows since it can be expressed via $g$ as
\[
\hat p_{\text{suc}}(\rho,\ell) 
= 1-\frac{1-(1-\rho)^{\ell}}{1-(1-\rho)^n} = 1-g(1-\rho).
\]
For the extremal value at $\rho = 1$, we simply plug in and evaluate $\hat p_{\text{suc}}(1,\ell) = 0$. For $\rho \to 0$, we use L'H\^opital's rule to compute
\begin{align*}
\lim_{\rho \to 0}\hat p_{\text{suc}}(\rho,\ell) 
& = 1-\lim_{\rho \to 0} \frac{1-(1-\rho)^{\ell}}{1-(1-\rho)^n} = 1-\lim_{\rho \to 0} \frac{\frac{\partial}{\partial \rho}(1-(1-\rho)^{\ell})}{\frac{\partial}{\partial \rho}(1-(1-\rho)^n)} = 1-\lim_{\rho \to 0} \frac{\ell(1-\rho)^{\ell-1}}{n(1-\rho)^{n-1}} \\
& = 1-\frac{\ell}{n}.
\end{align*}
The (non-)existence and uniqueness of $\hat\rho^*$ follows immediately from the monotonicity of $\hat p_{\text{suc}}$, since $\ell < sn/(s+1)$ holds if and only if $1/(s+1) > 1-\ell/n$, i.e., if and only if the equation $\hat p_{\text{suc}}(\rho,\ell) = 1/(s+1)$ has a solution. 
 \end{proof}

Lemma~\ref{lem:rhostarhat} tells us when a target mutation rate $\hat\rho^*$ exists. The following lemma quantifies $\hat\rho^*$ up to constant factors. This is a slightly less precise analogue of Lemma~\ref{lem:rhostar}, where we could obtain $\rho^*$ up to $(1+o(1))$ factors.

\begin{lemma}[estimates for $\hat\rho^*$]\label{lem:estimateshatrhostar}
  For $1 \leq \ell < sn/(s+1)$ let $\hat\rho^* = \hat\rho^*(\ell,s)$ be the target mutation rate, i.e., the unique solution of \eqref{eq:rhostarhat}. Then $\hat\rho^*(\ell,s)$ is strictly decreasing in $\ell$. Moreover, if $n \geq C$ for some sufficiently large constant $C = C(s)$, then $\hat p_{\text{suc}}(\rho,\ell)$ satisfies the following bounds. 
  \begin{itemize}
  \item Assume that $\ell = (1-\eta)sn/(s+1)$ for some $0 < \eta \leq 1/(8(s+1)^2)$. Then 
  \[
  \frac{\eta}{n} \leq \hat\rho^*(\ell,s) \leq \frac{4\eta(s+1)}{n} . 
  \]
  \item Assume that $\ell \leq (1-\tfrac{1}{8(s+1)^2})\cdot sn/(s+1)$. Let $\kappa = \kappa(s) := \tfrac14\ln(s+1)/\ln(1+\sqrt{s+1})$. Then 
  \[
  \frac{\min\{\kappa/(8(s+1)^2), \tfrac14\ln(s+1)\}}{\ell} \leq \hat\rho^*(\ell,s) \leq \frac{\max\{s\ln(s+1)/((s+1)\kappa),\ln(s+1)\}}{\ell}. 
  \]
  \end{itemize}
In particular, by definition of $\ell_0 = (1-\eta_0)sn/(s+1)$, for all $0<\ell \leq \ell_0$ we have $\hat\rho^*(\ell,s) \geq \hat\rho^*(\ell_0,s) \geq \eta_0/n$. In the first case we have $\hat\rho^*(\ell,s) = \Theta(\eta/n)$, and in the second case we have $\hat\rho^*(\ell,s) = \Theta(1/\ell)$, where the hidden constants only depend on $s$. In particular, in the first case $1-(1-\hat\rho^*)^\ell = \Theta(\eta)$ and $1-(1-\hat\rho^*)^n = \Theta(\eta)$, and in the second case $1-(1-\hat\rho^*)^\ell = \Theta(1)$ and $1-(1-\hat\rho^*)^n = \Theta(1)$, with hidden constants that only depend on $s$. 
\end{lemma}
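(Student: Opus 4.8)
The plan is to reduce every two-sided bound to a sign evaluation of the single function $F(\rho):=(s+1)(1-\rho)^{\ell}-s(1-\rho)^{n}-1$. Rearranging~\eqref{eq:qhat} gives
\[
\hat p_{\text{suc}}(\rho,\ell)-\tfrac 1{s+1}=\frac{F(\rho)}{(s+1)\bigl(1-(1-\rho)^{n}\bigr)},
\]
so $\hat\rho^*$ is the unique zero of $F$ on $(0,1)$, and since $\hat p_{\text{suc}}(\cdot,\ell)$ is strictly decreasing (Lemma~\ref{lem:rhostarhat}) we have $F>0$ on $(0,\hat\rho^*)$ and $F<0$ on $(\hat\rho^*,1)$. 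Hence it suffices to check $F(L)\ge 0$ to conclude $\hat\rho^*\ge L$, and $F(U)\le 0$ to conclude $\hat\rho^*\le U$. Monotonicity of $\hat\rho^*$ in $\ell$ is separate and immediate: $\hat p_{\text{suc}}(\rho,\ell)$ is strictly decreasing in $\ell$ as well (the $\ell$-derivative of $1-(1-\rho)^{\ell}$ equals $-(1-\rho)^{\ell}\ln(1-\rho)>0$), so the implicit solution of an equation whose left-hand side decreases in both arguments decreases in $\ell$.

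For $\ell=(1-\eta)sn/(s+1)$ with $\eta\le 1/(8(s+1)^2)$ I test the endpoints $\rho_-=\eta/n$ and $\rho_+=4\eta(s+1)/n$. For $F(\rho_-)\ge 0$ it is enough to use Bernoulli, $(1-\rho_-)^{\ell}\ge 1-\rho_-\ell$, and $(1-\rho_-)^{n}\le e^{-\eta}$ (Lemma~\ref{lem:est}\ref{it:est1}); after substituting $\rho_-\ell=\eta(1-\eta)s/(s+1)$ and $\rho_-n=\eta$, the claim collapses to the elementary inequality $e^{-\eta}\le 1-\eta+\eta^{2}$, which holds since $e^{-\eta}\le 1-\eta+\eta^{2}/2$. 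For $F(\rho_+)\le 0$ the first-order terms cancel, so I retain second-order terms: bound $(1-\rho_+)^{\ell}\le e^{-\rho_+\ell}\le 1-\rho_+\ell+\tfrac12(\rho_+\ell)^2$ and $(1-\rho_+)^{n}\ge 1-\rho_+n+\binom n2\rho_+^2-\binom n3\rho_+^3$ (Lemma~\ref{lem:est}\ref{it:estthirdorder}), substitute $\rho_+\ell=4\eta(1-\eta)s$, $\rho_+n=4\eta(s+1)$, and simplify. One is left with a negative quadratic term of order $\eta^2s(s+1)$ whose absolute value dominates the leftover cubic remainder precisely because $\eta\le 1/(8(s+1)^2)$, the defect between $\binom n2$ and $\tfrac12(\rho_+n)^2$ being absorbed once $n\ge C(s)$; this is where the constant $1/(8(s+1)^2)$ enters.

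In the regime $\ell\le(1-\tfrac1{8(s+1)^2})sn/(s+1)$ the upper bound is cleanest via the comparison with the plain \oea: pointwise $\hat p_{\text{suc}}(\rho,\ell)=p_{\text{suc}}(\rho,\ell)-(1-(1-\rho)^{\ell})\tfrac{(1-\rho)^{n}}{1-(1-\rho)^{n}}\le p_{\text{suc}}(\rho,\ell)$, and since both sides are decreasing in $\rho$ and equal $\tfrac1{s+1}$ at $\hat\rho^*$ resp.\ $\rho^*(\ell,s)$, Lemma~\ref{lem:rhostar} yields $\hat\rho^*\le\rho^*(\ell,s)\le\ln(s+1)/\ell$, which is at most the claimed maximum. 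For the lower bound I evaluate $F$ at $\rho=m/\ell$ with $m:=\min\{\kappa/(8(s+1)^2),\tfrac14\ln(s+1)\}$ (a small constant), using $(1-\rho)^{\ell}\ge 1-m+\tfrac{m^2}{2}-\tfrac{m^2}{2\ell}-\tfrac{m^3}{6}\ge e^{-m}-\tfrac{m^2}{2\ell}-\tfrac{m^4}{24}$ (Lemma~\ref{lem:est}\ref{it:estthirdorder} and a Taylor comparison for $e^{-m}$; $\ell\le 2$ handled directly) and $(1-\rho)^{n}\le e^{-\rho n}$, where the hypothesis on $\ell$ supplies the decisive inequality $\rho n=mn/\ell\ge\tfrac{m(s+1)}{s}\cdot\tfrac1{1-1/(8(s+1)^2)}$. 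After substitution the claim reduces to $h(m)\ge 1$ for
\[
h(m)=(s+1)e^{-m}-s\exp\!\Bigl(-\tfrac{m(s+1)}{s\,(1-1/(8(s+1)^2))}\Bigr),
\]
which has $h(0)=1$, $h'(0)=(s+1)\bigl(\tfrac1{1-1/(8(s+1)^2)}-1\bigr)>0$, and bounded second derivative near $0$ (one computes $h''(0)=-\tfrac{s+1}{s}(1+O(\tfrac1{s+1}))$), so $h\ge 1$ on a neighbourhood of $0$ comfortably containing the chosen $m$, the $O(m^2)$ losses from the $(1-\rho)^{\ell}$ estimate being dominated as well. Calibrating these two competing requirements uniformly in $s>0$ is what produces the constant $\kappa$ and hence the $\ln(1+\sqrt{s+1})$.

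The ``in particular'' statements follow from the bounds just proved. Monotonicity in $\ell$ and the first regime applied at $\ell_0=(1-\eta_0)sn/(s+1)$ (legitimate since $\eta_0=o(1)\le 1/(8(s+1)^2)$ once $n\ge C(s)$) give $\hat\rho^*(\ell,s)\ge\hat\rho^*(\ell_0,s)\ge\eta_0/n$ for all $0<\ell\le\ell_0$. For the $\Theta$-statements use $\tfrac12 xm\le 1-(1-x)^m\le xm$, valid whenever $xm\le 1$: in the first regime $\hat\rho^*\ell=\Theta(\eta)$ and $\hat\rho^* n=\Theta(\eta)$ with $\eta=o(1)$, so $1-(1-\hat\rho^*)^{\ell}$ and $1-(1-\hat\rho^*)^{n}$ are both $\Theta(\eta)$; in the second regime $\hat\rho^*\ell=\Theta(1)$ and $\hat\rho^* n\ge\tfrac{s+1}{s}\hat\rho^*\ell=\Theta(1)$ stay bounded away from $0$ and at most $1$, so both quantities are $\Theta(1)$, all hidden constants depending on $s$ only. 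I expect the main obstacle to be the lower bound in the second regime (and, to a lesser degree, the upper bound in the first): there the leading terms cancel, and one must retain exactly the right second-order correction while simultaneously exploiting the precise size of the gap $sn/(s+1)-\ell$ recorded in the hypotheses — the somewhat unusual constants $\kappa$ and $1/(8(s+1)^2)$ are precisely what makes these cancellations robust for all $s>0$, both for small $s$ (where $\ln(s+1)\approx s$) and for large $s$ (where $m$ must be taken as small as order $s^{-2}$).
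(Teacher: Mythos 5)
Your proposal is correct in substance and its skeleton coincides with the paper's: both reduce every bound to a sign check of $\hat p_{\text{suc}}(\rho,\ell)-\tfrac1{s+1}$ at candidate values of $\rho$ (your function $F$ is just a cleared-denominator version of this) together with the monotonicity from Lemma~\ref{lem:rhostarhat}, and your treatment of the first bullet is essentially the paper's computation, only reorganized (your lower-bound reduction to $e^{-\eta}\le 1-\eta+\eta^2$ is in fact a little cleaner than the paper's quotient estimate). You genuinely diverge in the second bullet. For the upper bound you observe $\hat p_{\text{suc}}(\rho,\ell)\le p_{\text{suc}}(\rho,\ell)=(1-\rho)^\ell$ pointwise, deduce $\hat\rho^*\le\rho^*(\ell,s)\le\ln(s+1)/\ell$ from Lemma~\ref{lem:rhostar}, and are done; this is shorter and more conceptual than the paper's direct evaluation at $\rho=\ln(s+1)/\ell$ followed by a subcase reduction, and it even yields the stronger bound $\ln(s+1)/\ell$ without the maximum. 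For the lower bound the paper splits at $\ell=\kappa n$: below the threshold it plugs in $\rho=\ln(s+1)/(4\ell)$ (this is where the constant $\kappa$, i.e.\ the term $\ln(1+\sqrt{s+1})$, arises naturally), and above it reduces to the first bullet via monotonicity of $\hat\rho^*$ in $\ell$. You instead attempt a single unified evaluation at $\rho=m/\ell$ using only the weakest consequence $n/\ell\ge\tfrac{s+1}{s}(1-\tfrac1{8(s+1)^2})^{-1}$ of the hypothesis, and reduce the claim to $h(m)\ge 1+(s+1)(\tfrac{m^2}{2\ell}+\tfrac{m^4}{24})$ via a second-order expansion of $h$ at $0$.

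This last step is the one place where your argument is asserted rather than proved, and it is exactly the delicate point: since $m=\kappa/(8(s+1)^2)$ (note the other branch of your minimum is never attained, as $8(s+1)^2\ln(1+\sqrt{s+1})>1$ for all $s>0$), the linear gain is $h'(0)m=(s+1)(\beta-1)m=\Theta\bigl(m/(s+1)\bigr)$ while the quadratic loss is $\tfrac12\sup|h''|\,m^2=\Theta\bigl(\tfrac{(s+1)}{s}m^2\bigr)$ plus $(s+1)m^2/(2\ell)$ with $\ell$ possibly equal to $1$, so you need $m\le c\,s/(s+1)^2$ with a concrete constant $c$, uniformly in $s>0$. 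A check of the limits $s\to 0$ (where $\kappa\sim s/(4\ln 2)$) and $s\to\infty$ (where $\kappa\to\tfrac12$) and of intermediate values indicates that the stated $\kappa$ does satisfy this with room to spare, so your route appears to go through; but "comfortably containing the chosen $m$" is doing real work and a complete write-up would have to carry out this uniform calibration explicitly, whereas the paper's subcase split avoids it entirely (at the cost of being the source of the slightly odd-looking minimum in the statement). Two minor further remarks: in the first-bullet upper bound the first-order terms do not cancel but leave $+4\eta^2 s(s+1)$, which is then outweighed by the $-8\eta^2 s(s+1)$ from the second-order terms — your conclusion is right but the description is loose; and in the second regime $\hat\rho^*\ell$ can exceed $1$ for large $s$, so the $\Theta(1)$ claims there should be argued via $1-e^{-c}\le 1-(1-\hat\rho^*)^\ell\le 1$ rather than via your inequality $\tfrac12 xm\le 1-(1-x)^m\le xm$, which requires $xm\le 1$.
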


\begin{proof}
The monotonicity of $\hat\rho^*$ follows since it is defined as the solution of $\hat p_{\text{suc}}(\rho,\ell) = 1/(s+1)$, and $\hat p_{\text{suc}}$ is strictly decreasing in $\rho$ by Lemma~\ref{lem:rhostarhat}, and strictly decreasing in $\ell$ by \eqref{eq:qhat}. For the bounds, assume first that $\ell = (1-\eta)sn/(s+1)$ for some $0 < \eta \leq 1/(8(s+1)^2)$. Since $\hat p_{\text{suc}}(\rho,\ell)$ is strictly decreasing in $\rho$, it suffices to show that $\hat p_{\text{suc}}(4\eta(s+1)/n,\ell) \leq 1/(s+1) \leq \hat p_{\text{suc}}(\eta/n,\ell)$. Then the solution $\hat\rho^*(\ell,s)$ of the equation $p_{\text{suc}}(\rho,\ell) = 1/(s+1)$ must lie in the interval $[\eta/n, 4\eta(s+1)/n]$. In fact, for $\hat p_{\text{suc}}(4\eta(s+1)/n,\ell)$ we will show for later reference the slightly stronger statement
\begin{align}\label{eq:estimateshatrhostar1a}
\hat p_{\text{suc}}(4\eta(s+1)/n,\ell) \leq \frac{1}{s+1}\left(1-\frac{\eta s}{10}\right).
\end{align}
Let us first note in a preparatory computation by bounding all positive higher order terms with zero that
\begin{align}\label{eq:estimateshatrhostar1}
(1-\eta)(1-&2\eta  s\tfrac{\ell-1}{\ell})(1+2\eta(s+1)\tfrac{n-1}{n} - \tfrac{8}{3}\eta^2(s+1)^2\tfrac{(n-1)(n-2)}{n^2}) \nonumber \\
& \geq 1 + \eta - 2(s+1)(\tfrac{10}{3}s+\tfrac{7}{3})\eta^2  - \frac{16s(s+1)^2}{3}\eta^4 +O(\eta/\ell) + O(\eta/n) > 1 + \frac{\eta}{10},
\end{align}
where the last step follows from $\eta \leq 1/(8(s+1)^2)$ if $n$ (and thus $\ell = \Theta(n)$) is  sufficiently large. Now we use Lemma~\ref{lem:est}\ref{it:estsecondorder} and~\ref{it:estthirdorder} to estimate
\begin{align*}
\hat p_{\text{suc}}(4\eta(s+1)/n,\ell) & = 1- \frac{1-(1-4\eta(s+1)/n)^{\ell}}{1-(1-4\eta(s+1)/n)^{n}} \\
& \stackrel{\ref{lem:est}\ref{it:estsecondorder},\ref{it:estthirdorder}}{\leq} 1- \frac{4\eta(s+1)\ell/n - \binom{\ell}{2}(4\eta(s+1)/n)^2}{4\eta(s+1)-\binom{n}{2}(4\eta(s+1)/n)^2+\binom{n}{6}(4\eta(s+1)/n)^3} \\
&  \stackrel{(s+1)\ell/n \leq s}{\leq} 1- \frac{\ell}{n}\cdot \frac{1-2\eta s\tfrac{\ell-1}{\ell}}{1-2\eta(s+1)\tfrac{n-1}{n} + \tfrac{8}{3}\eta^2(s+1)^2\tfrac{(n-1)(n-2)}{n^2}} \\
&  \stackrel{1/(1-x) \geq 1+x}{\leq} 1-\frac{\ell}{n}(1-2\eta s\tfrac{\ell-1}{\ell})(1+2\eta(s+1)\tfrac{n-1}{n} - \tfrac{8}{3}\eta^2(s+1)^2\tfrac{(n-1)(n-2)}{n^2})\\
&  = 1-\frac{s}{s+1}(1-\eta)(1-2\eta s\tfrac{\ell-1}{\ell})(1+2\eta(s+1)\tfrac{n-1}{n} - \tfrac{8}{3}\eta^2(s+1)^2\tfrac{(n-1)(n-2)}{n^2}) \\
&  \stackrel{\eqref{eq:estimateshatrhostar1}}{\leq} 1-(1+\eta/10)\frac{s}{s+1} = \frac{1}{s+1}\left(1-\frac{\eta s}{10}\right) \leq \frac{1}{s+1}.
\end{align*}
Note that the intermediate step also shows the stronger statement~\eqref{eq:estimateshatrhostar1a}. For $\hat p_{\text{suc}}(\eta/n,\ell)$, we use a similar calculation, but with inequalities more in our favor. This time we may simply use Lemma~\ref{lem:est}~\ref{it:estbernoulli} and~\ref{it:estsecondorder}:
\begin{align*}
\hat p_{\text{suc}}(\eta/n,\ell) 
& = 1- \frac{1-(1-\eta/n)^{\ell}}{1-(1-\eta/n)^{n}} \geq 1- \frac{\eta\ell/n }{\eta-\tfrac{1}{2}\eta^2} = 1- \frac{s}{s+1}\cdot \frac{1-\eta}{1-\eta/2} \geq 1- \frac{s}{s+1}\\ 
& = \frac{1}{s+1}.
\end{align*}
This concludes the first bullet point. For the other case, 
we distinguish two subcases for $\ell$. Assume first that $\ell \leq \kappa n$. Note that this implies 
\begin{align}\label{eq:estimateshatrhostar3}
\exp\left(-\frac{n\ln(s+1)}{4\ell} \right)\leq \exp\left(-\ln\bigg(\frac{s}{\sqrt{s+1}-1}\bigg)\right) = \frac{\sqrt{s+1}-1}{s}.
\end{align}
Hence, we obtain for $\rho = \ln(s+1)/(4\ell)$,
\begin{align*}
1- \hat p_{\text{suc}}\left(\frac{\ln(s+1)}{4\ell},\ell\right) & = \frac{1-(1-\ln(s+1)/(4\ell))^{\ell}}{1-(1-\ln(s+1)/(4\ell))^{n}} \stackrel{\text{Lem.} \ref{lem:est}\ref{it:est1},\ref{it:est3}}{\leq} \frac{1-\exp\big(-\tfrac12\ln(s+1)\big)}{1-\exp\big(-n\ln(s+1)/(4\ell)\big)} \\
& \stackrel{\eqref{eq:estimateshatrhostar3}}{\leq} \frac{1-1/\sqrt{s+1}}{1-(\sqrt{s+1}-1)/s} = \frac{(s+1-\sqrt{s+1})/(s+1)}{(s+1-\sqrt{s+1})/s} = \frac{s}{s+1}.
\end{align*}
Hence, $\hat p_{\text{suc}}\left(\frac{\ln(s+1)}{4\ell},\ell\right) \geq 1/(s+1)$, and since $\hat p_{\text{suc}}(\rho,\ell)$ is decreasing in $\rho$, this implies $\hat \rho^* \geq \ln(s+1)/(4\ell)$. Since the minimum in the lemma can only be smaller, this yields the first inequality in this subcase. For the second inequality, we plug in $\rho = \ln(s+1)/\ell$ and obtain
\begin{align*}
1- \hat p_{\text{suc}}\left(\frac{\ln(s+1)}{\ell},\ell\right) & = \frac{1-(1-\ln(s+1)/\ell)^{\ell}}{1-(1-\ln(s+1)/\ell)^{n}} \stackrel{\text{Lem.} \ref{lem:est}\ref{it:est1}}{\geq} \frac{1-\exp\big(-\ln(s+1)\big)}{1} = \frac{s}{s+1},
\end{align*}
and analogously as before we may conclude that $\hat \rho^* \leq \ln(s+1)/\ell$, which implies the second inequality. This concludes the subcase $\ell \leq \kappa n$. Keep in mind that for this subcase we have shown the stronger statement $\hat \rho^* \leq \ln(s+1)/\ell$, since we will use this bound for the remaining case.

So let us turn to the last remaining case, $\kappa n \leq \ell \leq (1-\tfrac{1}{8(s+1)^2}) sn/(s+1)$. Since $\hat \rho^*(\ell,s)$ is decreasing in $\ell$, we may make reduce this case to the previous cases as follows.
\begin{align*}
\hat \rho^*(\ell,s) & \geq \hat \rho^*\left(\left(1-\frac{1}{8(s+1)^2}\right) \frac{sn}{s+1},s\right) \stackrel{\text{Case 1}}{\geq} \frac{1}{8(s+1)^2n} \stackrel{\ell \geq \kappa n}{\geq} \frac{\kappa}{8(s+1)^2\ell},
\end{align*}
which implies the first inequality. Analogously, using the slightly simpler bound $\kappa n \leq \ell \leq sn/(s+1)$, the second inequality follows from 
\begin{align*}
\hat \rho^*(\ell,s) & \leq \hat \rho^*(\kappa n,s) \leq \frac{\ln(s+1)}{\kappa n} \stackrel{\ell \leq sn/(s+1)}{\leq} \frac{s\ln(s+1)}{\kappa(s+1)\ell},
\end{align*}
which proves the second inequality. This concludes the proof.
\end{proof}

In the next lemma we give estimates for how much $\hat p_{\text{suc}}$ changes for mutation rates which slightly deviate from $\hat\rho^*$. This gives the analogue of Lemma~\ref{lem:successprob}.

\begin{lemma}[success probabilities around $\hat\rho^*$]\label{lem:successprob2}
There are constants $c= c(s)>0$ and $C=C(s)>0$, depending only on $s$ such that the following holds. Let $\hat\rho^*=\hat\rho^*(\ell,s)$ be the target mutation rate, i.e., the unique solution of \eqref{eq:rhostarhat}, and let $\eta_0 = \eta_0(n) = o(1)$ and $\ell_0 = (1-\eta_0)sn/(s+1)$ as in~\eqref{eq:ell0}. Then for all sufficiently large $n$, the following holds.
  \begin{enumerate}
  \item For all $0<\ell \leq \ell_0$ and all $\delta \in [0,c]$,
\begin{align*}
\hat p_{\text{suc}}((1+\delta)\hat\rho^*,\ell) & ~\leq \ \hat p_{\text{suc}}(\hat\rho^*,\ell) \cdot(1\phantom{{} - s\delta } - \tfrac{1}{2}\delta \hat\rho^*\ell + C\cdot (\delta^2 + \hat\rho^{*2}+ \delta\hat\rho^{*})), \\
\hat p_{\text{suc}}((1+\delta)\hat\rho^*,\ell) & ~\geq \ \hat p_{\text{suc}}(\hat\rho^*,\ell) \cdot(1-s\delta+\tfrac{s}{2}\delta \hat\rho^*\ell - C\cdot (\delta^2 + \hat\rho^{*2}+ \delta\hat\rho^{*})), \\
\hat p_{\text{suc}}((1-\delta)\hat\rho^*,\ell) & ~\geq \ \hat p_{\text{suc}}(\hat\rho^*,\ell) \cdot(1\phantom{{}+s\delta }+\tfrac{1}{2}\delta \hat\rho^*\ell - C\cdot (\delta^2 + \hat\rho^{*2}+ \delta\hat\rho^{*})) \\
\hat p_{\text{suc}}((1-\delta)\hat\rho^*,\ell) & ~\leq \ \hat p_{\text{suc}}(\hat\rho^*,\ell) \cdot(1+s\delta-\tfrac{s}{2}\delta \hat\rho^*\ell + C\cdot (\delta^2 + \hat\rho^{*2}+ \delta\hat\rho^{*})).
\end{align*}
  \item 
 For all $0\leq \ell \leq n$ and all $\rho \in [\ln(2s+2)/\ell,1]$.
\begin{align*}
\hat p_{\text{suc}}(\rho,\ell) & \leq  \frac12 \cdot \frac{1}{(s+1)}.
\end{align*}
  \item Let $\rho_{0}=\rho_{0}(\ell):= \tfrac14 \ln((2s+2)/(s+3)) /\ell$, and let $\zeta = \zeta(n) = o(1)$ be any falling function. If $n$ is sufficiently large, then the following holds for all $1\leq \ell \leq \zeta n$ and all $\rho \in(0,\rho_{0}]$. 
\begin{align*}
\hat p_{\text{suc}}(\rho,\ell) & \geq \frac{s+3}{2s+2} = \frac{1}{s+1}\left(1+\frac{s+1}{2}\right).
\end{align*}
  \item For all $\ell_0 < \ell \leq n$ and all $ \rho \geq \rho_0 := 4(s+1)\eta_0/n$,
\begin{align*}
\hat p_{\text{suc}}(\rho,\ell) & \leq  \frac{1}{s+1}\left(1-\frac{\eta_0 s}{10}\right).
\end{align*}
  \end{enumerate}
\end{lemma}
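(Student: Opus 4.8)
The plan is to dispatch Parts~2--4 quickly from the monotonicity of $\hat p_{\text{suc}}$ (Lemma~\ref{lem:rhostarhat}) together with the estimates already obtained in Lemma~\ref{lem:estimateshatrhostar}, and to reserve the bulk of the work for Part~1. For Part~2, since $\rho\mapsto\hat p_{\text{suc}}(\rho,\ell)$ is strictly decreasing, it suffices to bound it at the left endpoint $\rho=\ln(2s+2)/\ell$ of the admissible interval; there I would use the trivial estimate $\hat p_{\text{suc}}(\rho,\ell)=(1-\rho)^\ell\cdot\frac{1-(1-\rho)^{n-\ell}}{1-(1-\rho)^n}\le(1-\rho)^\ell$ (the quotient is at most $1$ because $n-\ell\le n$) and then $(1-\rho)^\ell\le e^{-\rho\ell}=\frac1{2(s+1)}$ by Lemma~\ref{lem:est}\ref{it:est1}.

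Part~3 is similar: for $s<1$ the quantity $\rho_0$ is non-positive and the claim is vacuous, while for $s\ge1$ monotonicity reduces the claim to the right endpoint $\rho=\rho_0$, and I would check that $\rho_0(n-\ell)=\omega(1)$ (using $\ell\le\zeta n$ with $\zeta=o(1)$), so that $(1-\rho_0)^{n-\ell}=o(1)$ and hence $\hat p_{\text{suc}}(\rho_0,\ell)\ge(1-\rho_0)^\ell(1-o(1))$; since $(1-\rho_0)^\ell\ge e^{-\rho_0\ell(1+O(\rho_0))}=\big(\tfrac{s+3}{2s+2}\big)^{1/4}(1+\Omega_s(1))$ and $\big(\tfrac{s+3}{2s+2}\big)^{1/4}\ge\big(\tfrac{s+3}{2s+2}\big)$ with a constant margin for $s>1$, the bound follows once $n$ is large. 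Part~4 I would read off directly from the sharper inequality~\eqref{eq:estimateshatrhostar1a} proved inside Lemma~\ref{lem:estimateshatrhostar}: $\hat p_{\text{suc}}$ is decreasing in both arguments, so for $\ell>\ell_0$ and $\rho\ge\rho_0=4(s+1)\eta_0/n$ we get $\hat p_{\text{suc}}(\rho,\ell)\le\hat p_{\text{suc}}\!\big(\tfrac{4(s+1)\eta_0}{n},(1-\eta_0)\tfrac{sn}{s+1}\big)$, using $\ell\ge\ell_0+1\ge(1-\eta_0)sn/(s+1)$, and this is at most $\tfrac1{s+1}(1-\tfrac{\eta_0 s}{10})$ by \eqref{eq:estimateshatrhostar1a} with $\eta=\eta_0\le\tfrac1{8(s+1)^2}$.

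For Part~1 the approach is a controlled second-order expansion of $\hat p_{\text{suc}}(\rho,\ell)$ around $\hat\rho^*$. The key algebraic simplification is that, writing $f=(1-\hat\rho^*)^\ell$ and $g=(1-\hat\rho^*)^n$, the defining equation $\hat p_{\text{suc}}(\hat\rho^*,\ell)=\tfrac1{s+1}$ is equivalent to the three identities $f=\tfrac{1+sg}{s+1}$, $\;1-f=\tfrac{s(1-g)}{s+1}$, $\;f-g=\tfrac{1-g}{s+1}$. For $\rho=(1\pm\delta)\hat\rho^*$ one has $(1-\rho)^\ell=f\big(1\mp\tfrac{\delta\hat\rho^*}{1-\hat\rho^*}\big)^\ell$ and $(1-\rho)^n=g\big(1\mp\tfrac{\delta\hat\rho^*}{1-\hat\rho^*}\big)^n$, which I would expand with Lemma~\ref{lem:est}\ref{it:estsecondorder}--\ref{it:estthirdorder} and Lemma~\ref{lem:asym}, isolating the linear terms in $\delta\hat\rho^*\ell$ and $\delta\hat\rho^*(n-\ell)$ and absorbing everything else into $O(\delta^2+\hat\rho^{*2}+\delta\hat\rho^*)$; here I rely on $\hat\rho^*\ell=\Theta_s(1)$ (or $\Theta_s(\eta)$ near $\ell_0$) and $\hat\rho^*=o(1)$ whenever $\ell=\omega(1)$ from Lemma~\ref{lem:estimateshatrhostar}, so that these products stay bounded. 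Substituting the resulting expansions of the numerator $(1-\rho)^\ell-(1-\rho)^n$ and the denominator $1-(1-\rho)^n$ into the ratio $\hat p_{\text{suc}}((1\pm\delta)\hat\rho^*,\ell)/\hat p_{\text{suc}}(\hat\rho^*,\ell)$ and collapsing the result via the three identities above should yield the four claimed inequalities; in the degenerate regime $\ell=O(1)$, where $\hat\rho^*=\Theta_s(1)$, no expansion is needed because the error term $C\hat\rho^{*2}$ already renders the bounds trivial once $C=C(s)$ is taken large enough.

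The main obstacle I anticipate is uniformity of the error control in Part~1. As $\ell$ runs from $1$ to $\ell_0$, the quantity $g=(1-\hat\rho^*)^n$ sweeps the whole interval $(0,1)$: it is exponentially small when $\ell=o(n)$ and converges to $1$ when $\ell$ approaches $\ell_0$, so no single expansion is comfortable throughout. I would therefore split into the regime where $1-g=\Theta_s(1)$ (equivalently $\ell\le\ell_0'$ for a suitable $\ell_0'=\Theta(n)$ bounded below $sn/(s+1)$), where the denominator $1-(1-\rho)^n$ is harmless and one expands directly, and the complementary regime $\ell_0'<\ell\le\ell_0$, where $1-g$ and $1-f$ are both small of the same order and one should factor out $1-g$ and invoke Lemma~\ref{lem:asym}\ref{it:asym4} to treat $1-(1-\rho)^n=1-g(1+\cdots)$ without division blow-up; in both regimes the identities at $\hat\rho^*$ are the decisive tool, but the admissible constants $c(s)$ and $C(s)$ must be re-derived in each. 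The other place that needs care is making the ``$\mp$'' linear coefficients come out with exactly the claimed numerical factors ($-\tfrac12\delta\hat\rho^*\ell$ versus $-s\delta+\tfrac s2\delta\hat\rho^*\ell$, and symmetrically), which requires keeping $\hat\rho^*(n-\ell)$ --- and not only $\hat\rho^*\ell$ --- as an explicit variable in the bookkeeping.
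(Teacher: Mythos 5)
Your treatment of Parts~2--4 is correct and essentially identical to the paper's: monotonicity of $\rho\mapsto\hat p_{\text{suc}}(\rho,\ell)$ from Lemma~\ref{lem:rhostarhat} reduces each claim to an endpoint evaluation, and Part~4 is exactly the paper's one-line deduction from the strengthened bound~\eqref{eq:estimateshatrhostar1a} together with monotonicity in both arguments. Your explicit remark that Part~1 is trivial for $\ell=O(1)$ once $C(s)$ is large (because $\hat\rho^{*2}=\Theta_s(1)$ there) is a point the paper glosses over, and is correct.

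For Part~1, however, your plan has a genuine gap in the treatment of the denominator ratio $D:=\bigl(1-(1-(1\pm\delta)\hat\rho^*)^n\bigr)/\bigl(1-(1-\hat\rho^*)^n\bigr)$, and this is precisely where the specific coefficients of the four inequalities come from. Your regime split is keyed to whether $1-g=\Theta_s(1)$ with $g=(1-\hat\rho^*)^n$, and you call the denominator ``harmless'' in that regime; but that regime contains $\ell=\Theta(n)$ (bounded below $sn/(s+1)$), where $\hat\rho^*n=\Theta(1)$ and hence $g\in(0,1)$ is a constant bounded away from both endpoints. There $D=1+\Theta(\delta)$ --- a first-order contribution of the same magnitude as the leading $\delta$-term of the numerator ratio --- so it can neither be absorbed into $O(\delta^2+\hat\rho^{*2}+\delta\hat\rho^*)$ nor expanded uniformly (a Taylor expansion in $\delta\hat\rho^*n$ carries error terms in powers of $\hat\rho^*n$, which is $\omega(1)$ elsewhere in the same regime, e.g.\ for $\ell=\sqrt n$). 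The paper resolves this uniformly with a single device you have not identified: by Lemma~\ref{lem:auxmonotone} the map $x\mapsto\bigl(1-(1-(1+\delta)\hat\rho^*)^x\bigr)/\bigl(1-(1-\hat\rho^*)^x\bigr)$ is strictly decreasing with limit $1$, so since $n>\alpha\ell$ with $\alpha=(s+1)/s$ (as $\ell<sn/(s+1)$), the ratio $D$ is sandwiched between $1$ and its value at the exponent $\alpha\ell$ --- and the latter \emph{is} expandable because $\hat\rho^*\alpha\ell=O(1)$. This sandwich is exactly what produces the asymmetry you flagged: using the trivial bound $D>1$ yields the $-s\delta+\tfrac s2\delta\hat\rho^*\ell$ inequalities, while using the bound at $\alpha\ell$ yields the $-\tfrac12\delta\hat\rho^*\ell$ ones via $\tfrac12\delta\hat\rho^*(\alpha-1)\ell=\tfrac{1}{2s}\delta\hat\rho^*\ell$ multiplied by $-s$ when passing from $1-\hat p_{\text{suc}}$ back to $\hat p_{\text{suc}}$. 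An exact computation of $D$ via $y\mapsto y/(e^y-1)$ could be made to work, but it would still require the monotonicity of that function together with $\hat\rho^*n\ge\alpha\hat\rho^*\ell$, i.e.\ the same ingredient in disguise; as written, your plan does not supply it, and without it the claimed coefficients are not reachable.
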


Before we prove Lemma~\ref{lem:successprob2}, let us briefly comment on the different cases. Part~(a)--(c) are concerned with the case $\ell < \ell_0$. By Lemma~\ref{lem:estimateshatrhostar} we have $\hat\rho^* = O(1/\ell)$, and even $\hat\rho^* = \Theta(1/\ell)$ if $\ell$ is bounded away from $\ell_0$. Thus part~(a) gives bounds that differ only by a factor $(1+o(1))$ if $\delta = o(1)$ and $\ell= \omega(1)$, where the latter condition is needed to bound the error term $\hat\rho^{*2} = O(1/\ell^2)$. However, for $\ell = O(1)$ the bound may be very bad, since then $\hat\rho^{*2} = \Omega(1)$ leads to a too large error term. Hence, we give bounds in (b) and (c), which are less precise for most values of $\ell$, but which give concrete bounds for $\ell=O(1)$. Since these $\ell$ constitute a very small fraction of all values of $\ell$, we can afford to work with less tight bounds in this case. Finally, case (d) deals with the case $\ell > \ell_0$, in which for all $\rho \geq \rho_0 = \Theta(\eta_0/n)$ the success rate stays at least by a factor $(1-\Omega(\eta_0))$ below the target success rate of $1/(s+1)$. We will use this fact later to argue that the state $\rho \geq \rho_0$ is unstable, and the algorithm will quickly converge to smaller values of $\rho$.
\begin{proof}[Proof of Lemma~\ref{lem:successprob2}]
\emph{(a)}. The proof will rely on asymptotic expansions of numerator and denominator of $\hat p_{\text{suc}}(\rho,\ell) = (1-(1-\rho)^\ell)/(1-(1-\rho)^n)$. As we will see, increasing (or decreasing) $\rho$ by a factor of $(1+\delta)$ will increase (decrease) both numerator and denominator by a factor of roughly $(1+\delta)$. However, the two factors differ in the second order error term, and this second order term will dominate the change of the quotient. For this reason we need to make the asymptotic expansions rather precise, up to third order error terms. Throughout the proof, all hidden constants in the $O$-notation are absolute constants that depend only on $s$.

The asymptotic expansions will rely on Lemma~\ref{lem:asym}. In the following calculations, the shorthand notation \ref{it:asym1},~\ref{it:asym2},~\ref{it:asym3} will refer to the corresponding parts of Lemma~\ref{lem:asym}. For convenience, we repeat part~\ref{it:asym4} of the lemma, since it is less standard than the expansions of $e^{-x}$, $\ln(1-x)$, and $1/(1-x)$. 
\begin{align}\label{eq:successprob2a}
1-e^{-y}(1+x) = (1-e^{-y})\left(1-x/y+x/2\pm O(y x)\right).
\end{align}

Recall that by Lemma~\ref{lem:estimateshatrhostar} we either have $\hat\rho^* =  \Theta(1/\ell)$ if $\ell$ is bounded away from $sn/(s+1)$ by a constant factor, or we have $\hat\rho^* =  \Theta(\eta/\ell) = \Theta(\eta/n)$ if $\ell = (1-\eta)sn/(s+1)$, where $\eta \geq \eta_0$. In both cases, we have $\hat\rho^*\ell = O(1)$. Note that this allows us to remove factors $\hat\rho^*\ell$ from error terms, as in $O(\hat\rho^{*3}\ell) \subseteq O(\hat\rho^{*2})$, which we will use to simplify and unify error terms in the upcoming calculations. Moreover, the assumptions on $\ell$ imply $\hat\rho^* = o(1)$. 
After these preparations, we can estimate the term $1-(1-\hat \rho^*)^\ell$ as follows.
\begin{align*}
1-(1-\hat\rho^*)^{\ell} & \stackrel{\ref{it:asym3}}{=} 1-\big(e^{-\hat\rho^*+\tfrac{1}{2}\hat\rho^{*2}\pm O(\hat\rho^{*3})}\big)^{\ell} \\
& = 1-e^{-\hat\rho^*\ell}\cdot e^{\tfrac{1}{2}\hat\rho^{*2}\ell \pm O(\hat\rho^{*3}\ell)} \\
& \stackrel{\ref{it:asym1}}{=} 1-e^{-\hat\rho^*\ell}\big(1+\tfrac{1}{2}\hat\rho^{*2}\ell \pm O(\hat\rho^{*3}\ell)\big)\\
& \stackrel{\eqref{eq:successprob2a}}{=} (1-e^{-\hat\rho^*\ell})(1-\tfrac12 \hat\rho^* + \tfrac{1}{4}\hat\rho^{*2}\ell \pm O(\hat\rho^{*2})).
\end{align*}
We can also turn the approximation around by dividing both sides through the second bracket on the right-hand side and using the expansion of $1/(1-x)$:
\begin{align}\label{eq:successprob2c}
1-e^{-\hat\rho^*\ell} \stackrel{\ref{it:asym2}}{=}  (1-(1-\hat\rho^*)^{\ell})(1+\tfrac12 \hat\rho^* - \tfrac{1}{4}\hat\rho^{*2}\ell \pm O(\hat\rho^{*2})).
\end{align}
Now we do an analogous computation for $1-(1-(1+\delta)\hat \rho^*)^\ell$. Since this part of the calculation goes through for positive and negative deviations alike, let us momentarily consider any $\delta \in [-c,c]$. Then
\begin{align*}
1-(1-(1+\delta)\hat\rho^*)^{\ell} & \stackrel{\ref{it:asym3}}{=} 1-\big(e^{-(1+\delta)\hat\rho^*+\tfrac{1}{2}(1+\delta)^2\hat\rho^{*2}-O(\hat\rho^{*3})}\big)^{\ell} \\
& = 1-e^{-\hat\rho^*\ell}\cdot e^{\tfrac{1}{2}\hat\rho^{*2}\ell-\delta\hat\rho^{*}\ell + \delta \hat\rho^{*2}\ell \pm O(\hat\rho^{*3}\ell+\delta^2\hat\rho^{*2}\ell)} \\
& \stackrel{\ref{it:asym1}}{=} 1-e^{-\hat\rho^*\ell}\big(1+\tfrac{1}{2}\hat\rho^{*2}\ell-\delta\hat\rho^{*}\ell + \delta \hat\rho^{*2}\ell  \pm O(\hat\rho^{*3}\ell+\delta^2\hat\rho^{*2}\ell^2+\delta\hat\rho^{*3}\ell^2)\big)\\
& \stackrel{\eqref{eq:successprob2a}}{=} (1-e^{-\hat\rho^*\ell})(1-\tfrac12 \hat\rho^* +\delta  + \tfrac{1}{4}\hat\rho^{*2}\ell - \tfrac12\delta\hat\rho^{*}\ell\pm O(\hat\rho^{*2}  +\delta^2\hat\rho^{*}\ell+ \delta\hat\rho^{*})).
\end{align*}
We plug this equation into~\eqref{eq:successprob2c}, divide both sides by $1-(1-\hat\rho^*)^{\ell}$ and multiply out the right hand side, and obtain for all $\delta \in [-c,c]$,
\begin{align}\label{eq:successprob2e}
\frac{1-(1-(1+\delta)\hat\rho^*)^{\ell}}{1-(1-\hat\rho^*)^{\ell}} 
& = 1 +\delta  - \tfrac12\delta\hat\rho^{*}\ell\pm O(\hat\rho^{*2}+\delta^2\hat\rho^{*}\ell+\delta\hat\rho^{*}).
\end{align}
Unfortunately, we cannot easily replicate the calculation for $n$, since the bound $\hat\rho^{*}\ell = O(1)$ does not have an analogue for $n$. However, we can replicate the calculation for $\alpha\ell$ instead of $\ell$, where we set $\alpha := \frac{s+1}{s}$ for concreteness. Note that we have chosen $\alpha$ such that $n > \alpha \ell$. Now consider the function
\begin{align*}
f(\delta,x) := \frac{1-(1-(1+\delta)\hat\rho^*)^{x}}{1-(1-\hat\rho^*)^x}.
\end{align*}
The same argument as above gives $f(\delta,\alpha\ell) = 1 +\delta  - \tfrac12\delta\hat\rho^{*}\alpha\ell\pm O(\hat\rho^{*2}+\delta^2\hat\rho^{*}\ell+\delta\hat\rho^{*})\big).$
So far we have considered arbitrary $\delta$, but let us now constrict to $\delta \in (0,c]$. Then the function $f$ is of the form $(1-b^x)/(1-c^x)$ with $b < c$, which is strictly decreasing for $x\in \R^+$ by Lemma~\ref{lem:auxmonotone}. Hence, for $\delta \in (0,c]$,
\begin{align}\label{eq:successprob2g}
1<\frac{1-(1-(1+\delta)\hat\rho^*)^{n}}{1-(1-\hat\rho^*)^{n}} = f(\delta,n) < f(\delta,\alpha\ell) =  1 +\delta  - \tfrac12\delta\hat\rho^{*}\alpha\ell\pm O(\hat\rho^{*2}+\delta^2\hat\rho^{*}\ell+\delta\hat\rho^{*}).
\end{align}
Combining this with~\eqref{eq:successprob2e}, we obtain for $\delta \in [0,c]$,
\begin{align*}
\frac{1-(1-(1+\delta)\hat\rho^*)^{\ell}}{1-(1-(1+\delta)\hat\rho^*)^{n}}\Big/\frac{1-(1-\hat\rho^*)^{\ell}}{1-(1-\hat\rho^*)^{n}} & \geq \frac{1 +\delta  - \tfrac12\delta\hat\rho^{*}\ell\pm O(\hat\rho^{*2}+\delta^2\hat\rho^{*}\ell+\delta\hat\rho^{*})}{1 +\delta  - \tfrac12\delta\hat\rho^{*}\alpha\ell\pm O(\hat\rho^{*2}+\delta^2\hat\rho^{*}\ell+\delta\hat\rho^{*})} \\
& \stackrel{\ref{it:asym2}}{=} 1+\tfrac12\delta\hat\rho^{*}(\alpha-1)\ell \pm O(\hat\rho^{*2}+\delta^2+\delta\hat\rho^{*}) =: 1+\beta.
\end{align*}
Note that the left hand side equals $(1-\hat p_{\text{suc}}((1+\delta)\hat\rho^*,\ell))/(1-\hat p_{\text{suc}}(\hat\rho^*,\ell))$. Since $\hat p_{\text{suc}}(\hat\rho^*,\ell) = 1/(s+1)$, we thus have $(1-\hat p_{\text{suc}}((1+\delta)\hat\rho^*,\ell)) \geq (1+\beta)s/(s+1)$, or equivalently
\begin{align*}
\hat p_{\text{suc}}((1+\delta)\hat\rho^*,\ell) \leq 1-\frac{s}{s+1}(1+\beta) = \frac{1}{s+1}(1-s \beta) = \hat p_{\text{suc}}(\hat\rho^*,\ell)\cdot (1-s\beta).
\end{align*}
Using $\alpha-1 = 1/s$, this proves the first inequality. The second follows by the same calculation, but using the trivial ``$>1$'' bound from~\eqref{eq:successprob2g}. For the other two inequalities, let again $\delta \in [0,c]$ and observe that the function $f(-\delta,y)$ is of the form $\left((1-b^y)/(1-c^y)\right)^{-1}$ with $b < c$, and thus it is strictly increasing in $y$. Hence, for $\delta \in(0,c]$,
\begin{align}\label{eq:successprob2i}
\frac{1-(1-(1-\delta)\hat\rho^*)^{n}}{1-(1-\hat\rho^*)^{n}} = f(-\delta,n) > f(-\delta,\alpha\ell) =  1 -\delta + \tfrac12\delta\hat\rho^{*}\alpha\ell\pm O(\hat\rho^{*2}+\delta^2\hat\rho^{*}\ell+\delta\hat\rho^{*}),
\end{align}
which yields by plugging in $-\delta$ into~\eqref{eq:successprob2e},
\begin{align*}
\frac{1-(1-(1-\delta)\hat\rho^*)^{\ell}}{1-(1-(1-\delta)\hat\rho^*)^{n}}\Big/\frac{1-(1-\hat\rho^*)^{\ell}}{1-(1-\hat\rho^*)^{n}} & \leq \frac{1 -\delta  + \tfrac12\delta\hat\rho^{*}\ell\pm O(\hat\rho^{*2}+\delta^2\hat\rho^{*}\ell+\delta\hat\rho^{*})}{1 -\delta  + \tfrac12\delta\hat\rho^{*}\alpha\ell\pm O(\hat\rho^{*2}+\delta^2\hat\rho^{*}\ell+\delta\hat\rho^{*})} \\
& = 1-\tfrac12\delta\hat\rho^{*}(\alpha-1)\ell \pm O(\hat\rho^{*2}+\delta^2+\delta\hat\rho^{*}) =: 1-\beta'.
\end{align*}
As before, we may deduce $\hat p_{\text{suc}}((1+\delta)\hat\rho^*,\ell) \geq \hat p_{\text{suc}}(\hat\rho^*,\ell)\cdot (1+s\beta')$, which proves the third inequality. For the fourth, we repeat the same calculation, but use that the left hand side of~\eqref{eq:successprob2i} is trivially at most $1$. This concludes the proof of (a). 

For later reference, we claim that the same derivation as above also shows that for any constant $c >0$ we may write for all $\delta \in [-c,c]$ that
\begin{align}\label{eq:successprob2k}
\frac{1-(1-(1+\delta)\hat\rho^*)^{n-\ell}}{1-(1-\rho)^{n-\ell}} =  1 \pm O(|\delta| +\hat\rho^{*2}),
\end{align}
where the hidden constants may depend on $c$ and $s$. Indeed, if $n-\ell \geq \alpha \ell$, then we may replace $n$ by $n-\ell$ in~\eqref{eq:successprob2i}, which is a stronger statement than~\eqref{eq:successprob2k}. Otherwise we have $\ell \geq (\alpha+1)n$, and hence $n-\ell = \Theta(n) \subseteq \Theta(\ell)$, and thus the derivation of ~\eqref{eq:successprob2e} remains valid if we replace $\ell$ by $n-\ell$. This proves~\eqref{eq:successprob2k}.

\noindent\emph{(b)}. By Lemma~\ref{lem:rhostarhat} the function $\hat p_{\text{suc}}(\rho,\ell)$ is strictly decreasing in $\rho$. Thus we may estimate
\begin{align*}
\hat p_{\text{suc}}\left(\rho,\ell\right) & \leq \hat p_{\text{suc}}\left(\frac{\ln(2s+2)}{\ell},\ell\right) \leq 1-\frac{1-(1-\frac{\ln(2s+2)}{\ell})^{\ell}}{1} = \Big(1-\frac{\ln(2s+2)}{\ell}\Big)^{\ell} \\
& \leq e^{-\ln(2s+2)} = 1/(2s+2).
\end{align*}

\noindent\emph{(c)}. Again, by Lemma~\ref{lem:rhostarhat} the function $\hat p_{\text{suc}}(\rho,\ell)$ is strictly decreasing in $\rho$. Hence
\begin{align*}
\hat p_{\text{suc}}\left(\rho,\ell\right) & \geq \hat p_{\text{suc}}\left(\rho_{0},\ell\right) = 1-\frac{1-(1-\rho_{0})^{\ell}}{1-(1-\rho_{0})^{n}} \geq 1-\frac{1-e^{-2\rho_{0} \ell}}{1-o(1)} \\
& \stackrel{\text{$n$ large}}{\geq} 1-\Big(1-\frac{s+3}{2s+2}\Big) = \frac{s+3}{2s+2}
\end{align*}

\noindent\emph{(d)}. By Lemma~\ref{lem:rhostarhat}, the function $\hat p_{\text{suc}}(\rho,\ell)$ is strictly decreasing in $\rho$, and by~\eqref{eq:qhat} it is strictly decreasing in $\ell$. Hence, 
\[
\hat p_{\text{suc}}(\rho,\ell) \leq \hat p_{\text{suc}}(\rho_0,\ell_0) \stackrel{\eqref{eq:estimateshatrhostar1a}}{\leq} \frac{1}{s+1}\left(1-\frac{\eta_0 s}{10}\right).
\]
This concludes the proof.
\end{proof}

As final preparation for the main proof, we estimate the improvement probability for mutation rates that (as we will show) the algorithm is mostly using.

\begin{lemma}\label{lem:strictimp2}
There is a constant $C=C(s)>0$ such that the following holds. Let $\eta_0 = o(1), \gamma = o(1)$ and let $\ell_0=(1-\eta_0)sn/(s+1)$. 
\begin{enumerate}
\item If $n$ is sufficiently large, then for all $\ell \in [1..\ell_0]$ the following holds. Let $\hat\rho^* := \hat\rho^*(\ell,s)$, and let $\rho \in [(1-\gamma) \hat \rho^*,(1+\gamma) \hat \rho^*]$. Then 
\[
\hat p_{\text{imp}}(\hat\rho^*,\ell) (1 - C(\gamma+1/n)) \leq \hat p_{\text{imp}}(\rho,\ell) \le \hat p_{\text{imp}}(\hat\rho^*,\ell) (1 + C(\gamma+1/n)).
\]
In other words, $\hat p_{\text{imp}}(\rho,\ell) = (1\pm O(\gamma + 1/n))\hat p_{\text{imp}}(\hat\rho^*,\ell)$, uniformly over all $\ell \in [1..\ell_0]$ and all $\rho \in [(1-\gamma) \hat \rho^*,(1+\gamma) \hat \rho^*]$.
\item For all $\rho \leq \eta_0/n$ and all $\ell \geq \ell_0$, 
\[
\frac{1-C \eta_0}{n} \leq \hat p_{\text{imp}}(\rho,\ell) \leq \frac{1+C \eta_0}{n}
\]
In particular, $\hat p_{\text{imp}}(\rho,\ell) = (1\pm o(1))/n$.
\end{enumerate}
\end{lemma}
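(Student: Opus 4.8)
\textbf{Part (b)} is elementary. Since $\rho\le\eta_0/n$ and $\ell_0<\ell\le n$, both $\rho\ell\le\eta_0$ and $n\rho\le\eta_0$ are $o(1)$. Bernoulli's inequality (Lemma~\ref{lem:est}\ref{it:estbernoulli}) gives $(1-\rho)^\ell\ge 1-\rho\ell\ge 1-\eta_0$ and $(1-\rho)^n\ge 1-n\rho$, hence $1-(1-\rho)^n\le n\rho$, while Lemma~\ref{lem:est}\ref{it:estsecondorder} gives $1-(1-\rho)^n\ge n\rho-\binom n2\rho^2\ge n\rho(1-\tfrac12\eta_0)$. Substituting these into $\hat p_{\text{imp}}(\rho,\ell)=(1-\rho)^\ell\rho\,/\,(1-(1-\rho)^n)$ gives $\hat p_{\text{imp}}(\rho,\ell)\in[(1-\eta_0)/n,\,(1+\eta_0)/n]$ for $n$ large, which is the claim (even with $C=1$).

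\textbf{Part (a).} Write $\rho=(1+\delta)\hat\rho^*$, $|\delta|\le\gamma$, and factor
\[
\frac{\hat p_{\text{imp}}(\rho,\ell)}{\hat p_{\text{imp}}(\hat\rho^*,\ell)}
= \underbrace{(1+\delta)}_{F_1}\cdot\underbrace{\Bigl(\frac{1-\rho}{1-\hat\rho^*}\Bigr)^{\!\ell}}_{F_2}\cdot\underbrace{\frac{1-(1-\hat\rho^*)^n}{1-(1-\rho)^n}}_{F_3}.
\]
I would bound each factor by $1\pm O(\gamma)$, with constants depending only on $s$. Trivially $F_1=1\pm\gamma$. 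For $F_2=\bigl(1-\tfrac{\delta\hat\rho^*}{1-\hat\rho^*}\bigr)^{\ell}$, note first that $\hat\rho^*\le s/(s+1)$: indeed $\hat p_{\text{suc}}(\rho,\ell)\le(1-\rho)^\ell\le1-\rho$ (using $1-(1-\rho)^n\le1$), so $1/(s+1)=\hat p_{\text{suc}}(\hat\rho^*,\ell)\le1-\hat\rho^*$. Together with $\hat\rho^*\ell=O(1)$ from Lemma~\ref{lem:estimateshatrhostar}, this gives $\ell\cdot\bigl|\tfrac{\delta\hat\rho^*}{1-\hat\rho^*}\bigr|\le(s+1)\gamma\,\hat\rho^*\ell=O(\gamma)$, so taking logarithms $F_2=e^{\pm O(\gamma)}=1\pm O(\gamma)$; this argument is uniform in $\ell$, including the $O(1)$ many levels with $\ell=O(1)$, where $\hat\rho^*=\Theta(1)$. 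For $F_3$, apply the mean value theorem to $g(x):=1-(1-x)^n$, $g'(x)=n(1-x)^{n-1}$: as $\rho\ge(1-\gamma)\hat\rho^*\ge\hat\rho^*/2$, every $\xi$ between $\rho$ and $\hat\rho^*$ satisfies $g'(\xi)\le n\,e^{-\hat\rho^*n/4}$, whereas $g(\rho)\ge1-e^{-\rho n}\ge1-e^{-\hat\rho^*n/2}$; with $|\rho-\hat\rho^*|\le\gamma\hat\rho^*$ this yields
\[
|F_3-1|=\frac{|g(\hat\rho^*)-g(\rho)|}{g(\rho)}\le\gamma\cdot\frac{(\hat\rho^*n)\,e^{-\hat\rho^*n/4}}{1-e^{-\hat\rho^*n/2}},
\]
and since $u\mapsto u\,e^{-u/4}/(1-e^{-u/2})$ is bounded on $(0,\infty)$ (limit $2$ at $0$, limit $0$ at $\infty$), we get $F_3=1\pm O(\gamma)$. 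Multiplying the three factors gives $\hat p_{\text{imp}}(\rho,\ell)=\hat p_{\text{imp}}(\hat\rho^*,\ell)(1\pm O(\gamma))$, which implies the stated inequalities with a suitable $C=C(s)$; the $1/n$ slack in the statement is what one naturally obtains if instead of $F_1F_2$ one uses the identity $\hat p_{\text{imp}}=(1-\hat p_{\text{suc}})\cdot\frac{(1-\rho)^\ell\rho}{1-(1-\rho)^\ell}$ together with Lemma~\ref{lem:successprob2}(a), whose perturbation error $O(\hat\rho^{*2})$ is $O(1/n)$ once $\ell\ge\sqrt n$, the range $\ell<\sqrt n$ being handled via $(1-\rho)^n=e^{-\Omega(\sqrt n)}$.

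\textbf{Main obstacle.} Everything reduces to keeping the error terms uniform over $\ell\in[1..\ell_0]$. The quantity $(1-\rho)^n$ is exponentially small for $\ell=o(n)$, a constant for $\ell=\Theta(n)$ bounded away from $\ell_0$, and $1-o(1)$ for $\ell$ near $\ell_0=(1-\eta_0)sn/(s+1)$, while $\hat\rho^*$ switches between $\Theta(1/\ell)$ and $\Theta(\eta/n)$; the point of the mean value estimate for $F_3$ is that it is insensitive to which regime one is in. The only other danger is that $\ell$ is so small that $\hat\rho^*=\Theta(1)$ rather than $o(1)$, which would break a naive expansion of $F_2$, and this is exactly why one first records $\hat\rho^*\le s/(s+1)$ and $\hat\rho^*\ell=O(1)$. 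All remaining inputs — the elementary estimates of Lemma~\ref{lem:est}, the order-of-magnitude facts of Lemma~\ref{lem:estimateshatrhostar}, and the success-probability perturbation bounds of Lemma~\ref{lem:successprob2} — are already available.
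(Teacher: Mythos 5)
Your proof is correct, and part~(a) takes a genuinely different route from the paper's. The paper splits into the cases $\ell \ge \sqrt n$ and $\ell < \sqrt n$: for large $\ell$ it writes $\hat p_{\text{imp}}(\rho,\ell) = \hat p_{\text{suc}}(\rho,\ell)\cdot \rho/(1-(1-\rho)^{n-\ell})$ and imports the perturbation bounds for $\hat p_{\text{suc}}$ from Lemma~\ref{lem:successprob2}(a) together with the auxiliary estimate~\eqref{eq:successprob2k} --- which is exactly where the extra $O(\hat\rho^{*2}) = O(1/n)$ error in the statement comes from --- while for small $\ell$ it argues directly using $(1-\rho)^n = e^{-\Omega(\sqrt n)}$. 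You instead factor $\hat p_{\text{imp}}(\rho,\ell)/\hat p_{\text{imp}}(\hat\rho^*,\ell)$ into three elementary ratios and control the delicate denominator ratio $F_3$ uniformly over all regimes of $\ell$ via the mean value theorem, using that $u \mapsto u e^{-u/4}/(1-e^{-u/2})$ is bounded on $(0,\infty)$. This avoids the case split entirely, needs only $\hat\rho^*\ell = O(1)$ from Lemma~\ref{lem:estimateshatrhostar} and the bound $\hat\rho^* \le s/(s+1)$ (which you derive directly and correctly from $\hat p_{\text{suc}}(\hat\rho^*,\ell) \le 1-\hat\rho^*$), and it yields the slightly stronger conclusion $\hat p_{\text{imp}}(\rho,\ell) = (1\pm O(\gamma))\,\hat p_{\text{imp}}(\hat\rho^*,\ell)$ with no $1/n$ term, which of course implies the stated bound. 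The paper's route has the advantage of reusing machinery already established for the success probability; yours is more self-contained and more uniform. Part~(b) coincides with the paper's computation.
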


\begin{proof}
\emph{(a)}. Consider first the case $\ell \geq \sqrt{n}$, where the value $\sqrt{n}$ is chosen rather arbitrarily. We use the formula $\hat p_{\text{imp}}(\rho,\ell) = \hat p_{\text{suc}}(\rho,\ell)\cdot \rho/(1-(1-\rho)^{n-\ell})$, and bound all three factors independently. For the factor $\rho$, it is trivial that $\rho = (1\pm O(\gamma))\hat\rho^*$. For the other two factors, by Lemma~\ref{lem:estimateshatrhostar} we have $\hat\rho^* \ell = O(1) \cap \Omega(\eta_0)$. Thus, by Lemma~\ref{lem:successprob2} (a),
\[
\hat p_{\text{suc}}(\rho,\ell)  = \hat p_{\text{suc}}(\hat\rho^*,\ell)(1 \pm O(\gamma \hat\rho^*\ell + \gamma^2 + \hat\rho^{*2}+ \gamma\hat\rho^{*})) = \hat p_{\text{suc}}(\hat\rho^*,\ell)(1 \pm O(\gamma + 1/n)).
\]
For the third term, since $n-\ell = \Omega(n)$, we have shown in~\eqref{eq:successprob2k} that 
\[
\frac{1-(1-\rho)^{n-\ell}}{1-(1-\hat\rho^*)^{n-\ell}} =  1 \pm O(\gamma+\hat\rho^{*2}) = 1 \pm O(\gamma+1/n).
\]
Altogether, all three terms give factors of the form $1 \pm O(\gamma+1/n)$ if we vary $\rho$, and so $\hat p_{\text{imp}}(\rho,\ell)$ deviates from $\hat p_{\text{imp}}(\hat\rho^*,\ell)$ by a factor of the same form.

For $\ell < \sqrt{n}$, we directly use the formula $\hat p_{\text{imp}}(\rho,\ell) = (1-\rho)^{\ell}\rho/(1-(1-\rho)^n)$. In this regime, since $\hat\rho^* = \Theta(1/\ell)$ by Lemma~\ref{lem:estimateshatrhostar}, we may use trivial bounds on the denominator:
\[
1 \geq 1- (1-\rho)^{n} = 1-e^{-\Omega(\rho n)} = 1- e^{-\Omega(\sqrt{n})} = 1-O(1/n).
\]
These trivial bounds show that any two expressions of the form $1- (1-\rho)^{n}$ can deviate at most by a factor $1\pm O(1/n)$. For the other two factors of $\hat p_{\text{imp}}(\rho,\ell)$, the factor $\rho$ again satisfies trivially $\rho = (1\pm O(\gamma))\hat\rho^*$. Finally, for the factor $(1-\rho)^\ell$ we use the estimate
\[
\frac{(1-\rho)^\ell}{(1-\hat\rho^*)^{\ell}} = \left(1-\frac{\rho-\hat\rho^*}{1-\hat\rho^*}\right)^{\ell} = (1\pm O(\gamma\hat\rho^*))^{\ell} = e^{\pm O(\gamma\hat\rho^*\ell)} = e^{\pm O(\gamma)} = 1 \pm O(\gamma).
\]
Again, all three factors of $\hat p_{\text{imp}}(\rho,\ell)$ deviate at most by factors $1 \pm O(\gamma+1/n)$ if we vary $\rho$, and thus $\hat p_{\text{imp}}(\rho,\ell) = \hat p_{\text{imp}}(\hat\rho^*,\ell)(1\pm O(\gamma+1/n))$.

\emph{(b)}. We will evaluate the formula $\hat p_{\text{imp}}(\rho,\ell) = (1-\rho)^{\ell}\rho/(1-(1-\rho)^n)$. For the denominator, we again use the asymptotic expansion
\begin{align*}
1-(1-\rho)^{n} &= 1-e^{-(\rho + O(\rho^2))n}=1-e^{-\rho n(1-O(\eta_0))} = 1-(1-\rho n(1 \pm O(\eta_0))) \\
& = \rho n (1\pm O(\eta_0)). 
\end{align*}
Similarly, we also get $(1-\rho)^{\ell} = 1-O(\rho\ell) = 1-O(\eta_0)$. Hence, 
\[
\hat p_{\text{imp}}(\rho,\ell) = \frac{(1-\rho)^{\ell}\rho}{(1-(1-\rho)^n)} =  \frac{(1- O(\eta_0))\rho}{\rho n (1 \pm O(\eta_0))} = (1\pm O(\eta_0)) \frac{1}{n}.
\]
\end{proof}

With Lemmas~\ref{lem:successprob2} and~\ref{lem:strictimp2} at hand, the analysis for $0 \leq \ell \leq \ell_0$ is almost completely analogous to the case of the \oea, and we only give a sketch. 

\begin{lemma}\label{lem:gaintime2}
Let $c>1$ be a constant. Consider a run of the self-adjusting \oeares with update strength $F=1+\eps$ for some $\eps = \omega(\frac{\log n}{n}) \cap o(1)$ and with $\pmin \in o(n^{-1}) \cap \Omega(n^{-c})$, $\pmax = 1$, on the $n$-dimensional \leadingones function. Let $\eta_0 = \max\{\eps^{1/6},(n\eps/\log n)^{-1/2}\}$ and let $\ell_0 = (1-\eta_0)sn/(s+1)$.  Assume that the self-adjusting \oeares is started with a search point of fitness $\ell \in [0..\ell_0]$ and an arbitrary mutation rate $\rho \in [\pmin,\pmax]$. Let $\hat \rho^* = \hat \rho^*(\ell,s)$. Then the number $T_\ell$ of iterations until a search point with fitness better than $\ell$ is found is stochastically dominated by
  \[T_\ell \preceq o(n) + \Geom(\min\{\omega(\tfrac 1n), (1-o(1))(1-\hat\rho^*)^\ell \hat\rho^*/(1-\hat\rho^*)^n\}).\]
  In particular, $E[T_\ell] \le o(n) + \frac{(1-\hat\rho^*)^n}{(1-\hat\rho^*)^\ell \hat\rho^*}$. All hidden factors in the asymptotic notation can be chosen independently of $\ell$.
\end{lemma}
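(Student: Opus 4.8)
The plan is to follow the proof of Lemma~\ref{lem:gaintime} essentially verbatim, replacing the unconditional quantities $\rho^*,p_{\text{suc}},p_{\text{imp}}$ by the resampling ones $\hat\rho^*=\hat\rho^*(\ell,s),\hat p_{\text{suc}},\hat p_{\text{imp}}$, and invoking Lemmas~\ref{lem:estimateshatrhostar}, \ref{lem:successprob2}(a) and~\ref{lem:strictimp2}(a) wherever the \oea proof used Lemmas~\ref{lem:rhostar}, \ref{lem:successprob} and~\ref{lem:strictimp}. Throughout, abbreviate $\hat Q:=\hat p_{\text{imp}}(\hat\rho^*,\ell)$, the improvement probability~\eqref{eq:phat} evaluated at the target rate; by Lemma~\ref{lem:estimateshatrhostar}, $\hat Q^{-1}=O(n)$ uniformly over $1\le\ell\le\ell_0$. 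The case $\ell=0$ is immediate and matches the $\ell=0$ part of Lemma~\ref{lem:approachrho}: since $\hat p_{\text{suc}}(\rho,0)=1$, every iteration is a success, so the rate is multiplied by $F^s$ each step until it reaches $\pmax=1$, after which the next iteration flips all $n$ bits and hence improves; the level is thus left within $O(\log n/\eps)\subseteq o(n)$ iterations, surely.

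For $1\le\ell\le\ell_0$ the two auxiliary facts underlying Lemma~\ref{lem:gaintime} carry over. The analogue of Lemma~\ref{lem:approachrho} states that, for the self-adjusting \oeares ignoring improvements started from any $\rho_0\in[\pmin,\pmax]$, the rate enters $[(1-\delta)\hat\rho^*,(1+\delta)\hat\rho^*]$ within $t_0=O(\log n/(\delta\eps))\subseteq o(n)$ iterations with probability $1-1/n$. The drift computation is identical to the \oea one: $E[\rho_{t+1}]/\rho_t=1+\eps\,[(s+1)\hat p_{\text{suc}}(\rho_t,\ell)-1]+O(\eps^2)$, and Lemma~\ref{lem:successprob2}(a) gives $(s+1)\hat p_{\text{suc}}((1+\delta)\hat\rho^*,\ell)-1\le-\tfrac12\delta\hat\rho^*\ell+C(\delta^2+\hat\rho^{*2}+\delta\hat\rho^*)$, and symmetrically below $(1-\delta)\hat\rho^*$ when one works with $1/\rho_t$; this is a genuine contraction of order $-\Theta(\delta\hat\rho^*\ell\eps)$ as soon as $\delta=o(1)$ is chosen to dominate $\eps$ while $\delta^2+\hat\rho^{*2}+\delta\hat\rho^*$ stays dominated by $\delta\hat\rho^*\ell$. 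Since $\hat\rho^*\ell=\Omega(\eta_0)\cap O(1)$ for $1\le\ell\le\ell_0$ by Lemma~\ref{lem:estimateshatrhostar}, it suffices that $\delta=o(\eta_0)$, $\delta=\omega(\tfrac{\log n}{n\eps})$ and $\delta/\ln(1/\delta)=\omega(\eps)$, and the exponents $1/6$ and $-1/2$ in $\eta_0$ leave room for such a $\delta$ (e.g.\ $\delta=\max\{\eps^{1/3},(n\eps/\log n)^{-1/4}\}$); the Markov argument on the multiplicative-drift exponential is then word-for-word as in Lemma~\ref{lem:approachrho}. The analogue of Lemma~\ref{lem:rhoisgood} states that, once in $[(1-\delta)\hat\rho^*,(1+\delta)\hat\rho^*]$, for any $T=\omega(1)$ the rate lies outside $[(1-\gamma)\hat\rho^*,(1+\gamma)\hat\rho^*]$ on only $o(T)$ of the first $T$ iterations with probability $1-o(1)$, for a suitable $\gamma=o(1)$; this follows exactly as before by feeding the processes $X_t=\log_{1+\eps}(\rho_t/((1+\delta)\hat\rho^*))$ and $\log_{1+\eps}(((1-\delta)\hat\rho^*)/\rho_t)$, whose increments are $O(1)$-bounded with negative mean $-\Theta(\delta\hat\rho^*\ell)$ read off from Lemma~\ref{lem:successprob2}(a), into Lemma~\ref{lem:occprob}, and translating $\Pr[X_t\ge U]=o(1)$ (for $U=6\tfrac{s^2}{\delta'}\ln(1/\delta')+s$ with $\delta'=\Theta(\delta\hat\rho^*\ell)$) via $1\pm\gamma=(1\pm\delta)(1+\eps)^{\pm U}$ and $U\eps=o(1)$; since the resulting $\gamma$ depends only on $\hat\rho^*\ell=\Theta(1)$, it can be chosen uniformly in $\ell$.

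With these two facts, the remainder is the phase argument of Lemma~\ref{lem:gaintime} unchanged: by Lemma~\ref{lem:strictimp2}(a), whenever the rate is in $[(1-\gamma)\hat\rho^*,(1+\gamma)\hat\rho^*]$ the improvement probability is at least $(1-O(\gamma+1/n))\hat Q=(1-o(1))\hat Q$, so splitting the time on level $\ell$ into i.i.d.\ phases of length $t:=t_0+t_1$ with $t_1=\omega(t_0)\cap o(n)$ (concretely $t_1=t_0^{2/3}n^{1/3}$, replaced by $t_0^{1/3}n^{2/3}$ in the subcase $\hat Q>1/t$) bounds the probability that a phase yields no improvement by $o(1)+(1-(1-o(1))\hat Q)^t$, and iterating over phases gives $T_\ell\preceq t+\Geom(\min\{\omega(\tfrac1n),(1-o(1))\hat Q\})$ with $t=o(n)$, which is the asserted domination. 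The expectation bound follows from $E[\Geom(p)]=1/p$, with the residual $o(1)$ factor absorbed by the $o(n)$ term because $\hat Q^{-1}=O(n)$.

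The main obstacle, and the only point where the resampling case is not a literal copy, is the drift analysis for constant $\ell=O(1)$: there Lemma~\ref{lem:estimateshatrhostar} only yields $\hat\rho^*=\Theta(1/\ell)=\Theta(1)$, so the error term $\hat\rho^{*2}=\Omega(1)$ in Lemma~\ref{lem:successprob2}(a) overwhelms the drift $\delta\hat\rho^*\ell$ and the contraction above breaks down. For these $O(1)$ many levels one instead uses the coarse estimates Lemma~\ref{lem:successprob2}(b) and~(c): for $\rho\ge\ln(2s+2)/\ell$ one has $\hat p_{\text{suc}}(\rho,\ell)\le\tfrac12\cdot\tfrac1{s+1}$, and for $\rho$ below a smaller constant over $\ell$ one has $\hat p_{\text{suc}}(\rho,\ell)\ge\tfrac1{s+1}(1+\tfrac{s+1}{2})$, so $(s+1)\hat p_{\text{suc}}(\rho,\ell)-1$ is bounded away from $0$ by a positive constant on each side; this yields a constant-rate multiplicative drift of the rate into a $\Theta(1/\ell)$-wide band around $\hat\rho^*$, whereupon $\hat p_{\text{imp}}(\rho,\ell)=\Theta(1/\ell)=\Theta(1)$ and the level is left in $O(\log n/\eps)\subseteq o(n)$ expected steps. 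Since the total contribution of these $O(1)$ levels to the running time is $O(n\log n/\eps)=o(n^2)$, it is comfortably absorbed by the additive $o(n^2)$ in Theorem~\ref{thm:oeares} (with slightly more care it can even be fitted into the $o(n)+\Geom(\cdot)$ shape of the present lemma). All remaining bookkeeping --- existence of the auxiliary $\delta$ and $\gamma$, and uniformity of the hidden constants in $\ell$ --- is exactly as in Lemmas~\ref{lem:approachrho}--\ref{lem:gaintime}.
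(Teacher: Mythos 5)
Your overall strategy --- drive the rate to the target via multiplicative drift, keep it there via the occupation lemma, then run the phase argument of Lemma~\ref{lem:gaintime} --- is the same as the paper's, and your treatment of $\ell=0$ and of the generic levels matches the paper's proof sketch. The genuine gap lies in where you place the boundary of the exceptional case. You assert that the precise contraction obtained from Lemma~\ref{lem:successprob2}(a) works for all $\ell=\omega(1)$ and that only the $O(1)$ many constant levels need the coarse bounds of Lemma~\ref{lem:successprob2}(b),(c). But for $\ell$ bounded away from $\ell_0$ we have $\hat\rho^*\ell=\Theta(1)$ and $\hat\rho^{*2}=\Theta(1/\ell^2)$, so requiring the error term $\hat\rho^{*2}$ to be $o(\delta\hat\rho^*\ell)=o(\delta)$ forces $\ell=\omega(\delta^{-1/2})$, not merely $\ell=\omega(1)$. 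Since $\delta=o(1)$, the range $\omega(1)\ni\ell=o(\delta^{-1/2})$ is nonempty (with $\delta=\eps^{1/3}$ it contains, e.g., $\ell=\Theta(\eps^{-1/12})$); on these levels the unsigned error term swamps the claimed drift $-\Theta(\delta\hat\rho^*\ell\eps)$, no contraction is established, and the levels are neither constantly many nor covered by your fallback. (A secondary slip: your concrete $\delta=\max\{\eps^{1/3},(n\eps/\log n)^{-1/4}\}$ violates your own requirement $\delta=o(\eta_0)$ whenever $(n\eps/\log n)^{-1/2}$ is the dominant term in $\eta_0$, because then $(n\eps/\log n)^{-1/4}=\omega(\eta_0)$; the paper's choice $\delta=\eps^{1/3}$ avoids this.)

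The repair is exactly your own fallback mechanism applied more generously, which is what the paper does: it splits at $\ell=\zeta n$ with $\zeta=\eps^{1/3}$ and uses the coarse bounds for \emph{all} $\ell<\zeta n$. Lemma~\ref{lem:successprob2}(c) is stated for all $\ell\le\zeta n$ with $\zeta=o(1)$ an arbitrary falling function, so outside a band $[c_1/\ell,c_2/\ell]$ the success probability deviates from $1/(s+1)$ by a constant factor, giving constant-strength multiplicative drift into the band; inside the band, $\hat p_{\text{imp}}(\rho,\ell)=\Theta(1/\ell)=\omega(1/n)$, which lands in the $\omega(1/n)$ branch of the minimum in the geometric parameter, so the claimed per-level domination $T_\ell\preceq o(n)+\Geom(\omega(1/n))$ holds directly --- rather than deferring an $o(n^2)$ total contribution to Theorem~\ref{thm:oeares}, which would not prove the lemma as stated. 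With the boundary moved to $\zeta n$, the first case only concerns $\ell\ge\zeta n$, where $\hat\rho^{*2}=O(\hat\rho^*/(\zeta n))=o(\delta\hat\rho^*\ell)$ indeed holds, and the remainder of your argument goes through as in the paper.
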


\begin{proof}[Proof Sketch.]
We only outline the parts that differ from the analysis of the \oea. Consider first the case that $\zeta n \leq \ell \leq \ell_0$, where we choose $\zeta := \eps^{1/3}$. Then by Lemma~\ref{lem:estimateshatrhostar} we have $\hat\rho^* = O(1/(\zeta n)) \cap \Omega(\eta_0/n)$ and $\hat\rho^* \ell = O(1) \cap \Omega(\eta_0)$. Moreover, we also choose $\delta = \eps^{1/3}$. By Lemma~\ref{lem:successprob2} (a), 
\begin{align*}
\hat p_{\text{suc}}((1+\delta)\hat\rho^*,\ell) &\le \hat p_{\text{suc}}(\hat\rho^*,\ell) \cdot(1-\tfrac{1}{2}\delta \hat\rho^*\ell \pm O(\delta^2 + \hat\rho^{*2}+ \delta\hat\rho^{*})), \text{ and}\\
\hat p_{\text{suc}}((1-\delta)\hat\rho^*,\ell) &\ge \hat p_{\text{suc}}(\hat\rho^*,\ell) \cdot(1+\tfrac{1}{2}\delta \hat\rho^*\ell \pm O(\delta^2 + \hat\rho^{*2}+ \delta\hat\rho^{*})).
\end{align*}
By our choices, $\delta^2 = \delta \eps^{1/3} \subseteq o(\delta \hat\rho^*\ell)$ and $\hat\rho^{*2} = O(\hat\rho^{*}/(\zeta n)) \subseteq o(\delta \hat\rho^*\ell)$. Therefore, the error terms are of minor order, and we may bound
\begin{align*}
\hat p_{\text{suc}}((1+\delta)\hat\rho^*,\ell) &\le \hat p_{\text{suc}}(\hat\rho^*,\ell) \cdot(1-(1\pm o(1))\tfrac{1}{2}\delta \hat\rho^*\ell), \text{ and }\\
\hat p_{\text{suc}}((1-\delta)\hat\rho^*,\ell) &\ge \hat p_{\text{suc}}(\hat\rho^*,\ell) \cdot(1+(1\pm o(1))\tfrac{1}{2}\delta \hat\rho^*\ell).
\end{align*}
Note that, up to minor order terms, this is exactly the same expression as for the \oea in Lemma~\ref{lem:successprob}, since we consider $\delta < 1/\ln(s+1)$.  Thus the proof of Lemma~\ref{lem:rhoisgood} carries over, and after a short initial phase most rounds will be spent with mutation rates $\rho \in [(1-\delta)\hat\rho^*,(1+\delta)\hat\rho^*]$. Moreover, by Lemma~\ref{lem:strictimp2} (a), for any $\rho \in [(1-\delta)\hat\rho^*, (1+\delta)\hat\rho^*]$ the improvement probability is $\hat p_{\text{imp}}(\rho,\ell) = (1 \pm o(1))\hat p_{\text{imp}}(\hat\rho^*,\ell)$, where $\hat p_{\text{imp}}(\hat\rho^*,\ell) = (1-\hat\rho^*)^\ell \hat\rho^*/(1-\hat\rho^*)^n$. Thus the proof of Lemma~\ref{lem:gaintime} also carries over, and we obtained the domination statement as claimed. Note that with our choices we have $\delta/\log(1/\delta) = \omega(\eps)$ and $\delta = \omega(\frac{\log n}{n\eps})$, as required in the proof of Lemma~\ref{lem:gaintime}.

For $0 <\ell < \zeta n$ we use the same proof strategy, but don't need to care about constant factors anymore, since the expected time to leave the level is $o(n)$. Moreover, we know that $\hat\rho^* = \Theta(1/\ell)$ in this regime by Lemma~\ref{lem:estimateshatrhostar}. By Lemma~\ref{lem:successprob2} (b) and (c), there are constants $c_1,c_2 >0$ such that the mutation rate $\rho$ tends to the interval $I := [c_1/\ell,c_2/\ell]$ in the following sense. Whenever the mutation rate $\rho$ is larger than $c_2/\ell$ then the success rate $\hat p_{\text{suc}}(\rho, \ell)$ is larger than $1/(s+1)$ by a constant factor. As in the proof of Lemma~\ref{lem:approachrho}, this implies that $\rho$ has a multiplicative drift with factor $1-\Omega(\eps)$. Similarly, if the mutation rate is smaller than $c_1/\ell$ then the success rate $\hat p_{\text{suc}}(\rho, \ell)$ is smaller than $1/(s+1)$ by a constant factor, and $1/\rho$ has a multiplicative drift with factor $1-\Omega(\eps)$. Consequently, with probability $1-o(1)$ the mutation rate will reach the interval $I$ within $O(\log n/\eps)$ rounds, or a new level is reached. As in the proof of Lemma~\ref{lem:gaintime} we can argue that in each subsequent round, $\rho$ is in the interval $I$ with probability $1-o(1)$, and we may translate this property into a domination statement. The only difference is that in the interval $I$ we know $\rho$ only up to a constant factor, so we also know the improvement probability $\hat p_{\text{imp}}(\rho,\ell)$ only up to a constant factor. Hence, we obtain that $T_{\ell}$ is dominated by $o(n) + \Geom(\min\{\omega(1/n), O(\hat p_{\text{imp}}(\hat\rho^*,\ell))\}$. Since $\hat p_{\text{imp}}(\hat\rho^*,\ell) = \Theta(1/\ell) = \omega(1/n)$, we get $T_\ell \preceq o(n) + \Geom(\omega(1/n))$, and the claim follows.
\end{proof}

The following lemma gives the corresponding statement for the case $\ell \geq \ell_0$.

\begin{lemma}\label{lem:gaintime3}
Let $c>1$ be a constant. Consider a run of the self-adjusting \oeares with update strength $F=1+\eps$ for some $\eps = \omega(\frac{\log n}{n}) \cap o(1)$ and with $\pmin \in o(n^{-1}) \cap \Omega(n^{-c})$, $\pmax = 1$, on the $n$-dimensional \leadingones function. Let $\eta_0 = \max\{\eps^{1/6},(n\eps/\log n)^{-1/2}\}$ and let $\ell_0 = (1-\eta_0)sn/(s+1)$. Assume that the self-adjusting \oeares is started with a search point of fitness $\ell \in [\ell_0..n]$ and an arbitrary mutation rate $\rho \in [\pmin,\pmax]$. Then the number $T_\ell$ of iterations until a search point with fitness better than $\ell$ is found is stochastically dominated by
  \[T_\ell \preceq o(n) + \Geom((1-o(1))/n).\]
  In particular, $E[T_\ell] \le (1+o(1))n$. The hidden constants in the $o$-notation can be chosen independently of $\ell$. 
\end{lemma}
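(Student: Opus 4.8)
The plan is to transcribe the three-step analysis of the regime $\ell\le\ell_0$ (Lemmas~\ref{lem:approachrho}, \ref{lem:rhoisgood} and~\ref{lem:gaintime}, as adapted in the proof of Lemma~\ref{lem:gaintime2}), with the single structural change that for $\ell\ge\ell_0$ there is no target rate. Instead, Lemma~\ref{lem:successprob2}(d) tells us that on the \emph{whole} range $\rho\ge\rho_0:=4(s+1)\eta_0/n$ the success probability is at most $\tfrac{1}{s+1}(1-\tfrac{\eta_0 s}{10})$, so the self-adjustment drives $\rho$ downwards with a (weak) drift of magnitude $\delta':=\tfrac{\eta_0 s}{10}=\Theta(\eta_0)=o(1)$ and $O(1)$ step size. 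Although part~(d) is stated for $\ell>\ell_0$, its proof uses only monotonicity of $\hat p_{\text{suc}}$ in $\ell$ together with~\eqref{eq:estimateshatrhostar1a}, so I would first observe that it in fact holds for all $\ell\in[\ell_0..n]$, which lets me treat the interval $[\ell_0..n]$ uniformly. (For $\ell$ near $\ell_0$ a target rate $\hat\rho^*$ still exists, but then $\hat\rho^*=\Theta(\eta_0/n)\le\rho_0$, so nothing changes qualitatively.)

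Working with the \oeares ignoring improvements, I would first show that within $t_0=O(\tfrac{\log n}{\eps\eta_0})$ iterations, with probability $1-o(1)$, either an improvement is found or $\rho_t<\rho_0$: as long as $\rho_t\ge\rho_0$ the process $\log_{1+\eps}\rho_t$ has additive drift $\le-\delta'$ and $O(1)$ steps, and the distance from any start in $[\pmin,1]$ down to $\rho_0$ is $O(\tfrac{\log n}{\eps})$, so the additive drift theorem together with Markov's inequality for the tail gives the claim. Here $t_0=o(n)$ is exactly what the bound $\eta_0\ge(n\eps/\log n)^{-1/2}$, combined with $\eps=\omega(\tfrac{\log n}{n})$, is tailored to guarantee. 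From the first time $\rho_t<\rho_0$ I would feed the process $X_t:=\log_{1+\eps}(\rho_t/\rho_0)$ (which starts at $X_0<0$, has increments dominated by a $D$ with $|D|\le s$, $E[D]=(s+1)\hat p_{\text{suc}}(\rho_t,\ell)-1\le-\delta'$ when $X_t\ge0$, and discrete steps of absolute value $\le s$ when $X_t<0$) into Lemma~\ref{lem:occprob}, which is applicable since $\delta'\le\sqrt2\,s$. For $U:=6\tfrac{s^2}{\delta'}\ln\tfrac{1}{\delta'}+s$ it gives $\Pr[X_t\ge U]=o(1)$ for every $t$, and since $\eta_0\ge\eps^{1/6}$ forces $\eps U=O(\eps^{5/6}\ln\tfrac1\eps)=o(1)$, hence $(1+\eps)^U=1+o(1)$, this says $\Pr[\rho_t\ge(1+o(1))\rho_0]=o(1)$. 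Exactly as in Lemma~\ref{lem:rhoisgood}, linearity of expectation and Markov's inequality then yield that in any window of $t_1=\omega(t_0)\cap o(n)$ iterations all but an $o(t_1)$-fraction of rounds satisfy $\rho_t\le(1+o(1))\rho_0=O(\eta_0/n)=o(1/n)$. I would also dispose of the border case $\pmin\ge\rho_0$ separately: then $\rho$ can never drop below $\rho_0$, but the negative-drift bound applies in \emph{every} round, so the rate concentrates at the floor $\pmin=o(1/n)$ and the same conclusion holds with $\rho_0$ replaced by $\pmin$.

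For every $\rho=O(\eta_0/n)$ and every $\ell\in[\ell_0..n]$ we have $\hat p_{\text{imp}}(\rho,\ell)=\tfrac{(1-\rho)^\ell\rho}{1-(1-\rho)^n}=(1\pm o(1))/n$ — this is Lemma~\ref{lem:strictimp2}(b), whose proof uses only $\rho n,\rho\ell=O(\eta_0)$ — so on the good rate region the fitness level behaves like $\rls$. Feeding this into the phase-repetition argument of Lemma~\ref{lem:gaintime} (windows of $t:=t_0+t_1=o(n)$ iterations, each of which misses an improvement with probability at most $\bar P=o(1)+(1-(1-o(1))/n)^{t}$; since $t=o(n)$ makes the latter term $\Omega(1)$, this gives $\bar P=(1-(1-o(1))/n)^{t}$) yields $T_\ell\preceq t+\Geom((1-o(1))/n)=o(n)+\Geom((1-o(1))/n)$ and in particular $E[T_\ell]\le(1+o(1))n$, with all hidden constants uniform in $\ell$.

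The hard part is not the probabilistic argument, which is a faithful copy of Section~\ref{sec:oea}, but the asymptotic bookkeeping forced by the fact that the drift is only $\Theta(\eta_0)$ rather than $\Theta(1)$: I must check that the initialization time $\Theta(\tfrac{\log n}{\eps\eta_0})$ is $o(n)$ and that the occupation overshoot $(1+\eps)^{\Theta(\eta_0^{-1}\ln(1/\eta_0))}$ is $1+o(1)$, which is precisely why $\eta_0$ is set to $\max\{\eps^{1/6},(n\eps/\log n)^{-1/2}\}$; and, as in Lemma~\ref{lem:gaintime}, I must ensure the $o(1)$-failure probabilities are small enough to be absorbed in the manipulation of $\bar P$, which can be arranged by enlarging $U$ (the bound in Lemma~\ref{lem:occprob} decays faster than any fixed polynomial in $1/\delta'$ while $(1+\eps)^U=1+o(1)$ still holds).
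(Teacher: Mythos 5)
Your proposal follows the same route as the paper: Lemma~\ref{lem:successprob2}(d) gives a negative drift of magnitude $\Theta(\eta_0)$ for the rate above $\rho_0=\Theta(\eta_0/n)$, a drift argument (your additive drift on $\log_{1+\eps}\rho$ is equivalent to the paper's multiplicative drift on $1/\rho$) brings the rate below $\rho_0$ in $O(\log n/(\eps\eta_0))=o(n)$ rounds, Lemma~\ref{lem:occprob} keeps it at $(1+o(1))\rho_0$ most of the time, Lemma~\ref{lem:strictimp2}(b) gives improvement probability $(1\pm o(1))/n$ there, and the phase-repetition argument of Lemma~\ref{lem:gaintime} yields the geometric domination. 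Your version is in fact slightly more careful than the paper's sketch — e.g.\ in reconciling the constants $4(s+1)\eta_0/n$ versus $\eta_0/n$, in the border case $\pmin\ge\rho_0$, and in verifying $\eps U=o(1)$ and $t_0=o(n)$ from the definition of $\eta_0$ — but these are refinements of the same argument, not a different one.
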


\begin{proof}[Proof Sketch]
Again the proof is analogous to the proof of Lemma~\ref{lem:gaintime}, and we only outline the differences. By Lemma~\ref{lem:successprob2}, whenever the mutation rate $\rho$ satisfies $\rho \geq \rho_0 := \eta_0/n$ then the success probability is at most $(1-\eta_0 s/10)/(s+1) = (1-\Omega(\eta_0)) \cdot 1/(s+1)$. As in the proof of Lemma~\ref{lem:approachrho}, we conclude that $1/\rho$ has a multiplicative drift towards $1/\rho_0$ with drift factor $1-\Omega(\eps \eta_0) = 1-\omega(\log n/n)$. Thus with probability $1-o(1)$ the mutation rate falls below $\rho_0$ within $O(\log n/(\eps \eta_0)) = o(n)$ rounds, or a new level is reached. As in the proof of Lemma~\ref{lem:gaintime}, in each subsequent round $\rho$ is in the interval $I$ with probability $1-o(1)$. By Lemma~\ref{lem:strictimp2} (b), in each such round the improvement probability is at least $(1-o(1))/n$. Hence, as before we obtain that $T_\ell \preceq o(n) + \Geom(\min\{\omega(1/n), (1-o(1))/n)\}$. Simplifying the minimum yields $T_\ell \preceq o(n) + \Geom((1-o(1))/n))$, as claimed.
\end{proof}

With the domination statements, our main result on the \oeares follows directly from Theorem~\ref{thm:level}.

\begin{proof}[Proof of Theorem~\ref{thm:oeares}]
Lemmas \ref{lem:gaintime2} and~\ref{lem:gaintime3} give a domination bound on $T_{\ell}$ for all $\ell \in [0..n]$. Theorem~\ref{thm:oeares} thus follows from Theorem~\ref{thm:level} in the same way as Theorem~\ref{thm:oea}. Note that the phrasing in Theorem~\ref{thm:level} covers both the \oea and the \oeares. We omit the details.
\end{proof}

\section{Fixed-Target Running Times} 
\label{sec:fixedTarget}

Our main focus in the previous sections, and in particular in the sections presenting numerical evaluations of the self-adjusting \oea variants (i.e., Sections~\ref{sec:numerics-oea} and~\ref{sec:numerics-oeares}), was on computing the expected optimization time. We now follow a suggestion previously made in~\cite{CarvalhoD17}, and study the anytime performance of the algorithms, by analyzing their expected \emph{fixed-target running times} (a notion introduced already in~\cite{DoerrJWZ13}). That is, for an algorithm $A$ we regard for each target value $0\le v\le n$ the expected number $E[T(n,A,v)]$ of function evaluations needed by algorithm $A$ until it evaluates for the first time a solution $x$ which satisfies $\LO(x)\ge v$. 

\begin{figure}[t]
\centering
\includegraphics[width=\linewidth]{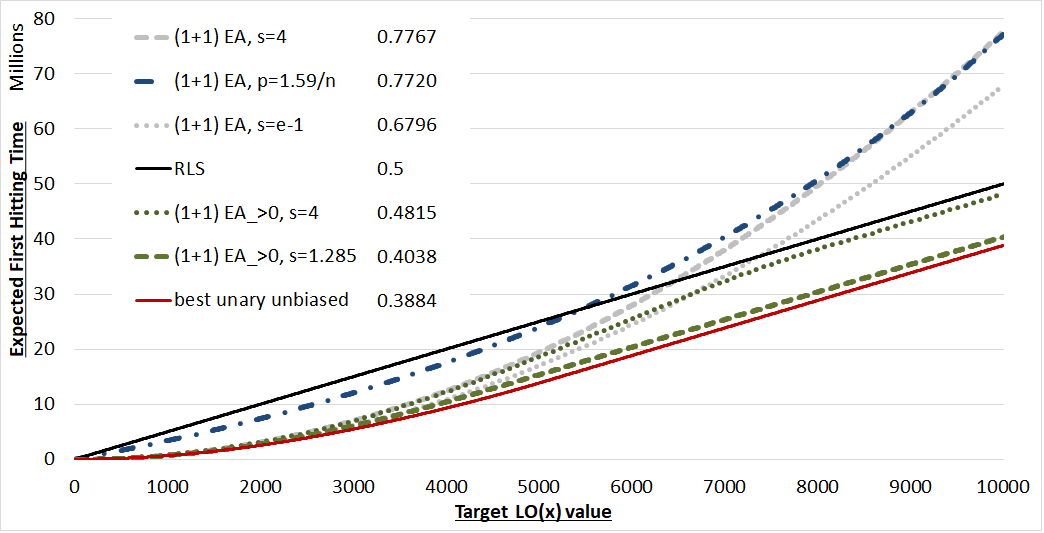}
\caption{Expected fixed target running times for \leadingones in dimension $n=10\,000$. The curve of \oearesopt is indistinguishable from that of the \oeares with success ratio $s=1.285$ and the curve of the \oeaopt indistinguishable from that of the \oea with success ratio $s=e-1$. The values shown in the legend are the expected optimization times, normalized by $1/n^2$.}
\label{fig:fixedTarget}
\end{figure} 

Figure~\ref{fig:fixedTarget} plots these expected fixed target running times of selected algorithms for $n=10\,000$. The legend also mentions the normalized expected overall optimization time, i.e., $E[T(n=10\,000,A,v)]/n^2$. We do not plot the \oearesopt, since its running time would be indistinguishable in this plot from the self-adjusting \oeares with success ratio $s=1.285$. For the same reason we do not plot the \oeaopt (i.e., the \oea with optimal fitness-dependent mutation rate $p= n/(\ell+1)$), whose data is almost identical to that of the self-adjusting \oea with the optimal success ratio $s=e-1$. 

We plot in Figure~\ref{fig:fixedTarget} the \oea with one-fifth success rule (i.e., success ratio $s=4$). While its overall running time is the worst of all algorithms plotted in this figure, we see that its fixed-target running time is better than that for RLS for all targets up to $6\,436$. Its overall running time is very close to that of the \oea with the best static mutation rate $p \approx 1.59/n$~\cite{BottcherDN10}, and for all targets $v \le 9\,017$ the expected running time is smaller.

We already discussed that the expected optimization time of the two algorithms \oeaopt and the self-adjusting \oea with success ratio $s=e-1$ is around $36\%$ worse than that of RLS. However, we also see that their fixed-target performances are better for all targets up to $v=7\,357$. For example, for $v=5\,000$ their expected first hitting time is slightly less than $17*10^6$ and thus about $36\%$ smaller than that of RLS.

As we have seen already in Figure~\ref{fig:oearesbyrate}, the self-adjusting \oeares with success ratio $s=4$ (i.e., using a one-fifth success rule) has an overall running time similar, but slightly better than RLS. We recall that its target mutation rate is 0 for values $v \ge 4n/5$. In this regime, the slope of its fixed target running-time curve is thus identical to that of RLS. For the self-adjusting \oeares with $s=1.285$ this is the case for $v$ slightly larger than $5\,600$. The \oearesopt with optimal fitness-dependent mutation rate uses mutation rate $p=0$ for $v \ge 4\,809$.

We also observe that the best unary unbiased black-box algorithm for with fitness-dependent mutation strength, which is the RLS-variant using in each step the $\mut_{k}$ operator with $k=\lfloor n /(\LO(x)+1) \rfloor$ (see~\cite{DoerrW18,Doerr19tcs} for more detailed discussions), is also best possible for all intermediate targets $v<n$. It is not difficult to verify this formally, the main argument being that the fitness landscape of \leadingones is non-deceptive.

\section{Conclusions} 

We have proven upper bounds for the expected time needed by the \oea and \oeares with success-based multiplicative update rules using constant success ratio $s$ and update strengths $F=1+o(1)$ to optimize the $n$-dimensional \leadingones function. In particular, we have shown that the \oea with one-$e$-th success rule achieves the running time of the \oea with optimal fitness-dependent mutation rate up to minor order terms (for update strengths $F=1+o(1)$). For the \oeares, numerical evaluations for $n=10\,000$ and $n=50\,000$ suggest a success ratio of around $1.285$, i.e., a one-$2.285$-th success rule. With this success rate the self-adjusting \oeares achieves an expected running time around $0.40375 n^2 +o(n^2)$, which, for $n=10\,000$, compares to a best possible expected running time (among all \oeares variants using fitness-dependent mutation rates) of $0.4027n^2 +o(n^2)$. Our precise upper bounds are stochastic domination bounds, which allow to derive other moments of the running time.  

Our work continues a series of recent papers rigorously demonstrating advantages of controlling the parameters of iterative heuristics during the optimization process. Developing a solid understanding of problems for which simple success-based update schemes are efficient, and which problems require more complex control mechanisms (e.g., based on reinforcement learning~\cite{DoerrDY16PPSN}, or techniques using statistics for the success rate within a window of iterations~\cite{LissovoiOW17,DoerrLOW18}) is the long-term goal of our research. 

\subsection*{Acknowledgments}

Our work was supported by a public grant as part of the
Investissement d'avenir project, reference ANR-11-LABX-0056-LMH, and by the Paris Ile-de-France Region.

}

\newcommand{\etalchar}[1]{$^{#1}$}

\end{document}